\Crefname{figure}{Fig.}{Figs.}
\Crefname{section}{Sect.}{Sects.}
\Crefname{definition}{Def.}{Defs.}
\crefname{equation}{}{}
\newcommand{\n}[1]{\num[round-mode=places,round-precision=4]{#1}}
\renewcommand{\ALG@beginalgorithmic}{\scriptsize}
\algrenewcommand\alglinenumber[1]{\scriptsize #1:}
\newacronym[plural=DFA,firstplural=deterministic finite automata (DFA)]{DFA}{DFA}{deterministic finite automaton}
\newacronym[longplural=Markov decision processes]{MDP}{MDP}{Markov decision process}
\newacronym[firstplural=systems under test (SUTs)]{SUT}{SUT}{system under test}
\newacronym{DTMC}{DTMC}{discrete time Markov chain}
\newacronym{SMC}{SMC}{statistical model-checking}
\newacronym[firstplural=systems under learning (SULs)]{SUL}{SUL}{system under learning}
\newacronym{MAT}{MAT}{minimally adequate teacher}
\newacronym{PAC}{PAC}{probably approximately correct}
\newacronym{ioco}{ioco}{input ouput conformance}
\newcommand{\Dist}{\mathit{Dist}}
\newcommand{\Out}{\Sigma^\mathrm{O}}
\newcommand{\In}{\Sigma^\mathrm{I}}
\newcommand{\row}{\mathit{row}}
\newcommand{\lastOut}{\mathit{last}}
\newcommand{\complete}{\mathbf{cq}}
\newcommand{\diff}{\mathit{diff}}
\newcommand{\compatible}{\mathbf{compatible}}
\newcommand{\equivrow}{\mathbf{eqRow}}
\newcommand{\cg}{\mathit{cg}}
\newcommand{\SUL}{\gls*{SUL}}
\newcommand{\Ta}{\widehat{T}}
\newcommand{\LstarMdp}{${L^*_\textsc{mdp}}$}
\newcommand{\LstarMdpE}{$L^*_{\textsc{mdp}^e}$}
\newcommand{\nbatch}{n_\mathrm{resample}}
\newcommand{\ntest}{n_\mathrm{test}}
\newcommand{\nretest}{n_\mathrm{retest}}
\newcommand{\randSel}{\mathit{randSel}}
\newcommand{\coinFlip}{\mathit{coinFlip}}
\newcommand{\prand}{p_\mathrm{rand}}
\newcommand{\pstop}{p_\mathrm{stop}}
\newcommand{\rmin}{r_\mathrm{min}}
\newcommand{\rmax}{r_\mathrm{max}}
\newcommand{\ncomplete}{n_c}
\newcommand{\samples}{\mathcal{S}}
\newcommand{\true}{\mathbf{true}}
\newcommand{\false}{\mathbf{false}}
\newcommand{\semdom}{\mathit{dd}}
\newcommand{\prefixes}{\mathit{prefixes}}
\newcommand{\Traces}{\mathcal{TR}}
\newcommand{\TestSeq}{\mathcal{TS}}
\newcommand{\ContSeq}{\mathcal{CS}}
\newcommand{\fq}{\mathbf{fq}}
\newcommand{\eq}{\mathbf{eq}}
\newcommand{\rsq}{\mathbf{rfq}}
\newcommand{\cq}{\complete}
\newcommand{\odq}{\mathbf{odq}}
\newcommand{\runamb}{r_\mathrm{unamb}}
\newcommand{\tunamb}{t_\mathrm{unamb}}
\newcommand{\LongTr}{Lt}
\newcommand{\can}{\mathit{can}}
\newcommand{\repr}[1]{\mathit{rep}(#1)}
\begin{document}
\begin{full}
\title{$L^*$-Based Learning of Markov Decision Processes
\\ (Extended Version)\thanks{This work is an extended version of the conference paper ``$L^*$-Based 
Learning of Markov Decision Processes'' accepted for presentation at {FM} 2019, the 
23\textsuperscript{rd} International Symposium on Formal Methods in Porto, Portugal.  
}
}
\end{full}
\begin{conference} 
\title{$L^*$-Based Learning of Markov Decision Processes
}
\end{conference}

%
%
\author{Martin Tappler\inst{1} \and Bernhard K. Aichernig\inst{1} \and Giovanni Bacci\inst{3} \and
Maria Eichlseder\inst{2} \and Kim G. Larsen\inst{3} 
}
\authorrunning{M.\ Tappler, B.\ K.\ Aichernig, G.\ Bacci, M.\ Eichlseder, and K.\ G.\ Larsen}
%
\institute{Institute of Software Technology, Graz University of Technology, Graz, Austria 
\email{\{aichernig,martin.tappler\}@ist.tugraz.at}
\and
Institute of Applied Information Processing and Communications, \\ Graz University of Technology, Graz, Austria \\
\email{maria.eichlseder@iaik.tugraz.at}
\and
Dept.\ of Computer Science, Aalborg University, Denmark
\email{{\{giovbacci,kgl\}@cs.aau.dk}}}
\maketitle              
\setcounter{footnote}{0}
\begin{abstract}
Automata learning techniques automatically generate system models from test observations. 
These techniques usually fall into two categories: passive and active. Passive learning uses 
a predetermined data set, e.g., system logs. In contrast, active learning actively queries the 
system under learning, which is considered more efficient.

An influential active learning technique is Angluin's $L^*$ algorithm for regular languages which 
inspired several generalisations from DFAs to other automata-based modelling formalisms.
In this work, we study $L^*$-based learning of deterministic Markov decision 
processes, first assuming an ideal setting with perfect information. Then, we relax this assumption
and present a novel learning algorithm that collects information by sampling system traces via testing. 
Experiments with the implementation of our sampling-based algorithm suggest that it achieves better accuracy than state-of-the-art passive learning techniques 
with the same amount of test data. 
Unlike existing learning algorithms with predefined states,
our algorithm learns the complete model structure including the states.
\end{abstract}

\keywords{model inference \and active automata learning \and Markov decision processes}

\setlength{\intextsep}{0pt}%
\section{Introduction}

Automata learning automatically generates models from system observations such as test logs. Hence, 
it enables model-based verification for black-box software systems~\cite{DBLP:conf/dagstuhl/HowarS16,DBLP:conf/dagstuhl/AichernigMMTT16}, e.g. via model checking. Automata learning techniques generally fall into two categories: passive and active learning. Passive algorithms take a given sample of system traces as input and generate models consistent with the sample. The quality and comprehensiveness of learned models therefore largely depend on the given sample. In contrast, active algorithms 
actively query the \gls*{SUL} to sample system traces. This enables to steer the trace generation towards parts of the \SUL{}'s state space that have not been thoroughly covered, potentially finding yet unknown aspects of the \SUL{}.   

Many active automata learning algorithms are based on Angluin's $L^*$ algorithm~\cite{DBLP:journals/iandc/Angluin87}.
It was originally proposed for learning \glspl*{DFA} accepting regular languages and later applied to learn models of reactive systems, by considering system traces to form regular languages~\cite{DBLP:conf/cav/HungarNS03}. $L^*$ has been extended to formalisms better suited for modelling reactive systems such as Mealy machines~\cite{Margaria2004,DBLP:conf/fm/ShahbazG09} and extended finite state-machines~\cite{DBLP:journals/fac/CasselHJS16}. Most $L^*$-based work, however, targets deterministic models, with the exceptions of algorithms for non-deterministic Mealy machines~\cite{DBLP:conf/icgi/KhaliliT14} and non-deterministic input-output transition systems~\cite{DBLP:journals/eceasst/VolpatoT15}. Both techniques are based on testing, but abstract away the observed frequency of events, thus they do not use all available information. 

Here, we present an $L^*$-based approach for learning models of stochastic systems with transitions 
that happen with some probability depending on non-deterministically chosen inputs. More concretely, 
we learn deterministic \glspl*{MDP}, like 
\textsc{IoAlergia}~\cite{DBLP:journals/corr/abs-1212-3873,DBLP:journals/ml/MaoCJNLN16}, a 
state-of-the-art passive learning algorithm. Such models are 
commonly used to model randomised distributed algorithms~\cite{DBLP:books/daglib/0020348}, e.g. in 
protocol verification~\cite{DBLP:journals/sigmetrics/KwiatkowskaNP08a,DBLP:journals/jcs/NormanS06}. 
We present two learning algorithms: the first takes an ideal view assuming perfect knowledge about 
the exact distribution of system traces. The second algorithm relaxes this assumption, by sampling 
system traces to estimate their distribution. We refer to the former as \emph{exact learning 
algorithm} \LstarMdpE{} and to the latter as \emph{sampling-based learning algorithm} \LstarMdp{}. 
We implemented \LstarMdp{} and evaluated it by comparing it to 
\textsc{IoAlergia}~\cite{DBLP:journals/corr/abs-1212-3873,DBLP:journals/ml/MaoCJNLN16}.
Experiments showed favourable 
performance of \LstarMdp{}, i.e. it produced more accurate models than \textsc{IoAlergia} given 
approximately the same amount of data.
\begin{full}
Apart from the empirical evaluation, we show that the model learned by \LstarMdp{} converges in the 
limit to an \gls*{MDP} isomorphic to the canonical \gls*{MDP} representing the \SUL{}. To the best 
of our knowledge, \LstarMdp{} is the first $L^*$-based learning algorithm for \glspl*{MDP} that can 
be implemented via testing. Our contributions span the algorithmic development of learning 
algorithms, their analysis with respect to convergence and the implementation as well as 
the evaluation of learning algorithms. 

This work is an extended version of the conference paper 
``$L^*$-Based Learning of Markov Decision Processes'' accepted for presentation at {FM} 2019, the 
23\textsuperscript{rd} 
International Symposium on Formal Methods in Porto, Portugal. It provides additional 
details on the implementation of \LstarMdp{}, the convergence analysis of both learning algorithms
and an extended evaluation. 
\end{full}
\begin{conference}
 Generally, models learned by \LstarMdp{} converge in the limit to an \gls*{MDP} observationally equivalent to the \SUL{}. To the best of our knowledge, \LstarMdp{} is the first $L^*$-based learning algorithm for \glspl*{MDP} that can be implemented via testing. 
 Our contributions span the algorithmic development of learning algorithms, the implementation and the evaluation of learning algorithms. The full technical report on \LstarMdp{}~\cite{lstar_mdp_tech_report} additionally includes convergence proofs, further experiments and implementation details. 
\end{conference}

The rest of this paper is structured as follows. We introduce notational conventions, preliminaries on \glspl*{MDP} and active automata learning in \cref{sec:prelim}. 
\begin{full} \cref{sec:method_exact} provides a characterisation of \glspl*{MDP} and presents the exact learning algorithm \LstarMdpE{}. \cref{sec:method_non_exact} describes the sampling-based \LstarMdp{} and analyses it with respect to convergence. \end{full}
\begin{conference}
 Section~\ref{sec:method_exact} discusses semantics of \glspl*{MDP} and presents the exact learning algorithm \LstarMdpE{}. Section~\ref{sec:method_non_exact} describes the sampling-based \LstarMdp{}.
\end{conference}
Section~\ref{sec:eval} discusses the evaluation and in \cref{sec:rel_work}, we discuss related work. 
 We provide a summary and concluding remarks in \cref{sec:concl}.

\section{Preliminaries}
\label{sec:prelim}
\paragraph{Notation \& Auxiliary Definitions.}
Let $S$ be a set. We denote the concatenation of two sequences $s$ and $s'$ in $S^*$ by $s \cdot s'$, the length of a sequence $s$ by $|s|$ and the empty sequence by $\epsilon$. We implicitly lift elements in $S$ to sequences of length one.  Sequence 
$s$ is a prefix of $s'$ if there exists an $s''$ such that $s\cdot s'' = s'$, denoted by $s \ll s'$. The pairwise concatenation of sets of sequences $A,B \subseteq S^*$ is $A \cdot B = \{a \cdot b \mathrel\vert a \in A, b\in B\}$. A set of sequences $A \subseteq S^*$ is prefix-closed, iff for every $a \in A$, $A$ also contains all prefixes of $A$. Suffixes and suffix-closedness are defined analogously. For a sequence $s$ 
in $S^*$, $s[i]$ is the element at index $i$, with indexes starting at $1$, $s[\ll i]$ is the prefix of $s$ with length $i$ and $\prefixes(s) = \{s'\mathrel| s' \in S^*:s' \ll s\}$ is the set of all prefixes of $s$. Given a multiset $\samples$, we denote the multiplicity of $x$ in $\samples$ by $\samples(x)$. 
$\Dist(S)$ denotes the set of probability distributions over $S$, i.e. for all $\mu: S \rightarrow [0,1]$ in
$\Dist(S)$ we have $\sum_{s\in S} \mu(s) = 1$. In the remainder of this paper, distributions $\mu$ may be partial functions, in which case
we implicitly set $\mu(e) = 0$ if $\mu$ is not defined for $e$. For $A\subseteq S$, $\mathbf{1}_{A}$ denotes the indicator function of $A$, i.e. $\mathbf{1}_{A}(e) = 1$ if $e \in A$ and $\mathbf{1}_{A}(e) = 0$ otherwise. 
\begin{full}
Hence, $\mathbf{1}_{\{e\}}$ for $e \in S$ is the probability distribution assigning probability $1$ to $e$.
In \cref{sec:method_non_exact}, we apply a pseudo-random function $\randSel$ taking 
taking a set $S$ as input and returning a single element of 
the set, whereby the element is chosen according to a uniform distribution, i.e. $\forall e \in S : \mathbb{P}(\randSel(S) = e) = \frac{1}{|S|}$.
In addition to that, we use the function $\coinFlip(p)$ returning $\true$ with probability $p$ 
and $\false$ otherwise. 
\end{full}

\subsubsection{Markov Decision Processes.}
\begin{definition}[\acrfull*{MDP}] 
 A labelled \acrfull*{MDP} is a tuple $\mathcal{M} = \langle Q,\In, \Out,q_0, \delta, L\rangle$ where
 \begin{conference}
    $Q$ is a finite non-empty set of states, $\In$ and $\Out$ are finite sets of inputs and outputs,
  $q_0 \in Q$ is the initial state, 
  $\delta : Q \times \In \rightarrow \Dist(Q)$ is the probabilistic transition function, and
  $L : Q \rightarrow \Out$ is the labelling function.
 \end{conference}
 \begin{full}
 \begin{compactitem}
  \item $Q$ is a finite non-empty set of states,
  \item $\In$ and $\Out$ are finite sets of input and output symbols respectively,
  \item $q_0 \in Q$ is the initial state, 
  \item $\delta : Q \times \In \rightarrow \Dist(Q)$ is the probabilistic transition function, and
  \item $L : Q \rightarrow \Out$ is the labelling function.
 \end{compactitem}
 \end{full}
An \gls*{MDP} is \emph{deterministic} if $\forall q \in Q, \forall i : \delta(q,i)(q') > 0 \land \delta(q,i)(q'') > 0\rightarrow q' = q'' \lor L(q') \neq L(q'')$. 
\end{definition}

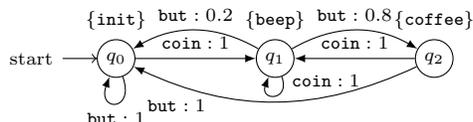
\begin{wrapfigure}[6]{t}{.54\textwidth}
  \begin{tikzpicture}[font=\scriptsize, node distance = 1.6cm]
   \node[initial,state,inner sep=2pt,minimum size = 0.3cm](q_0){$q_0$};
   \node[state,right=of q_0,inner sep=2pt,minimum size = 0.3cm](q_1){$q_1$};
   \node[state, right = of q_1,inner sep=2pt,minimum size = 0.3cm](q_2){$q_2$};
   
   \path[-latex] (q_0)  edge [loop below,min distance=5mm,in=250,out=290] node[] {$\mathtt{but} : 1$} (q_0)
   edge [] node[above] {$\mathtt{coin} : 1$} (q_1)
   (q_1)  edge [loop below,min distance=3.5mm,in=250,out=290] node[below right= -0.4cm and 0.1cm] {$\mathtt{coin} : 1$} (q_1)
   edge [bend right = 30] node[above] {$\mathtt{but} : 0.2$} (q_0)
   edge [bend left = 30] node[above] {$\mathtt{but} : 0.8$} (q_2)
   (q_2) edge node[above] {$\mathtt{coin} : 1$} (q_1)
       edge [bend left = 25] node[below left  = -0.15cm and 0.8cm] {$\mathtt{but} : 1$} (q_0)
   ;
   \node[above = 0cm of q_0]{$\{\mathtt{init}\}$};
   \node[above = 0cm of q_1]{$\{\mathtt{beep}\}$};
   \node[above = 0cm of q_2]{$\{\mathtt{coffee}\}$};
  \end{tikzpicture}
\vspace{-0.5cm}
\caption{\gls*{MDP} model of a faulty coffee machine}
\label{fig:coffee_mdp}
\end{wrapfigure}
We learn deterministic labelled \glspl*{MDP} as learned by passive learning techniques like \textsc{IoAlergia}~\cite{DBLP:journals/ml/MaoCJNLN16}.
Such \glspl*{MDP} define at most one successor state for each source state and input-output pair. 
In the following, we refer to these models uniformly as \glspl*{MDP}. 
We use $\Delta : Q \times \In \times \Out \rightarrow Q\cup\{\bot\}$ to compute successor states. The function is defined
by $\Delta(q,i,o) = q' \in Q$ with $L(q') = o$ and $\delta(q,i)(q') > 0$ if there exists such a $q'$, 
otherwise $\Delta$ returns $\bot$. \cref{fig:coffee_mdp} shows an \gls*{MDP} model of a faulty 
coffee machine~\cite{Aichernig2019}. 
Outputs in curly braces label states and inputs with corresponding probabilities label edges. 
After providing the inputs $\mathtt{coin}$ and $\mathtt{but}$, the coffee machine \gls*{MDP} produces the output $\mathtt{coffee}$
with probability $0.8$, but with probability $0.2$, it resets itself, producing the output $\mathtt{init}$.


\emph{Execution.}
A path $\rho$ through an \gls*{MDP} is an alternating sequence of states and inputs starting in the initial state $q_0$, i.e.
$\rho = q_0 \cdot i_1 \cdot q_1\cdot  i_2 \cdot q_2\cdots i_{n-1} \cdot q_{n-1} \cdot i_n \cdot q_n$. 
In each state $q_k$, the next input $i_{k+1}$ is chosen non-deterministically and based on that, the next state $q_{k+1}$ is chosen probabilistically according to $\delta(q_k,i_{k+1})$. 
\begin{full}
We denote set of all paths of an \gls*{MDP} $\mathcal{M}$ by $Path_\mathcal{M}$.
The execution of an \gls*{MDP} is controlled by a so-called scheduler, resolving the non-deterministic choice of inputs. A scheduler as defined below specifies a distribution over the next input given the current execution path.

\begin{definition}[Scheduler]
 Given an \gls*{MDP} $\mathcal{M} = \langle Q,\In, \Out,q_0, \delta, L\rangle$, a scheduler
 for $\mathcal{M}$ is a function $s : Path_\mathcal{M} \rightarrow Dist(\Sigma^\mathrm{in})$.
\end{definition}

The composition of an \gls*{MDP} $\mathcal{M}$ and a scheduler $s$ induces a deterministic Markov chain, i.e. a fully probabilistic system allowing to define a probability measure over paths. Additionally to $\mathcal{M}$ and $s$, we also need a probability distribution $p_l \in Dist(\mathbb{N}_0)$
over the path lengths.\footnote{Usually in verification, such measures are defined with respect to 
infinite paths. Since our technique is test-based and every test has only finite length, we consider 
finite paths analogously to \cite{Aichernig2019}.} An \gls*{MDP} $\mathcal{M}$, a scheduler $s$, and 
a path length probability distribution $p_l$ induce a probability distribution 
$\mathbb{P}_{\mathcal{M},s}^l$ on finite paths $Path_\mathcal{M}$, defined by: 

\begin{equation}
\label{eq:finite_path_prob}
\mathbb{P}^l_{\mathcal{M},s}(q_0 i_1 q_1 \cdots i_n q_n) = p_l(n) \cdot \left(\prod_{j = 1}^n s(q_0 \cdots i_{j-1} q_{j-1})(i_j) \cdot \delta(q_{j-1},i_j)(q_j)\right)
\end{equation}
\end{full}
\begin{conference}
 The execution of an \gls*{MDP} is controlled by a so-called scheduler, resolving the non-deterministic choice of inputs by specifying a distribution over the next input given the current execution path. The composition of an \gls*{MDP} and a scheduler induces a Markov chain with a corresponding probability measure, see e.g. \cite{DBLP:conf/sfm/ForejtKNP11}.
\end{conference}

\emph{Sequences of Observations.}
During the execution of a finite path $\rho$, we observe a trace $L(\rho) = t$, i.e. an alternating sequence of inputs and outputs starting with an output, with $t = o_0 i_1 o_1 \cdots i_{n-1} o_{n-1} i_n o_n$ and $L(q_i) = o_i$. 
\begin{full} 
Since we consider deterministic \glspl*{MDP}, $L$ is invertible, thus each trace in $\Out \times (\In\times \Out)^*$ corresponds to at most one path and $\mathbb{P}_{\mathcal{M},s}^{l}$ can be adapted to traces $t$ by defining:
\begin{equation*}
\mathbb{P}_{\mathcal{M},s}^{l}(t) = \begin{cases}
            \mathbb{P}_{\mathcal{M},s}^{l}(\rho) &\ldots \text{ if there is a $\rho$ with } L(\rho) = t \\
                                     0 &\ldots \text{ otherwise }
                                    \end{cases} 
\end{equation*}
We say that a trace $t$ is \emph{observable} if there exists a $\rho$ with $L(\rho) = t$, thus there is a scheduler $s$ and a $p_l$ such that $\mathbb{P}_{\mathcal{M},s}^{l}(t) > 0$. 
\end{full}
\begin{conference}
 Since we consider deterministic \glspl*{MDP}, $L$ is invertible, thus each trace in $\Out \times (\In\times \Out)^*$ corresponds to at most one path.
 We say that a trace $t$ is \emph{observable} if there exists a $\rho$ with $L(\rho) = t$.
\end{conference}
 In a deterministic \gls*{MDP} $\mathcal{M}$, each observable trace $t$ uniquely defines a state of $\mathcal{M}$ reached by executing $t$ from the initial state $q_0$. We compute this state by $\delta^*(t) = \delta^*(q_0,t)$ defined by $\delta^*(q,L(q)) = q$ and
\begin{equation*}
 \delta^*(q, o_0 i_1 o_1 \cdots i_{n-1} o_{n-1} i_n o_n) = \Delta(\delta^*(q, o_0 i_1 o_1 \cdots i_{n-1} o_{n-1}),i_n,o_n). 
\end{equation*}

If $t$ is not observable, then there is no path $\rho$ with $t = L(\rho)$, denoted by $\delta^*(t) = \bot$. We denote the last output $o_n$ of a trace $t = o_0 \cdots i_n o_n$,  by $\lastOut(t)$. 

We use three types of observation sequences with short-hand notations:
\begin{compactitem}
 \item \textsl{Traces:} abbreviated by $\Traces = \Out \times(\In \times \Out)^*$
 \item \textsl{Test sequences:} abbreviated by $\TestSeq = (\Out \times \In)^*$
 \item \textsl{Continuation sequences:} abbreviated by $\ContSeq = \In \times \TestSeq$ 
\end{compactitem}
These sequence types alternate between inputs and outputs, thus they are related among each other.
In slight abuse of notation, we use $A \times B$ and $A \cdot B$ interchangeably for the remainder of this paper. Furthermore, we extend the sequence notations and the notion of prefixes to $\Out$, $\In$, $\Traces$, $\TestSeq$ and $\ContSeq$, e.g., test sequences and traces are related by $\Traces = \TestSeq \cdot \Out$.

As noted, a trace in $\Traces$ leads to a unique state of an \gls*{MDP} $\mathcal{M}$. A test sequence in $s \in \TestSeq$ of length $n+1$ consists of a trace in $t \in \Traces$ with $n$ outputs and an input $i \in \In$ with $s = t \cdot i$; 
thus executing test sequence $s = t \cdot i$ puts $\mathcal{M}$ into the state reached by $t$ and tests $\mathcal{M}$'s reaction to $i$.  Extending the notion of observability, we say that the test sequence $s$ is observable if $t$ is observable. A continuation sequence $c\in \ContSeq$ begins and ends with an input, i.e. concatenating a trace $t \in \Traces$ and $c$ creates a test sequence $t \cdot c$ in $\TestSeq$. Informally, continuation sequences test $\mathcal{M}$'s reaction in response to multiple consecutive inputs. 

\begin{full}
\begin{lemma}\label{lem:non_observable_prefix}
 If trace $t \in \Traces$ is not observable, then any $t'\in \Traces$ such that $t \ll t'$ is not observable as well.
\end{lemma}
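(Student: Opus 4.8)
The plan is to prove the contrapositive: if $t' \in \Traces$ is observable and $t \ll t'$ with $t \in \Traces$, then $t$ is observable. The key observation is that observability of a trace means there is a path $\rho$ in the MDP with $L(\rho) = t'$, and paths are by definition prefix-closed in the appropriate sense — every prefix of a path that ends in a state is again a path starting in $q_0$.

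First I would unfold the definitions. Since $t'$ is observable, by definition there exists a path $\rho' = q_0 i_1 q_1 \cdots i_n q_n$ with $L(\rho') = t'$, so $t' = o_0 i_1 o_1 \cdots i_n o_n$ where $o_k = L(q_k)$. Next I would use the fact that $t \ll t'$ and both are traces: a trace alternates outputs and inputs starting and ending with an output, so if $t$ is a prefix of $t'$ then $t = o_0 i_1 o_1 \cdots i_m o_m$ for some $m \le n$ (a proper prefix of a trace that is itself a trace must end at one of the output positions). Then the truncated path $\rho = q_0 i_1 q_1 \cdots i_m q_m$ is still a path of $\mathcal{M}$: it starts in $q_0$, and each step $q_{k-1} \xrightarrow{i_k} q_k$ already satisfied $\delta(q_{k-1}, i_k)(q_k) > 0$ in $\rho'$, so it still does. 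Moreover $L(\rho) = o_0 i_1 o_1 \cdots i_m o_m = t$, witnessing that $t$ is observable.

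I would conclude by noting that the contrapositive is exactly the statement of the lemma: if $t$ is not observable, then no $t' \in \Traces$ with $t \ll t'$ can be observable, since observability of such a $t'$ would force observability of $t$. Optionally, one can phrase the step-truncation argument in terms of $\delta^*$: if $\delta^*(t') \neq \bot$, then since $\delta^*$ is defined by recursion on the trace structure through $\Delta$, every intermediate value $\delta^*(o_0 i_1 o_1 \cdots i_k o_k)$ encountered in computing $\delta^*(t')$ is also different from $\bot$, and in particular $\delta^*(t) \neq \bot$.

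I do not expect a genuine obstacle here; the only point requiring a little care is the combinatorial claim that a trace-prefix of a trace ends exactly at an output position (so that the corresponding path prefix ends in a state rather than "between" a state and an input), which follows immediately from the alternating structure $\Traces = \Out \times (\In \times \Out)^*$ and the definition of $\ll$ lifted to these sequence types.
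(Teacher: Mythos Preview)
Your argument is correct. You prove the contrapositive by truncating a witnessing path for $t'$ to obtain a witnessing path for $t$; this works because each transition in the truncated path already had positive probability in the longer path, and the type constraint $t \in \Traces$ guarantees the truncation ends at a state.

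The paper takes a slightly different route: it appeals directly to the product formula for $\mathbb{P}^l_{\mathcal{M},s}$ (Equation~\eqref{eq:finite_path_prob}). Non-observability of $t$ means this product is zero for every scheduler $s$ and length distribution $p_l$; extending $t$ to $t'$ only multiplies in further factors, so the product for $t'$ is zero as well. Your approach works purely at the level of the path/labelling definition of observability and never mentions schedulers or the probability measure, which is arguably cleaner; the paper's approach is a one-line appeal to the already-established formula. Both are elementary and equally valid, and your optional $\delta^*$ reformulation is essentially the same truncation argument phrased through the successor function.
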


\cref{lem:non_observable_prefix} follows directly from \cref{eq:finite_path_prob}. For a non-observable $t$, we have $\forall s,p_l: \mathbb{P}^l_{\mathcal{M},s}(t) = 0$ and extending $t$ to create $t'$ only adds further factors. The same property holds for test sequences. 
\end{full}

\subsubsection{Active Automata Learning.}
We consider active automata learning in the \gls*{MAT} framework~\cite{DBLP:journals/iandc/Angluin87}, introduced by Angluin for the $L^*$ algorithm.
It assumes the existence of a \gls*{MAT}, which is able to answer queries. $L^*$ 
learns a \gls*{DFA} representing an unknown regular language $L$ over some alphabet $A$ 
and therefore requires two types of queries: \emph{membership} and \emph{equivalence} queries. 
First, $L^*$ repeatedly selects strings in $A^*$
and checks if they are in $L$ via \emph{membership} queries.
Once the algorithm has gained sufficient information, it forms a hypothesis  
\gls*{DFA} consistent with the membership query results. It then poses an \emph{equivalence} query checking
for equivalence between $L$ and the language accepted by the hypothesis. 
The teacher responds either with \emph{yes} signalling equivalence;
or with a counterexample to equivalence, i.e. a string in the symmetric difference between 
$L$ and the language accepted by the hypothesis. 
After processing a counterexample, $L^*$ starts a new round of learning, consisting of membership queries and a concluding equivalence query. 
Once an equivalence query returns \emph{yes}, learning stops with the final hypothesis as output. 

$L^*$ has been extended to learn models of reactive systems such as Mealy machines~\cite{DBLP:conf/fm/ShahbazG09}. 
In practice, queries for learning models of black-box systems are usually implemented via 
testing~\cite{Aichernig2018}. 
Therefore, equivalence queries are generally only approximated as complete testing for black-box systems is impossible unless there is an upper bound on the number of system states.
\begin{full}
We cover the ideal setting in \cref{sec:method_exact} by presenting an $L^*$-based exact learning algorithm for \glspl*{MDP}.
In \cref{sec:method_non_exact}, we discuss an implementation in a sampling-based setting that approximates queries by testing the \acrshort*{SUL}.
\end{full}

\section{Exact Learning of \glspl*{MDP}}
\label{sec:method_exact}
This section presents \LstarMdpE{}, an exact active learning algorithm for \glspl*{MDP},
the basis for the sampling-based algorithm presented in \cref{sec:method_non_exact}. 
In contrast to sampling, \LstarMdpE{} assumes the existence of a teacher with perfect knowledge about the \SUL{} that is able to answer two types 
of queries: \emph{output distribution} queries and \emph{equivalence} queries.
The former asks for the exact distribution of outputs following a test sequence in the \SUL.
The latter takes a hypothesis \gls*{MDP} as input and responds either with \emph{yes} iff 
the hypothesis is observationally equivalent to the \SUL{} or with a counterexample to equivalence. A counterexample 
is a test sequence leading to different output distributions in hypothesis and \SUL{}. 
First, we describe how we capture the semantics of \glspl*{MDP}.

\subsubsection{Semantics of \glspl*{MDP}.}

We can interpret an \gls*{MDP} as a function $M : \TestSeq \rightarrow \Dist(\Out) \cup \{\bot\}$, mapping 
test sequences $s$ to output distributions or undefined behaviour for non-observable $s$. 
This follows the interpretation of Mealy machines as functions from input 
sequences to outputs~\cite{DBLP:conf/sfm/SteffenHM11}. 
\begin{conference}
Viewing \glspl*{MDP} as reactive systems, we consider two \glspl*{MDP} to be equivalent,
if their semantics are equal, i.e. we make the same observations on both. 
\end{conference}
\begin{full}
Likewise, we will define which functions $M$ capture the semantics of \glspl*{MDP}
by adapting the Myhill-Nerode theorem for regular languages~\cite{zbMATH03146331}. 
We denote the set of sequences $s$ where $M(s) \neq \bot$ as defined domain $\semdom(M)$ of $M$.
\end{full}
\begin{definition}[\gls*{MDP} Semantics]
 \label{def:mdp_semantics}
Given an \gls*{MDP} $\langle Q,\In, \Out,q_0, \delta, L\rangle$, its semantics is a 
function $M$, defined
for $i \in \In$, $o \in \Out$, $t \in \Traces$ as follows:
\begin{align*}
M(\epsilon)(L(q_0)) &= 1 \\
M(t \cdot i) &= \bot \text{ if } \delta^*(t) = \bot \\
M(t \cdot i)(o) &= p \text{ otherwise if } \delta(\delta^*(t),i)(q) = p > 0 \land L(q) = o 
\end{align*}

\begin{conference}
\glspl*{MDP} $\mathcal{M}_1$ and $\mathcal{M}_2$ with
 semantics $M_1$ and $M_2$ are output-distribution equivalent, 
 denoted $\mathcal{M}_1 \equiv_\mathrm{od} \mathcal{M}_2$, iff $M_1 = M_2$.
\end{conference}
\end{definition}

\begin{definition}[$M$-Equivalence of Traces]
\label{def:m_equiv}
 Two traces $t_1,t_2 \in \Traces$ are equivalent with respect to $M : \TestSeq \rightarrow \Dist(\Out) \cup \{\bot\}$, denoted $t_1 \equiv_M t_2$, iff $\lastOut(t_1) = \lastOut(t_2)$ and for all continuations $v \in \ContSeq$ it holds that $M(t_1 \cdot v) = M(t_2 \cdot v)$.
\end{definition}
A function $M$ defines an equivalence relation on traces, like the Myhill-Nerode equivalence for formal languages~\cite{zbMATH03146331}\begin{conference}(see also~\cite{lstar_mdp_tech_report})\end{conference}. 
Two traces are $M$-equivalent if they end in the same output and if their behaviour in response to future inputs is the same. Two traces leading to the same \gls*{MDP} state are in the same equivalence class of $\equiv_M$, as in Mealy machines \cite{DBLP:conf/sfm/SteffenHM11}.

\begin{full}
We can now state which functions characterise \glspl*{MDP}, 
as an adaptation of the Myhill-Nerode theorem for regular languages~\cite{zbMATH03146331}, like for Mealy machines~\cite{DBLP:conf/sfm/SteffenHM11}. 
 
\begin{theorem}[Characterisation]
\label{theorem:characterisation}
 A function $M : \TestSeq \rightarrow \Dist(\Out) \cup \{\bot\}$ represents the semantics of an \gls*{MDP}
 iff \begin{compactitem}
      \item $\equiv_M$ has finite index, \hfill finite number of states
      \item $M(\epsilon) = \mathbf{1}_{\{o\}}$ for an $o\in \Out$, \hfill initial output
      \item $\semdom(M)$ is prefix-closed, and 
      \item $\forall t \in \Traces :$
      either $\forall i \in \In : M(t \cdot i) \neq \bot$ or \hfill input enabledness\\
      $\forall i \in \In : M(t \cdot i) = \bot$ 

     \end{compactitem}
\end{theorem}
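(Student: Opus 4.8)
The plan is to prove the two directions separately. For the "only if" direction, suppose $M$ is the semantics of an MDP $\mathcal{M} = \langle Q,\In,\Out,q_0,\delta,L\rangle$ in the sense of \cref{def:mdp_semantics}. The initial-output condition $M(\epsilon) = \mathbf{1}_{\{L(q_0)\}}$ is immediate from the definition. Prefix-closedness of $\semdom(M)$ follows from the fact that $\delta^*(t) = \bot$ forces $\delta^*(t') = \bot$ for every extension $t' = t\cdot i \cdot o \cdots$ (this is the test-sequence analogue of \cref{lem:non_observable_prefix}): if a shorter test sequence is undefined, so is every longer one, hence the defined ones form a prefix-closed set. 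Input-enabledness holds because $\delta(q,i)$ is a total function into $\Dist(Q)$, so once $\delta^*(t) = q \neq \bot$, every input $i$ yields $M(t\cdot i) = \delta(q,i)\circ L^{-1} \neq \bot$; and if $\delta^*(t)=\bot$ then $M(t\cdot i)=\bot$ for all $i$. Finally, $\equiv_M$ has finite index because the map sending a trace $t$ with $\delta^*(t)\neq\bot$ to the state $\delta^*(t)\in Q$ refines $\equiv_M$: two traces reaching the same state have the same last output ($L$ of that state) and, by a straightforward induction on $|v|$ using $\delta^*$, identical behaviour $M(t\cdot v)$ on all continuations; traces with $\delta^*(t)=\bot$ can only be $M$-equivalent to other such traces with the same last output, contributing at most $|\Out|$ further classes. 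Hence the index is at most $|Q| + |\Out|$.

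For the "if" direction, assume $M$ satisfies the four conditions; I would build a canonical MDP $\mathcal{M}_M$ from the equivalence classes, mirroring the Myhill–Nerode construction for Mealy machines. Take $Q = \{[t]_{\equiv_M} : t \in \Traces,\ \delta^*_M(t) \text{ defined, i.e. } M \text{ defined along } t\}$, i.e. classes of traces that are "observable" (lie in $\semdom$ in the extended sense that every prefix test sequence maps to $\neq\bot$); $q_0 = [L(q_0)] = [\,o_0\,]$ where $\{o_0\}=\mathrm{supp}(M(\epsilon))$; $L([t]) = \lastOut(t)$, well-defined by \cref{def:m_equiv}; and the transition function $\delta([t],i)(q') = M(t\cdot i)(o')$ whenever $q' = [t\cdot i\cdot o']$ and $o'\in\mathrm{supp}(M(t\cdot i))$, and undefined-treated-as-$0$ otherwise. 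One must check: (i) finiteness of $Q$ — this is exactly the finite-index hypothesis; (ii) well-definedness of $\delta$ — independence of the representative $t$ follows from $\equiv_M$, and the fact that it lands in $\Dist(Q)$ (sums to $1$) follows because $M(t\cdot i)$ is a genuine distribution over $\Out$ and each output $o'$ corresponds to exactly one successor class $[t\cdot i\cdot o']$; (iii) determinism — distinct successor classes of a fixed $([t],i)$ carry distinct labels $o'$ by construction; (iv) that the construction is total/input-enabled at every reachable state, which is where the input-enabledness hypothesis is used; and (v) that prefix-closedness guarantees every class used in the successor definition is itself a legitimate state (the relevant trace is observable). Then I would prove by induction on trace length that the semantics of $\mathcal{M}_M$, computed via \cref{def:mdp_semantics}, equals $M$: the base case is the initial-output hypothesis, and the inductive step unfolds $\delta^*_{\mathcal{M}_M}(t) = [t]$ and reads off $\delta([t],i)$, which by definition returns $M(t\cdot i)$.

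The routine parts are the closure and distribution-arithmetic checks. The main obstacle is the bookkeeping around $\bot$: the hypotheses talk about $\semdom(M)$ being prefix-closed and about the all-or-nothing input-enabledness, and I must make sure the state set is carved out of exactly the "fully observable" traces — those $t$ for which $M$ is defined on every test-sequence prefix — and that the $\delta^*$ of the constructed MDP never wanders into an undefined region inconsistently with $M$. Concretely, the delicate step is showing $\delta^*_{\mathcal{M}_M}(t) = [t]$ for all observable $t$ while simultaneously showing $\delta^*_{\mathcal{M}_M}(t) = \bot$ exactly when some prefix test sequence of $t$ is $\bot$ under $M$; the two halves have to be proved together by a single induction, using prefix-closedness to propagate "observable" downward and input-enabledness plus \cref{lem:non_observable_prefix} to propagate "non-observable" upward. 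Once that simultaneous invariant is in place, equality of semantics is a one-line consequence, and the uniqueness-up-to-isomorphism remark (if included) follows by the standard argument that any MDP with semantics $M$ has its reachable part isomorphic to $\mathcal{M}_M$ via $q \mapsto [\text{any trace reaching } q]$.
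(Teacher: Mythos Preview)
Your plan mirrors the paper's proof closely. The $\Rightarrow$ direction is argued the same way (the state map $t\mapsto\delta^*(t)$ refines $\equiv_M$, prefix-closedness of $\semdom(M)$ comes from \cref{lem:non_observable_prefix}, and the all-or-nothing input condition from totality of $\delta$), and the $\Leftarrow$ construction is identical: states are $\equiv_M$-classes of observable traces, labelled by $\lastOut$, with $\delta([t],i)([t\cdot i\cdot o])=M(t\cdot i)(o)$. You are in fact more thorough than the paper here: the paper writes down the construction and stops, whereas you plan the induction establishing $\delta^*_{\mathcal{M}_M}(t)=[t]$ that is needed to conclude the constructed MDP really has semantics $M$.

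That extra care, however, exposes a gap the paper glosses over. Your intended invariant ``$\delta^*_{\mathcal{M}_M}(t)=\bot$ exactly when some prefix test sequence of $t$ is $\bot$ under $M$'' is not provable from the four stated hypotheses, because they do not force $M(s)(o)=0 \Rightarrow M(s\cdot o\cdot i)=\bot$. Concretely, take $\In=\{i\}$, $\Out=\{a,b\}$, $M(\epsilon)=\mathbf{1}_{\{a\}}$, and $M(s)=\mathbf{1}_{\{a\}}$ for every nonempty $s\in\TestSeq$. Then $\equiv_M$ has index two, $\semdom(M)=\TestSeq$ is trivially prefix-closed, and input-enabledness holds; yet $M(a\cdot i)(b)=0$ while $M(a\cdot i\cdot b\cdot i)=\mathbf{1}_{\{a\}}\neq\bot$, so by the argument of \cref{lem:prob_0_semantics_imply_bot} no MDP has semantics $M$. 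In the constructed $\mathcal{M}_M$ the trace $b$ satisfies $\delta^*_{\mathcal{M}_M}(b)=\bot$ (the initial label is $a$) even though every prefix test sequence of $b$ lies in $\semdom(M)$, so your biconditional fails right at the base case. The theorem as stated needs an additional coherence condition such as $M(s)(o)=0 \Rightarrow s\cdot o\cdot i\notin\semdom(M)$; both your plan and the paper's proof tacitly rely on it without listing it.
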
 

\begin{proof}
\emph{Direction $\Rightarrow$:} first we show that the semantics $M$ of an \gls*{MDP} $\mathcal{M} = \langle Q,\In, \Out,q_0, \delta, L\rangle$ fulfils the conditions of Theorem~\ref{theorem:characterisation}. According to \cref{def:mdp_semantics}, $M(\epsilon)(L(q_0)) = 1$, thus the second condition is fulfilled.

Let $t\in \Traces$ be an observable trace, then we have for $i \in \In, o\in \Out$: \linebreak $M(t \cdot i)(o) = \delta(q',i)(q) = p$, where $q'=\delta^*(t)$, if $p > 0$ and $L(q) = o$. Since $\mathcal{M}$ contains finitely many states $q'$, $\delta(q',i)$ and therefore also $M(t \cdot i)$ take only finitely many values. $M$-equivalence of traces $t_i$ depends 
on the outcomes of $M$ and on their last outputs $\lastOut(t_i)$, which are both finite, therefore $M$-equivalence defines finitely many equivalence classes for observable traces.
For non-observable $t\in \Traces$ we have $\delta^*(t) = \bot$ which implies $M(t \cdot i) = \bot$. As a consequence of \cref{lem:non_observable_prefix}, we also have $M(t \cdot c) = \bot$ for any $c \in \ContSeq$. Hence, non-observable traces are equivalent with respect to $M$ if they end in the same output, therefore $M$ defines finitely many equivalence classes for non-observable traces. In summary, $\equiv_M$ has finite index, i.e. the first condition is fulfilled. 
Prefix-closedness of the defined domain $\semdom(M)$ of $M$ follows from \cref{lem:non_observable_prefix}. Any extension of a non-observable test sequence is also non-observable, thus $M$ fulfils the third condition.

For the fourth condition, we again distinguish two cases. If $t$ is a non-observable trace, i.e. $\delta^*(t) = \bot$, then $M(t \cdot i) = \bot$ for all $i \in \In$ according to \cref{def:mdp_semantics}, which fulfils the second sub-condition.  For observable $t$, the distribution $M(t \cdot i)$ depends on $\delta(\delta^*(t),i)$, which is defined for all $i$ due to input-enabledness of $\mathcal{M}$, satisfying the first subcondition. 

\emph{Direction $\Leftarrow$:}  
from an $M$ satisfying the conditions given in Theorem~\ref{theorem:characterisation}, we can construct an \gls*{MDP} $\mathcal{M}_c = \langle Q,\In, \Out,q_0, \delta, L\rangle$ by: 
\begin{itemize}
 \item $Q = (\Traces / \equiv_M) \setminus \{[t] |t \in \Traces, \exists i \in I: M(t\cdot i) = \bot\}$
 \item $q_0 = [o_0]$, where $o_0 \in \Out$ such that $M(\epsilon) = \mathbf{1}_{\{o_0\}}$
 \item $L([s \cdot o]) = o$ (by Def.~\ref{def:m_equiv} all traces in the same equivalence class end with the same output)
 \item for $[t] \in Q$: \\ 
 $\delta([t],i)([t \cdot i \cdot o]) = M(t \cdot i)(o)$ (defined by fourth condition of Theorem~\ref{theorem:characterisation})
\end{itemize}
Each equivalence class of $\equiv_M$ gives rise to exactly one state in $Q$, except for the equivalence classes of 
non-observable traces $Q$. 
\end{proof}

The \gls*{MDP} $\mathcal{M}_c$ in the above construction is minimal with respect to the number of states and unique, up to isomorphism. Therefore, we refer such an \gls*{MDP} as canonical \gls*{MDP} $\can(M)$ for \glspl*{MDP} semantics $M$. 
\end{full}

 \begin{full}
Viewing \glspl*{MDP} as reactive systems, we consider two \glspl*{MDP} to be equivalent,
if we make the same observations on both. 
\begin{definition}[Output-Distribution Equivalence] 
\label{def:eq_mdps}
\glspl*{MDP} 
 $\mathcal{M}_1$ and $\mathcal{M}_2$ over $\In$ and $\Out$ with
 semantics $M_1$ and $M_2$ are output-distribution equivalent, 
 denoted $\mathcal{M}_1 \equiv_\mathrm{od} \mathcal{M}_2$, iff
 \begin{equation*}
 \forall s \in \TestSeq : M_1(s) = M_2(s)
 \end{equation*}

\end{definition}
 \end{full}

\subsubsection{Queries.}

We are now able to define queries focusing on the observable behaviour of \glspl*{MDP}.
Assume that we want to learn a model of a black-box deterministic
\gls*{MDP} $\mathcal{M}$ 
, with semantics $M$.
Output distribution queries ($\odq$) and equivalence queries ($\eq$) are then defined as follows: 
\begin{compactitem}
\begin{full}
 \item {\sl output distribution ($\odq$)}: an $\odq$ query takes a sequence $s \in\TestSeq$ as input and returns $M(s)$.
\end{full}
\begin{conference}
 \item {\sl output distribution ($\odq$)}: an $\odq(s)$ returns $M(s)$ for input $s \in\TestSeq$.
\end{conference}

 \item {\sl equivalence ($\eq$)}: an $\eq$ query takes a hypothesis Mealy machine $\mathcal{H}$ with semantics $H$ 
 as input and  returns \emph{yes} if $\mathcal{H} \equiv_\mathrm{od} \mathcal{M}$; 
 otherwise it returns an $s \in \TestSeq$ such that $H(s) \neq M(s)$ and $M(s) \neq \bot$.
\end{compactitem}

\begin{full}
\begin{remark}
\label{rem:minimal_cex}
 For any counterexample $s$ to $\mathcal{H} \equiv_\mathrm{od} \mathcal{M}$ such that $M(s) = \bot$, there is a prefix $s'$ of $s$ with $H(s') \neq M(s')$ and $M(s) \neq \bot$, i.e. $s'$ is also a counterexample but observable on the \SUL{} with semantics $M$. 
Hence, we can restrict potential counterexamples to be observable test sequences. 
\end{remark}

\begin{proof}
Since $s$ is a counterexample and  $M(s) = \bot$, we have $H(s) \neq \bot$. Let $s''$ be the the longest prefix of $s$ such that $M(s'') = \bot$, thus $s''$ is of the form $s'' = s' \cdot o \cdot i$ with $M(s')(o) = 0$. Due to prefix-closedness of $\semdom(H)$, $H(s) \neq \bot$ implies $H(s'') \neq \bot$, therefore $H(s')(o) > 0$. Hence, $s'$ with $M(s')\neq \bot$ is also a counterexample because $H(s') \neq M(s')$. 
\end{proof}
\end{full}

\subsubsection{Observation Tables.}
Like $L^*$, we store information in observation table triples $\langle S,E,T\rangle$, where:
\begin{compactitem}
\begin{full}
 \item $S \subset \Traces$ is a prefix-closed set of traces, initialised to
 $\{o_0\}$, a singleton set containing the trace consisting of the initial output $o_0$ of the \SUL{},
given by $\odq(\epsilon)(o_0) = 1$.
\end{full}
\begin{conference}
  \item $S \subset \Traces$ is a prefix-closed set of traces, 
 initialised to $\{o_0\}$, with $o_0$ being the initial \SUL{} output,
\end{conference}
 \item $E \subset \ContSeq$ is a suffix-closed set of continuation sequences, initialised
 to $\In$, 
 \item $T : (S \cup \LongTr(S)) \cdot E \rightarrow \Dist(\Out) \cup \{\bot\}$ is a mapping from test sequences to output distributions 
 or $\bot$ denoting undefined behaviour. This mapping basically stores a finite subset of $M$.
 The set $\LongTr(S) \subseteq S \cdot \In \cdot \Out$ is given by $\LongTr(S) = \{s\cdot i \cdot o | s \in S, i \in \In, o \in \Out, \odq(s \cdot i)(o) > 0 \}$.
\end{compactitem}
We can view an observation table as a two-dimensional array with rows labelled by traces in $S\cup \LongTr(S)$ and columns
labelled by $E$. We refer to traces in $S$ as short traces and to their extensions in $\LongTr(S)$ as long traces. 
An extension $s\cdot i \cdot o$ of a short trace $s$ is in $\LongTr(S)$ if $s \cdot i \cdot o$ is observable. 
\begin{full}
Analogously to traces, we refer to rows labelled by $S$ as short rows and we refer to rows labelled 
by $\LongTr(S)$ as long rows.  
\end{full}
\begin{conference}
 Analogously to traces, we refer to rows labelled by $S$ as short rows.
\end{conference}
The table cells store the mapping defined by $T$. To represent rows labelled by traces $s$
we use functions $\row(s) : E \rightarrow \Dist(\Out) \cup \{\bot\}$ for $s \in S \cup \LongTr(S)$ with $\row(s)(e) = T(s\cdot e)$.
Equivalence of rows labelled by traces $s_1,s_2$, denoted $\equivrow_E(s_1,s_2)$, holds iff $\row(s_1) = \row(s_2) \land \lastOut(s_1) = \lastOut(s_2)$ and approximates $M$-equivalence $s_1 \equiv_M s_2$,
by considering only continuations in $E$, i.e. $s_1 \equiv_M s_2$ implies $\equivrow_E(s_1,s_2)$.
The observation table content defines the structure of hypothesis \glspl*{MDP} based on the following principle: 
we create one state per equivalence class of $S/\equivrow_E$, thus we identify states with traces in $S$ reaching 
them and we distinguish states by their future behaviour in response to sequences in $E$ 
(as is common in active automata learning~\cite{DBLP:conf/sfm/SteffenHM11}).
The long traces $\LongTr(S)$ serve to define transitions. 
Transition probabilities are given by the distributions in the mapping $T$. 

\cref{tab:obs_table_coffee}
shows a part of the observation table created during learning of the coffee machine shown in \cref{fig:coffee_mdp}.
The set $S$ has a trace for each state of the \gls*{MDP}. Note that these traces are pairwise inequivalent with respect to $\equivrow_E$, where
$E = \In = \{\mathtt{but},\mathtt{coin}\}$. We only show one element of $\LongTr(S)$, which gives rise
to the self-loop in the initial state with the input $\mathtt{but}$ and probability $1$.

\begin{table}[t]
\centering
\caption{Parts of observation table for learning the faulty coffee machine (\cref{fig:coffee_mdp}).}
\label{tab:obs_table_coffee}
 \begin{tabular}{r|l|c|c}
  &              & $\mathtt{but}$ & $\mathtt{coin}$ \\\hline
\multirow{3}{*}{$S$}&$\mathtt{init}$  & $\{\mathtt{init}\mapsto 1\}$ & $\{\mathtt{beep}\mapsto 1\}$ \\\cline{2-4}
&$\mathtt{init} \cdot \mathtt{coin} \cdot \mathtt{beep}$  
                 & $\{\mathtt{coffee}\mapsto 0.8,\mathtt{init}\mapsto 0.2\}$ & $\{\mathtt{beep}\mapsto 1\}$ \\\cline{2-4}
&$\mathtt{init} \cdot \mathtt{coin} \cdot \mathtt{beep}\cdot \mathtt{but} \cdot \mathtt{coffee}$  
                 & $\{\mathtt{init}\mapsto 1\}$ & $\{\mathtt{beep}\mapsto 1\}$ \\\hline
\multirow{2}{*}{$\LongTr(S)$} & $\mathtt{init} \cdot \mathtt{but} \cdot \mathtt{init}$ & $\{\mathtt{init}\mapsto 1\}$ & $\{\mathtt{beep}\mapsto 1\}$ \\\cline{2-4}
                   &  \ldots & \ldots & \ldots 
 \end{tabular}

\end{table}

\begin{definition}[Closedness]
 An observation table $\langle S,E,T\rangle$ is closed if for all $l \in \LongTr(S)$
 there is an $s \in S$ such that $\equivrow_E(l,s)$.
\end{definition}

\begin{definition}[Consistency]
 An observation table $\langle S,E,T\rangle$ is consistent if for all $s_1,s_2 \in S, i \in \In, o \in \Out$ such that $\equivrow_E(s_1,s_2)$
 it holds either that (1) $T(s_1 \cdot i)(o) = 0 \land T(s_2 \cdot i)(o) = 0$\begin{full}
\footnote{
 Note that $s_1 \in S$ implies that $T(s_1 \cdot i) \neq \bot$ such that $T(s_2 \cdot i)(o) = 0$ follows from $\equivrow_E(s_1,s_2)$ and $T(s_1 \cdot i)(o) = 0$.} 
\end{full} 
\ or (2) $\equivrow_E(s_1 \cdot i \cdot o,s_2 \cdot i \cdot o)$.
                                         
\end{definition}
Closedness and consistency are required to derive well-formed hypotheses, analogously
to $L^*$~\cite{DBLP:journals/iandc/Angluin87}. 
We require closedness to create transitions for all inputs in all states and we require consistency
to be able to derive deterministic hypotheses.
During learning, we apply Algorithm~\ref{alg:make_closed_and_cons} repeatedly to
establish closedness and consistency of observation tables. The algorithm adds a new short trace if the table is not closed and 
adds a new column if the table is not consistent.
\begin{algorithm}[t]
\begin{algorithmic}[1]
 \Function{MakeClosedAndConsistent}{$\langle S, E, T\rangle$}
    \If{$\langle S, E, T\rangle$ is not closed}
    \State $l \gets l' \in \LongTr(S)$ such that $\forall s \in S: \row(s) \neq \row(l') \lor \lastOut(s) \neq \lastOut(l')$
    \State $S \gets S \cup\{ l \}$
    \ElsIf{$\langle S, E, T\rangle$ is not consistent}  
    \ForAll{$s_1,s_2 \in S$ such that $\equivrow_E(s_1,s_2)$}
    \ForAll{$i \in \In, o\in \Out$}
       \If{$T(s_1 \cdot i)(o) > 0$ and $\lnot \equivrow_E(s_1\cdot i \cdot o,s_2\cdot i \cdot o)$}
       \State $e \gets e' \in E$ such that $T(s_1 \cdot i \cdot o \cdot e') \neq T(s_2 \cdot i \cdot o \cdot e')$
       \State $E \gets E \cup \{i \cdot o \cdot e\}$
    \EndIf
    \EndFor
    \EndFor
    \EndIf
    \State \Return$\langle S, E, T\rangle$
 \EndFunction
\end{algorithmic}
\caption{Making an observation table closed and consistent}
\label{alg:make_closed_and_cons}
\end{algorithm}

We derive a hypothesis $\mathcal{H} = \langle Q_\mathrm{h},\In, \Out,{q_0}_\mathrm{h}, \delta_\mathrm{h}, L_\mathrm{h}\rangle$ from a closed and consistent observation table 
$\langle S,E,T\rangle$, denoted $\mathcal{H} = \mathrm{hyp}(S,E,T)$, as follows:
\begin{compactitem}
 \item $Q_\mathrm{h} = \{\langle\lastOut(s),\row(s)\rangle|s \in S\}$
 \item ${q_0}_\mathrm{h} = \langle o_0,\row(o_0)\rangle$, $o_0 \in S$ is the trace consisting of the 
  initial \SUL{} output
  
  \item for $s \in S$, $i \in \In$ and $o\in \Out$ : \\ $\delta_\mathrm{h}(\langle\lastOut(s),\row(s)\rangle,i)(\langle o,\row(s \cdot i \cdot o)\rangle) = p$ if  $T(s \cdot i)(o) = p > 0$
 \item for $s \in S$: 
 $L_\mathrm{h}(\langle\lastOut(s),\row(s)\rangle)  = \lastOut(s)$
\end{compactitem}

\subsubsection{Learning Algorithm.}

Algorithm \ref{alg:exact_l_star_mdp} implements \LstarMdpE{} using queries $\odq$ and $\eq$. 
First, the algorithm initialises the observation tables and fills the table cells with output distribution queries (Lines~\ref{alg:lstar_exact:init_1} to \ref{alg:lstar_exact:init_2}).
The main loop in Lines \ref{alg:lstar_exact:main_1} to \ref{alg:lstar_exact:main_2} makes
the observation table closed and consistent, derives a hypothesis $\mathcal{H}$ and performs an equivalence query $\eq(\mathcal{H})$.
If a counterexample $\textit{cex}$ is found, all its prefix traces are added as short traces to $S$, otherwise the final hypothesis
is returned, as it is output-distribution equivalent to the \SUL. Whenever the table contains empty cells, the \textsc{Fill}
procedure assigns values to these cells via $\odq$.

\begin{algorithm}[t]
 \begin{algorithmic}[1]
 \Require $\In$, exact teacher capable of answering $\odq$ and $\eq$
 \Ensure learned model $\mathcal{H}$ (final hypothesis)
  \State $\mathit{o_0} \gets o$ such that $\odq(\epsilon)(o) = 1$ \label{alg:lstar_exact:init_1}
  \State $S \gets \{o_0\}$, $E \gets \In$ 
  \State \Call{fill}{$S,E,T$} \label{alg:lstar_exact:init_2}
  \Repeat \label{alg:lstar_exact:main_1}
    \While{$\langle S, E, T\rangle$ not closed or not consistent}
    \State $\langle S,E,T \rangle \gets \Call{MakeClosedAndConsistent}{\langle S,E,T \rangle}$
    \State \Call{fill}{$S,E,T$}
    \EndWhile
    \State $\mathcal{H} \gets \mathrm{hyp}(S,E,T)$
    \State $\mathit{eqResult} \gets \eq(\mathcal{H})$
    \If{$\mathit{eqResult} \neq \mathit{yes}$}
      \State $cex \gets \mathit{eqResult}$ 
      \ForAll{$(t \cdot i) \in \prefixes(cex)$ with $i \in \In$} 
	\State $S \gets S \cup \{t\}$ \label{algline:exact_l_star_mdp:add_cex_prefix}
      \EndFor
      \State \Call{fill}{$S,E,T$}
    \EndIf
  \Until{$\mathit{eqResult} = \mathit{yes}$} \label{alg:lstar_exact:main_2}
  \State \Return $\mathrm{hyp}(S,E,T)$ 
  \Procedure{fill}{$S,E,T$}
  \ForAll{$s \in S \cup \LongTr(S), e \in E$}
  \If{$T(s \cdot e)$ undefined} \Comment{we have no information about $T(s \cdot e)$ yet}
    \State $T(s \cdot e) \gets \odq(s \cdot e)$
  \EndIf
  \EndFor
  \EndProcedure
 \end{algorithmic}
\caption{The main algorithm implementing \LstarMdpE{}}
\label{alg:exact_l_star_mdp}
\end{algorithm}

\begin{full}
 
\subsubsection{Correctness \& Termination.}
\label{sec:exact_convergence}

In the following, we will show that \LstarMdpE{} terminates and learns correct models, i.e. models that are output-distribution equivalent to the \SUL. 
Like Angluin~\cite{DBLP:journals/iandc/Angluin87}, we will show that derived hypotheses are consistent with queried information and that they are minimal with respect to the number of states.
For the remainder of this section, let $M$ be the semantics of the \gls*{MDP} underlying the \SUL{} and let $\mathcal{M} = \can(M)$ be the corresponding canonical \gls*{MDP}
and let $\mathcal{H} = \langle Q,\In, \Out,{q_0}, \delta, L\rangle$ denote hypotheses.
The first two lemmas relate to observability of traces. 
\begin{lemma}
\label{lem:prob_0_semantics_imply_bot}
 For all $s \in \TestSeq, o \in \Out, e \in \ContSeq: M(s)(o) = 0 \Rightarrow M(s \cdot o \cdot e) = \bot$. 
\end{lemma}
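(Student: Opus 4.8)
The plan is to reduce the statement to \cref{lem:non_observable_prefix} together with its remark extending it to test sequences: non-observable (test) sequences remain non-observable under extension, and by \cref{def:mdp_semantics} the function $M$ returns $\bot$ on every non-observable test sequence. Hence it suffices to prove that the hypothesis $M(s)(o) = 0$ forces the \emph{trace} $s \cdot o$ to be non-observable; then $s \cdot o \cdot e$ extends a non-observable trace and the conclusion follows.

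First I would dispose of the degenerate reading: if $M(s) = \bot$, then $s$ is already non-observable, hence so is the test sequence $s \cdot o \cdot e$, and $M(s \cdot o \cdot e) = \bot$ immediately. Otherwise $M(s) \in \Dist(\Out)$, and I split on the shape of $s \in \TestSeq$. If $s = \epsilon$, then $M(\epsilon) = \mathbf{1}_{\{L(q_0)\}}$ by \cref{def:mdp_semantics}, so $M(\epsilon)(o) = 0$ means $o \neq L(q_0)$; since the only observable trace consisting of a single output is $L(q_0)$, the trace $s \cdot o = o$ is non-observable. If $s = t \cdot i$ with $t \in \Traces$ and $i \in \In$, then $M(s) \neq \bot$ yields $\delta^*(t) \in Q$ by \cref{def:mdp_semantics}; and $M(t \cdot i)(o) = 0$ means (by the same definition, using determinism and the partial-distribution convention) that $\delta(\delta^*(t),i)$ puts no probability on any state labelled $o$, so there is no successor state for $(\delta^*(t),i,o)$ and $\Delta(\delta^*(t),i,o) = \bot$. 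Therefore $\delta^*(s \cdot o) = \delta^*(t \cdot i \cdot o) = \Delta(\delta^*(t),i,o) = \bot$, i.e. the trace $s \cdot o$ is non-observable.

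To conclude, I would observe that $e \in \ContSeq = \In \cdot \TestSeq$ is non-empty and begins with an input, so $s \cdot o \cdot e$ is a test sequence that can be written as $u \cdot i'$ with $u \in \Traces$, $i' \in \In$, and $s \cdot o \ll u$. By \cref{lem:non_observable_prefix}, $u$ is non-observable, so $\delta^*(u) = \bot$, whence $M(s \cdot o \cdot e) = M(u \cdot i') = \bot$ by \cref{def:mdp_semantics}.

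The argument uses nothing beyond \cref{def:mdp_semantics} and \cref{lem:non_observable_prefix}; the only points that need a little care are making the $s = \epsilon$ base case explicit and noting that $M(s)(o) = 0$ is only meaningful when $M(s)$ is a genuine distribution, which is exactly what legitimises the reasoning about $\delta^*(t)$ and $\Delta$. I do not anticipate a substantive obstacle.
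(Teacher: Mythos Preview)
Your proposal is correct and follows essentially the same approach as the paper: show that $M(s)(o)=0$ forces $\delta^*(s\cdot o)=\bot$, then propagate non-observability to $s\cdot o\cdot e$ via prefix-closedness (the paper) or equivalently \cref{lem:non_observable_prefix} (you). Your treatment is in fact more careful than the paper's, which tacitly writes $s = t\cdot i$ without addressing the cases $s=\epsilon$ and $M(s)=\bot$ that you make explicit.
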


\begin{proof}
Let $\delta_M$ be the probabilistic transition relation of $\mathcal{M}$.
 $M(s)(o) = 0$ with $s = t \cdot i$, for $i \in \In$ implies that there is no state labelled $o$ reachable by executing $i$ in the state
 $\delta_M^*(t)$ (\cref{def:mdp_semantics}), thus $\delta_M^*(t \cdot i \cdot o) = \delta_M^*(s \cdot o) = \bot$. 
 By \cref{def:mdp_semantics}, $M(s \cdot o \cdot i') = \bot$ for any $i'$. Due to prefix-closedness of 
 $\semdom(M)$, we have $M(s \cdot o \cdot e) = \bot$ for all $e\in \ContSeq$.  

\end{proof}

\begin{lemma}
\label{lem:pos_prob}
 Let $\langle S,E,T\rangle $ be a closed and consistent observation table. Then for $s\in S$ and $s \cdot i \cdot o \in S\cup \LongTr(S)$
 we have $T(s\cdot i)(o) > 0$.
\end{lemma}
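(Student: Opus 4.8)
The plan is to trace the claim back to the way the observation table is populated, distinguishing the two possibilities for where the row $s\cdot i\cdot o$ comes from. For $s\cdot i\cdot o \in \LongTr(S)$ the statement is essentially an unfolding of a definition: by construction $\LongTr(S) = \{s'\cdot i'\cdot o' \mid s'\in S,\ \odq(s'\cdot i')(o') > 0\}$, and since a trace whose last two symbols are an input followed by an output has a unique decomposition of that shape, membership of $s\cdot i\cdot o$ in $\LongTr(S)$ forces $\odq(s\cdot i)(o) > 0$, i.e.\ $M(s\cdot i)(o) > 0$. Since $E$ is initialised to $\In$ and is only ever enlarged, $i\in E$, so $s\cdot i$ lies in the domain of $T$; because the table is closed and consistent, the \textsc{fill} procedure has run after the last modification, hence $T(s\cdot i) = \odq(s\cdot i) = M(s\cdot i)$ and therefore $T(s\cdot i)(o) > 0$.

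For the remaining case $s\cdot i\cdot o \in S$ I would first establish, by induction over the steps of \cref{alg:exact_l_star_mdp} that modify $S$, the invariant that $S$ contains only observable traces: the seed $o_0$ is observable; a trace added by \textsc{MakeClosedAndConsistent} is taken from $\LongTr(S)$, hence observable; and a trace added while processing a counterexample $\mathit{cex}$ is (the trace part of) a prefix of $\mathit{cex}$, which we may assume observable by \cref{rem:minimal_cex}, so it is observable by the contrapositive of \cref{lem:non_observable_prefix}. Granting the invariant, both $s\cdot i\cdot o$ and its prefix $s$ are observable, so $\delta^*(s)\neq\bot$ and $\delta^*(s\cdot i\cdot o) = \Delta(\delta^*(s),i,o)\neq\bot$; by the definitions of $\Delta$ and of \gls*{MDP} semantics this forces the existence of a state $q$ with $L(q)=o$ and $\delta(\delta^*(s),i)(q) > 0$, hence $M(s\cdot i)(o) > 0$. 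As in the first case, $i\in E$ gives $T(s\cdot i) = M(s\cdot i)$, so $T(s\cdot i)(o) > 0$.

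The only delicate part is the bookkeeping in the second case. One has to be careful that every route by which a trace can enter $S$ preserves observability — the counterexample route depending on the convention of \cref{rem:minimal_cex} that counterexamples are observable test sequences — and that, once the table is closed and consistent, the cell $T(s\cdot i)$ has actually been populated (so that identifying $T(s\cdot i)$ with $M(s\cdot i)$ is legitimate, which also re-derives $T(s\cdot i)\neq\bot$ for $s\in S$). Isolating the two auxiliary facts ``$S$ consists of observable traces'' and ``$\In\subseteq E$'' reduces everything else to routine unfolding.
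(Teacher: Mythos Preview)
Your proposal is correct and follows essentially the same approach as the paper: both arguments trace the property back to the two ways traces can enter $S$ (promotion from $\LongTr(S)$ when fixing closedness, and prefixes of an observable counterexample via \cref{rem:minimal_cex}), together with the defining condition $\odq(s\cdot i)(o)>0$ on $\LongTr(S)$. The only difference is organisational: the paper splits by \emph{how} a trace entered $S$, while you split by \emph{where} $s\cdot i\cdot o$ lives and isolate the auxiliary invariant ``every trace in $S$ is observable''; your version is slightly more explicit about why $T(s\cdot i)$ is populated and why the decomposition $s\cdot i\cdot o$ is unique, which is harmless extra care.
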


\begin{proof}
 
The lemma states that traces labelling rows are observable.
\cref{alg:exact_l_star_mdp} adds elements to $S$ and consequently $\LongTr(S)$ in two cases: (1) if an equivalence query 
returns a counterexample and (2) to make observation tables closed.

\paragraph{Case 1.}
Counterexamples $c \in \TestSeq$ returned by equivalence queries $\eq(\mathcal{H})$ satisfy $M(c) \neq \bot$ (see also \cref{rem:minimal_cex}).  In \cref{algline:exact_l_star_mdp:add_cex_prefix} of \cref{alg:exact_l_star_mdp}, we add $t_p$ to $S$ for each $t_p \cdot i_p \in \prefixes(c)$. 
Due to prefix-closedness of $\semdom(M)$, $M(t_p \cdot i_p) \neq \bot$ for all $t_p \cdot i_p \in \prefixes(c)$, and therefore $M(s \cdot i)(o) = T(s \cdot i)(o) > 0$ for each added trace $t_p$ of the form $t_p = s\cdot i \cdot o$ with $i \in \In$ and $o\in \Out$. The set $\LongTr(S)$ is implicitly extended by all observable extensions of added $t_p$. By this definition, $\LongTr(S)$ contains only traces $t = s\cdot i \cdot o$ such that $T(s \cdot i)(o) > 0$. 

\paragraph{Case 2.}
If an observation table is not closed, we add traces from $\LongTr(S)$ to $S$. As noted above, all traces $t = s\cdot i \cdot o$ in $\LongTr(S)$ satisfy $T(s \cdot i)(o) > 0$. Consequently, all traces added to $S$ satisfy this property as well. 
\end{proof}

\begin{theorem}[Minimality]
\label{theorem:consistent_and_minimal}
 Let $\langle S,E,T\rangle$ be a closed and consistent observation table and let 
 $\mathcal{H} = \mathrm{hyp}(S,E,T)$ be a hypothesis derived from that table with semantics $H$.
 Then $\mathcal{H}$ is consistent with $T$, that is, $\forall s \in (S \cup \LongTr(s)) \cdot E : T(s) = H(s)$,
 and any other \gls*{MDP} consistent with $T$ but inequivalent to $\mathcal{H}$ must have more states.
\end{theorem}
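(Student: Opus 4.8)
The plan is to follow Angluin's classical argument, adapted to the MDP setting. The statement has two parts: (i) consistency of $\mathcal{H} = \mathrm{hyp}(S,E,T)$ with the table, i.e.\ $T(s) = H(s)$ for all $s \in (S \cup \LongTr(S))\cdot E$; and (ii) minimality, i.e.\ any MDP consistent with $T$ that is inequivalent to $\mathcal{H}$ has strictly more states. I would first establish a \emph{reaching lemma}: for every short trace $s \in S$, the hypothesis state reached by executing $s$ from ${q_0}_\mathrm{h}$ is exactly $\langle \lastOut(s), \row(s)\rangle$. This is proved by induction on $|s|$; the base case uses $S$'s initialisation to $\{o_0\}$, and the inductive step uses prefix-closedness of $S$ together with \cref{lem:pos_prob} (so the relevant transition probability $T(s'\cdot i)(o)$ is positive and the transition in $\delta_\mathrm{h}$ is defined) and the definition of $\delta_\mathrm{h}$. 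A small extension of the same lemma handles long traces $s\cdot i\cdot o \in \LongTr(S)$: executing such a trace reaches $\langle o, \row(s\cdot i\cdot o)\rangle$, and by closedness this equals the state $\langle\lastOut(s''),\row(s'')\rangle$ for some $s''\in S$ with $\equivrow_E(s\cdot i\cdot o, s'')$.

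For part (i), given the reaching lemma, take any $s\cdot e$ with $s \in S\cup\LongTr(S)$ and $e = i_1\cdot o_1\cdots i_k\cdot o_k \in E$. Executing $s$ reaches a hypothesis state whose $\row$-component is $\row(s)$ (by the reaching lemma, using closedness for the long-trace case to land on an $S$-state). Then one walks along $e$: at each step $\delta_\mathrm{h}$ assigns probability $T(\cdot\,i_j)(o_j)$ by construction, and consistency of the table guarantees that $\equivrow_E$ is preserved along $i_j\cdot o_j$, so the $\row$-components stay synchronised. Multiplying the one-step probabilities and comparing with the definition of $H$ on a test sequence (via \cref{def:mdp_semantics}) yields $H(s\cdot e) = T(s\cdot e)$. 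The suffix-closedness of $E$ is what makes the intermediate continuations $i_{j+1}\cdots o_k$ available as columns so the argument can recurse.

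For part (ii), suppose $\mathcal{N}$ is an MDP with semantics $N$ consistent with $T$ (meaning $N(s) = T(s)$ for all $s \in (S\cup\LongTr(S))\cdot E$) and $\mathcal{N} \not\equiv_\mathrm{od} \mathcal{H}$. The claim is $|Q_\mathcal{N}| \ge |S/{\equivrow_E}| + 1 > |Q_\mathrm{h}|$, or more precisely $\mathcal{N}$ has at least as many states as there are distinct rows, and strictly more if it is genuinely inequivalent. The key observation is that for two short traces $s_1,s_2\in S$ with $\lnot\equivrow_E(s_1,s_2)$ there is a column $e\in E$ (or differing last output) witnessing $T(s_1\cdot e)\ne T(s_2\cdot e)$; since $\mathcal{N}$ is consistent with $T$, this forces $N(s_1\cdot e)\ne N(s_2\cdot e)$, hence $s_1$ and $s_2$ reach \emph{distinct} states in $\mathcal{N}$ (states reachable by $M$-inequivalent traces must differ). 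Thus $\mathcal{N}$ has at least $|S/{\equivrow_E}| = |Q_\mathrm{h}|$ states. To get \emph{strict} inequality, one argues that if $\mathcal{N}$ had exactly $|Q_\mathrm{h}|$ states then the map sending each $S$-state of $\mathcal{N}$ to its $\row$ would be a bijection, and tracing transitions through $\LongTr(S)$ (again using closedness and consistency of the table, and that $\mathcal{N}$ agrees with $T$ on long traces) would force $\mathcal{N}$ to be isomorphic to $\mathcal{H}$ — contradicting inequivalence.

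The main obstacle I expect is the strict-minimality step: turning "$\mathcal{N}$ has at least $|Q_\mathrm{h}|$ states" into "more states, unless isomorphic." One has to handle partiality carefully — non-observable traces, the $\bot$ cases, and the fact that $\Delta$ and $\delta^*$ are partial — and make sure the counting/isomorphism argument does not secretly assume $\mathcal{N}$ is reachable or input-enabled in a convenient way. The cleanest route is probably to show directly that a state-minimal MDP consistent with $T$ and inequivalent to $\mathcal{H}$ is impossible, by exhibiting, from the inequivalence witness (a test sequence on which $N$ and $H$ differ), a trace that $\mathcal{N}$ must distinguish from all of $S$, which would require an extra state. Everything else (the reaching lemma, part (i), the $\ge$ bound in part (ii)) is a fairly direct transcription of Angluin's proof, with \cref{lem:pos_prob}, closedness, and consistency doing the bookkeeping that the DFA proof gets for free.
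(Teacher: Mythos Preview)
Your plan is correct and mirrors the paper's proof almost exactly: the paper first proves your ``reaching lemma'' (\cref{lem:delta_row}: $\delta^*(q_0,s) = \langle \lastOut(s),\row(s)\rangle$ for all $s \in S \cup \LongTr(S)$), then consistency with $T$ by induction on $|e|$ (\cref{lem:consistent}), then the lower bound $|Q'| \ge n$ for any $T$-consistent $\mathcal{M}'$ (\cref{lem:minimal}), and finally that any $T$-consistent $\mathcal{M}'$ with exactly $n$ states is isomorphic to $\mathcal{H}$ (\cref{lem:isomorphic}). Two small remarks: first, ``multiplying the one-step probabilities'' is misleading --- $H(s\cdot e)$ is the output distribution after reaching a state and applying the final input, not a path probability, so the induction on $|e|$ works by tracking which state you land in (via closedness and consistency), not by accumulating a product; second, for strict minimality the paper takes your first route (the bijection/isomorphism argument), not the ``extra state from an inequivalence witness'' route you float at the end --- the latter is awkward because the witness need not lie in $(S \cup \LongTr(S))\cdot E$, so stick with the isomorphism argument.
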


\begin{lemma}
\label{lem:delta_row}
 Let $\langle S,E,T\rangle$ be a closed and consistent observation table. For $\mathcal{H} = \mathrm{hyp}(S,E,T)$ and
 every $s \in S \cup \LongTr(S)$, we have $\delta^*(q_0,s) = \langle \lastOut(s),\row(s)\rangle$. 
\end{lemma}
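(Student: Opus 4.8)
The plan is to prove the statement by induction on the number of inputs occurring in the trace $s$, unfolding the recursive definitions of $\delta^*$ and of $\mathrm{hyp}(S,E,T)$.

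For the base case, $s$ contains no inputs, hence $s = o_0$, the unique trace consisting of the initial \SUL{} output. Here $\delta^*(q_0, L(q_0)) = q_0$ by definition of $\delta^*$, and the hypothesis construction sets ${q_0}_\mathrm{h} = \langle o_0, \row(o_0)\rangle$; since $\lastOut(o_0) = o_0$ this is exactly $\langle \lastOut(s), \row(s)\rangle$.

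For the inductive step, write $s = t \cdot i \cdot o$ with $t \in \Traces$, $i \in \In$, $o \in \Out$. First I would note that $t \in S$: if $s \in \LongTr(S)$ this is immediate from the definition of $\LongTr(S)$, and if $s \in S$ it follows from prefix-closedness of $S$. Thus the induction hypothesis applies to $t$ and yields $\delta^*(q_0, t) = \langle \lastOut(t), \row(t)\rangle$. By the recursive clause for $\delta^*$,
\[
  \delta^*(q_0, s) = \Delta\bigl(\delta^*(q_0, t), i, o\bigr) = \Delta\bigl(\langle \lastOut(t), \row(t)\rangle, i, o\bigr).
\]
By \cref{lem:pos_prob}, $T(t\cdot i)(o) > 0$, so the definition of $\mathrm{hyp}(S,E,T)$ gives $\delta_\mathrm{h}(\langle \lastOut(t), \row(t)\rangle, i)(\langle o, \row(t\cdot i\cdot o)\rangle) = T(t\cdot i)(o) > 0$. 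Moreover $\langle o, \row(t\cdot i\cdot o)\rangle$ is a genuine state of $\mathcal{H}$: closedness provides an $s' \in S$ with $\equivrow_E(t\cdot i\cdot o, s')$, whence $\langle o, \row(t\cdot i\cdot o)\rangle = \langle \lastOut(s'), \row(s')\rangle \in Q_\mathrm{h}$. Since $L_\mathrm{h}(\langle o, \row(t\cdot i\cdot o)\rangle) = o$ and $\mathcal{H}$ is deterministic (a consequence of consistency of the table), this state is the unique witness determining the value of $\Delta$, so
\[
  \Delta\bigl(\langle \lastOut(t), \row(t)\rangle, i, o\bigr) = \langle o, \row(t\cdot i\cdot o)\rangle = \langle \lastOut(s), \row(s)\rangle,
\]
closing the induction.

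The main obstacle, and the reason closedness and consistency are hypotheses of the lemma, is ensuring that $\delta^*$ is well-defined on $\mathcal{H}$ along $s$ and lands on the intended row-labelled state: \cref{lem:pos_prob} guarantees $T(t\cdot i)(o) > 0$ so that $\Delta$ does not return $\bot$, closedness guarantees the target state actually belongs to $Q_\mathrm{h}$, and determinism (from consistency) makes $\Delta$ single-valued. A small but essential bookkeeping point is that $t \in S$ rather than merely $t \in S \cup \LongTr(S)$, since $\mathrm{hyp}(S,E,T)$ defines transitions only out of short traces; this is what lets the hypothesis construction be applied at $t$ in the inductive step.
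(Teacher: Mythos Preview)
Your proof is correct and follows essentially the same approach as the paper: induction on the length of $s$, with the inductive step decomposing $s = t \cdot i \cdot o$, observing that $t \in S$ by prefix-closedness (or by definition of $\LongTr(S)$), and invoking \cref{lem:pos_prob} together with the definitions of $\delta_\mathrm{h}$ and $L_\mathrm{h}$ to evaluate $\Delta$. Your explicit remarks on why closedness and consistency are needed (membership of the target state in $Q_\mathrm{h}$ and determinism of $\mathcal{H}$) go slightly beyond what the paper spells out, but the core argument is the same.
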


\begin{proof}

Similarly to \cite{DBLP:journals/iandc/Angluin87}, we prove this by induction on the trace length $k$, i.e. the number of 
outputs in $s$. For $k = 1$, i.e. $s = o$, where $o$ is the initial output, we have $\delta^*(q_0,o) = 
\delta^*(\langle o,\row(o)\rangle,o) = \langle o,\row(o)\rangle$. 

Assume that for every $s \in S \cup \LongTr(S)$ of length at most $k$, $\delta^*(q_0,s) = \langle \lastOut(s),\row(s)\rangle$.
Let $t \in S \cup \LongTr(S)$ of length $k+1$, i.e. $t = s \cdot i \cdot o_t$, for $s \in \Traces, i\in \In, o_t \in \Out$.
If $t \in \LongTr(S)$ 
then $s$ must be in $S$, and if $t \in S$, then $s \in S$ because $S$ is prefix-closed. 
\begin{align*}
 \delta^*(q_0,s \cdot i \cdot o_t) &=  \Delta(\delta^*(q_0, s),i,o_t) \\
   &=  \Delta(\langle\lastOut(s),\row(s)\rangle,i,o_t) \tag{by induction hypothesis}\\
 &= \langle o_t, \row(s \cdot i \cdot o_t)\rangle \tag{definition of $\Delta$}
 \\ &\text{ if }
 \delta(\langle\lastOut(s),\row(s)\rangle,i)(\langle o_t, \row(s \cdot i \cdot o_t)\rangle) > 0 \\
 &\text{ and }  L(\langle o_t, \row(s \cdot i \cdot o_t)\rangle) = o_t \\
  \cline{1-2}
  \delta(\langle\lastOut(s),\row(s)\rangle,i)&(\langle o_t, \row(s \cdot i \cdot o_t)\rangle) > 0\\
  &\Leftrightarrow T(s\cdot i)(o_t) > 0 \tag{construction of $\delta$} \\
  &\Leftrightarrow \true \tag{\cref{lem:pos_prob} } \\
   L(\langle o_t, \row(s \cdot i \cdot o_t)\rangle) &= o_t \\ &\Leftrightarrow \true \tag{construction of $L$}
\end{align*}

\end{proof}

\begin{lemma}
\label{lem:consistent}
 Let $(S,E,T)$ be a closed and consistent observation table. Then $\mathrm{hyp}(S,E,T)$ is consistent with $T$, i.e.
 for every $s \in S\cup \LongTr(S)$ and $e \in E$ we have $T(s \cdot e) = H(s \cdot e)$.
\end{lemma}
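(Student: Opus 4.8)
The plan is to proceed by induction on the length $k$ of the continuation sequence $e \in E$, exactly as in Angluin's original argument, using \cref{lem:delta_row} to pin down which hypothesis state a trace $s$ reaches. First I would fix a closed and consistent table $\langle S,E,T\rangle$ and $\mathcal{H} = \mathrm{hyp}(S,E,T)$ with semantics $H$, and note that by \cref{lem:delta_row} every $s \in S \cup \LongTr(S)$ satisfies $\delta^*(q_0,s) = \langle \lastOut(s),\row(s)\rangle$; in particular $H(s) = \mathbf{1}_{\{\lastOut(s)\}}$, which handles the degenerate "empty continuation" bookkeeping and also shows $H$ is defined on $s$.

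For the base case $k = 1$, a continuation $e \in E$ of length one is a single input $i \in \In$ (recall $E$ is initialised to $\In$ and only ever grows by suffix-closed additions $i\cdot o \cdot e'$, so length-one elements of $E$ are inputs). Here I would compute $H(s \cdot i)(o)$ directly from the definition of $\mathrm{hyp}$: since $\mathcal{H}$ is at state $\langle \lastOut(s),\row(s)\rangle$ after $s$, the semantics gives $H(s\cdot i)(o) = \delta_\mathrm{h}(\langle\lastOut(s),\row(s)\rangle,i)(\langle o,\row(s\cdot i\cdot o)\rangle)$, which by construction of $\delta_\mathrm{h}$ equals $T(s\cdot i)(o)$ whenever $T(s\cdot i)(o) > 0$, and equals $0$ otherwise; hence $H(s\cdot i) = T(s\cdot i) = \row(s)(i)$. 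I would also need the $\bot$ case: if $T(s\cdot i) = \bot$ then $s \cdot i \notin (S\cup\LongTr(S))\cdot E$ is excluded, or — more carefully — for $s \in S$ we have $T(s\cdot i)\neq\bot$ by the input-enabledness built into $\odq$, and for long traces one argues via \cref{lem:non_observable_prefix}; so the $\bot$ situation only arises in a way consistent with $H$'s defined domain.

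For the inductive step, take $e \in E$ of length $k+1$. Since $E$ is suffix-closed, $e = i \cdot o \cdot e'$ with $i \in \In$, $o \in \Out$, and $e' \in E$ of length $k-1$ (hence $e'$ itself, being a suffix, lies in $E$). There are two cases. If $T(s\cdot i)(o) = 0$, then by the base-case computation $H(s\cdot i)(o) = 0$ as well, so executing $i$ from $\delta^*(q_0,s)$ cannot produce output $o$; by \cref{lem:prob_0_semantics_imply_bot} we get $H(s\cdot i \cdot o \cdot e') = \bot = M$-side value, and I would check $T(s \cdot i \cdot o \cdot e') = \bot$ correspondingly (this is where the prefix-closedness of the defined domain and the way $T$ is filled via $\odq$ come in). If $T(s\cdot i)(o) > 0$, then $s \cdot i \cdot o \in \LongTr(S)$ when $s \in S$; using \cref{lem:delta_row} again, $\mathcal{H}$ reaches $\langle o,\row(s\cdot i\cdot o)\rangle$ after $s\cdot i\cdot o$, so $H(s \cdot i \cdot o \cdot e') = H'(e')$ where $H'$ denotes the semantics of $\mathcal{H}$ started from that state, and this equals $\row(s\cdot i\cdot o)(e') = T(s\cdot i\cdot o\cdot e')$ by the induction hypothesis applied to the trace $s\cdot i\cdot o \in S\cup\LongTr(S)$ and the shorter continuation $e'$.

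The main obstacle I anticipate is the bookkeeping around $\bot$ and around whether $s\cdot i\cdot o$ actually labels a row: the induction hypothesis is stated for $s \in S \cup \LongTr(S)$, but in the inductive step I need it for $s\cdot i\cdot o$, which is in $\LongTr(S)$ only when $s \in S$ (not when $s$ is already a long trace). This is handled by observing that $T$ is only defined on $(S\cup\LongTr(S))\cdot E$, so the statement for a long trace $s$ and continuation $i\cdot o\cdot e'$ requires $s \cdot i \cdot o \cdot e' \in (S\cup\LongTr(S))\cdot E$, which forces $s \in S$; alternatively one restricts attention to $s \in S$ in the induction and derives the long-trace cases as one-step extensions. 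Carefully stating the induction so that this restriction is clean — and simultaneously tracking that $H$'s defined domain matches $T$'s wherever $T$ is defined — is the delicate part; the rest is a routine unfolding of the definitions of $\mathrm{hyp}$ and of \gls*{MDP} semantics.
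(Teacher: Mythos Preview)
Your overall strategy---induction on the length of $e$ using \cref{lem:delta_row}---matches the paper's, and the base case is fine. The gap is in the inductive step, precisely at the obstacle you flag but do not resolve.

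Your first proposed fix is wrong: if $s \in \LongTr(S)$ and $e = i\cdot o\cdot e' \in E$, then $s\cdot e$ \emph{is} in $(S\cup\LongTr(S))\cdot E$, so nothing forces $s \in S$. The lemma genuinely must cover long traces $s$, and for such $s$ the extension $s\cdot i\cdot o$ need not label any row, so you cannot invoke the induction hypothesis on it. Your second fix (prove it for $s \in S$, then extend) runs into the same issue one level down: in the step for $s \in S$, the extension $s\cdot i\cdot o$ lies in $\LongTr(S)$, not $S$, so an induction hypothesis restricted to $S$ still does not apply directly.

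The missing ingredient is \emph{closedness}. The paper's move is: given any $s \in S\cup\LongTr(S)$, pick $s' \in S$ with $\equivrow_E(s,s')$ (closedness guarantees one exists). Then $s$ and $s'$ reach the same hypothesis state by \cref{lem:delta_row}, so $H(s\cdot e) = H(s'\cdot e)$; and $\row(s) = \row(s')$ gives $T(s\cdot e) = T(s'\cdot e)$. Now $s' \in S$, so either $s'\cdot i\cdot o \in S\cup\LongTr(S)$ and the induction hypothesis applies to it with the shorter $e'$, or $s'\cdot i\cdot o \notin S\cup\LongTr(S)$, which forces $T(s'\cdot i)(o) = 0$ and both sides are $\bot$. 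You never invoke closedness, and without it the induction does not go through.
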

\begin{proof}

We will prove this by induction on the length of $e$, i.e. the number of inputs of $e$. 
As induction hypothesis, we assume $T(s \cdot e) = H(s \cdot e)$ for all $s \in S\cup \LongTr(S)$ and $e \in E$ of length at most $k$. 
For the base case, we consider $e$ consisting of a single input, i.e. $e \in \In$. 
From \cref{def:mdp_semantics} we can derive that $H(s \cdot i) \neq \bot$ if $\delta^*(s) \neq \bot$, then we have:
\begin{align*}
 H(s \cdot i)(o) &= \delta(\delta^*(s),i)(q) \text{ with } L(q) = o \\
 &= \delta(\langle \lastOut(s), \row(s)\rangle,i)(q) \text{ with } L(q) = o \tag{\cref{lem:delta_row} } \\
 &= \delta(\langle \lastOut(s), \row(s)\rangle,i)(\langle o, \row(s \cdot i \cdot o)\rangle) \tag{hypothesis construction} \\
 &= T(s \cdot i)(o) \tag{hypothesis construction}
\end{align*}

For the induction step, let $e \in E$ be of length $k+1$, thus it is of the form $e = i \cdot o \cdot e_k$ for $i \in \In$, $o \in \Out$, 
and due to suffix-closedness of $E$, $e_k \in E$. We have to show that $T(s \cdot e) = H(s \cdot e)$ for $s \in S\cup \LongTr(S)$.
Let $s' \in S$ such that $\equivrow_E(s,s')$, which exists due to observation table closedness. Traces $s$ and $s'$ 
lead to the same hypothesis state because:
\begin{align*}
  \delta^*(q_0,s) &= \langle \lastOut(s), \row(s)\rangle \tag{\cref{lem:delta_row} } \\
  &=\langle \lastOut(s'), \row(s')\rangle \tag{$\equivrow_E(s,s')$} \\
  &=\delta^*(q_0,s')\tag{\cref{lem:delta_row} }
\end{align*}
Thus, $s$ and $s'$ are $H$-equivalent and therefore $H(s \cdot e) = H(s' \cdot e)$.
Due to $\equivrow_E(s,s')$, $T(s \cdot e) = T(s' \cdot e)$ and in combination:
\begin{equation*}
 T(s \cdot e) = H(s \cdot e) 
\Leftrightarrow T(s' \cdot e) = H(s' \cdot e) 
\Leftrightarrow T(s' \cdot i \cdot o \cdot e_k) = H(s' \cdot i \cdot o \cdot e_k)
\end{equation*}

\paragraph{Case 1.} Suppose that $s' \cdot i \cdot o \in S \cup \LongTr(S)$. Then, $T(s' \cdot i \cdot o \cdot e_k) = H(s' \cdot i \cdot o \cdot e_k)$ holds by the induction hypothesis, as $e_k$ has length $k$. 

\paragraph{Case 2.} Suppose that $s' \cdot i \cdot o \notin S \cup \LongTr(S)$. Since $s' \in S$ and by the definition of $\LongTr(S)$, we have $\odq(s'\cdot i)(o) = M(s'\cdot i)(o) = 0$.
By \cref{lem:prob_0_semantics_imply_bot}, if follows that  $M(s' \cdot i \cdot o'\cdot e) = \bot$ for any continuation $e \in \ContSeq$.
As the observation table is filled via $\odq$ we have $T(s' \cdot i \cdot o \cdot e_k) = \odq(s' \cdot i \cdot o \cdot e_k) = M(s' \cdot i \cdot o \cdot e_k) = \bot$.
By the induction base, we have $H(s' \cdot i) = T(s'\cdot i)$ for $i\in \In$, thus $H(s'\cdot i)(o) = T(s'\cdot i)(o)= 0$. With \cref{lem:prob_0_semantics_imply_bot}, we can conclude $H(s' \cdot i \cdot o  \cdot e_k) = \bot$. 

In both cases, it holds that $H(s' \cdot i \cdot o  \cdot e_k) = T(s' \cdot i \cdot o  \cdot e_k)$ which is equivalent to $H(s \cdot e) = T(s \cdot e)$.
\end{proof}

With \cref{lem:consistent}, we have shown consistency between derived hypotheses and the queried information. 
Now, we show that hypotheses are minimal with respect to the number of states.

\begin{lemma}
 \label{lem:minimal}
  Let $\langle S,E,T \rangle$ be a closed and consistent observation table and let $n$ be the number of different
  values for $\langle\lastOut(s), \row(s)\rangle$, i.e. the number of states of hypothesis $\mathrm{hyp}(S,E,T)$. 
  Any \gls*{MDP} consistent with $T$ must have at least $n$ states. 
\end{lemma}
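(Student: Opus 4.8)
The plan is to argue exactly as Angluin does in the classical $L^*$ proof~\cite{DBLP:journals/iandc/Angluin87}: show that any MDP $\mathcal{M}'$ whose semantics $M'$ is consistent with the table $T$ must realise at least $n$ pairwise-distinguishable "behaviours" among the traces in $S$, hence has at least $n$ states. First I would fix an MDP $\mathcal{M}' = \langle Q', \In, \Out, q_0', \delta', L'\rangle$ with semantics $M'$ such that $M'(s) = T(s)$ for all $s \in (S \cup \LongTr(S)) \cdot E$, and consider the map $s \mapsto \delta'^*(q_0', s)$ defined on $S$. Note this is well defined: each $s \in S$ is observable in $\mathcal{M}'$ because, by \cref{lem:pos_prob}, every prefix-transition of $s$ has $T(s' \cdot i)(o) > 0 = M'(s' \cdot i)(o)$, so $\delta'^*$ does not hit $\bot$ along $s$.

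The key step is to show this map is injective modulo the distinguishing power of $E$: if $s_1, s_2 \in S$ satisfy $\delta'^*(q_0', s_1) = \delta'^*(q_0', s_2)$, then $\equivrow_E(s_1, s_2)$ holds. Indeed, if the two traces reach the same state $q'$ of $\mathcal{M}'$, then $\lastOut(s_1) = L'(q') = \lastOut(s_2)$, and for every $e \in E$ we have $M'(s_1 \cdot e) = M'(s_2 \cdot e)$ (both equal the output distribution obtained by feeding $e$ into $\mathcal{M}'$ from state $q'$, via \cref{def:mdp_semantics} and the fact that $\delta'^*$ only depends on the current state). Since the table is consistent with $M'$, this gives $\row(s_1)(e) = T(s_1 \cdot e) = M'(s_1 \cdot e) = M'(s_2 \cdot e) = T(s_2 \cdot e) = \row(s_2)(e)$ for all $e \in E$, i.e. $\row(s_1) = \row(s_2)$, so $\equivrow_E(s_1, s_2)$. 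Contrapositively, whenever two traces in $S$ have $\lnot\equivrow_E(s_1,s_2)$ — and there are $n$ such $\equivrow_E$-classes by hypothesis — they must be mapped to distinct states of $\mathcal{M}'$. Picking one representative trace from each of the $n$ classes yields $n$ distinct states of $\mathcal{M}'$, so $|Q'| \ge n$.

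I expect the main obstacle to be the bookkeeping around observability and the $\bot$ cases: one has to be careful that $\delta'^*(q_0', s)$ is genuinely defined for every $s \in S$ (handled via \cref{lem:pos_prob} as above) and that the identity $M'(s \cdot e) = M'(s' \cdot e)$ when $\delta'^*(q_0',s) = \delta'^*(q_0',s')$ is justified directly from \cref{def:mdp_semantics} — essentially that $\delta'^*$ extended by a continuation $e$ factors through the reached state. This is the MDP analogue of the "$\row$ is a function of the Nerode class" observation, and it is where determinism of the MDP (so that $L'$ is invertible along observable traces and $\delta'^*$ is well defined) is used. Everything else is a direct transcription of Angluin's counting argument, with $\equivrow_E$ in place of the table-row equivalence for DFAs.
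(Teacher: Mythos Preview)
Your proposal is correct and follows essentially the same approach as the paper: both argue that two traces $s_1,s_2\in S$ with $\lnot\equivrow_E(s_1,s_2)$ cannot reach the same state in any \gls*{MDP} $\mathcal{M}'$ consistent with $T$, since either $\lastOut(s_1)\neq\lastOut(s_2)$ forces different labels or some $e\in E$ with $T(s_1\cdot e)\neq T(s_2\cdot e)$ forces different future behaviour. Your version is slightly more explicit about the observability bookkeeping (invoking \cref{lem:pos_prob} to ensure $\delta'^*(q_0',s)\neq\bot$), which the paper's proof leaves implicit, but the core argument is identical.
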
 

\begin{proof}
Let $\mathcal{M}' = \langle Q',\In, \Out,q_0', \delta', L'\rangle$ with semantics $M'$ be an \gls*{MDP} consistent with $T$. 
Let $s_1,s_2 \in S$ such $\lnot \equivrow_E(s_1,s_2)$, then (1) $\lastOut(s_1) \neq \lastOut(s_2)$ or (2) $\row(s_1) \neq \row(s_2)$.
If $\lastOut(s_1) \neq \lastOut(s_2)$, then $s_1$ and $s_2$ cannot reach the same state in $\mathcal{M}'$, because the states reached by $s_1$ and $s_2$ need to be labelled differently. 
If $\row(s_1) \neq \row(s_2)$, then there exists an $e\in E$ such that $M'(s_1 \cdot e) \neq M'(s_2 \cdot e)$, because $\mathcal{M}'$ is consistent with $T$. 
In this case $s_1$ and $s_2$ cannot reach the same state, as the observed future behaviour is different. Consequently, $\mathcal{M}'$ has at least $n$ states.
\end{proof}


\begin{lemma}
\label{lem:isomorphic}
 Let $\langle S,E,T \rangle$ be a closed and consistent observation table and $\mathcal{H} = \mathrm{hyp}(S,E,T)$ be a hypothesis with $n$ states
 derived from it. Any other \gls*{MDP} $\mathcal{M}' = \langle Q',\In, \Out,q_0', \delta', L'\rangle$ with semantics $M'$ consistent with $T$,
 initial output $L'(q_0)$, and 
 with $n$ or fewer states is isomorphic to $\mathrm{hyp}(S,E,T)$.
\end{lemma}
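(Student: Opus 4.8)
The plan is to exhibit an explicit isomorphism $\varphi$ from $\mathcal{H} = \mathrm{hyp}(S,E,T)$, whose states are the pairs $\langle\lastOut(s),\row(s)\rangle$ for $s \in S$, to $\mathcal{M}'$. The natural candidate sends the state represented by $s$ to the state ${\delta'}^*(q_0',s)$ of $\mathcal{M}'$ reached by the trace $s$; by \cref{lem:delta_row} this is unambiguous even when $s$ ranges over all of $S \cup \LongTr(S)$, since the $\mathcal{H}$-state reached by such an $s$ is exactly $\langle\lastOut(s),\row(s)\rangle$. As a preliminary step I would check that ${\delta'}^*(q_0',s) \neq \bot$ for every $s \in S \cup \LongTr(S)$: proceeding along the prefixes of $s$, each one-step extension occurring in such a trace has positive probability under $T$ by \cref{lem:pos_prob}, hence positive probability in $\mathcal{M}'$ because $\mathcal{M}'$ is consistent with $T$ (the relevant one-step test sequences lie in the domain of $T$, as the algorithm keeps $\In \subseteq E$); so $s$ is observable in $\mathcal{M}'$.

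The heart of the argument is that $\varphi$ is a well-defined bijection $Q \to Q'$, and this is the place where the bound $|Q'| \le n$ enters. One direction is the separation argument already used for \cref{lem:minimal}: if ${\delta'}^*(q_0',x_1) = {\delta'}^*(q_0',x_2)$ for $x_1,x_2 \in S\cup\LongTr(S)$, then these traces reach the same $\mathcal{M}'$-state, so they have the same last output (the labels agree) and, by consistency with $T$, $T(x_1\cdot e) = M'(x_1\cdot e) = M'(x_2\cdot e) = T(x_2\cdot e)$ for every $e\in E$; hence $\equivrow_E(x_1,x_2)$. The converse implication --- $\equivrow_E(x_1,x_2)$ forces the same $\mathcal{M}'$-state, i.e.\ $\varphi$ is well defined --- is not immediate. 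Restricting the assignment $s\mapsto {\delta'}^*(q_0',s)$ to $S$, the implication just shown says its fibres each sit inside a single $\equivrow_E$-class; since the $n$ classes of $S/\equivrow_E$ are exactly the states of $\mathcal{H}$ and they partition $S$ into unions of these fibres, there are at least $n$ fibres, whereas there are at most $|Q'|\le n$ of them; hence there are exactly $n$ fibres, each $\equivrow_E$-class is a single fibre, the assignment is surjective onto $Q'$, and it is constant on each $\equivrow_E$-class. Extending from $S$ to $\LongTr(S)$ is then easy: by closedness every $l\in\LongTr(S)$ satisfies $\equivrow_E(l,s)$ for some $s\in S$, and combining surjectivity with the separation implication yields ${\delta'}^*(q_0',l) = {\delta'}^*(q_0',s)$. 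This gives a well-defined bijection $\varphi$ with $\varphi(\langle\lastOut(x),\row(x)\rangle) = {\delta'}^*(q_0',x)$ for all $x\in S\cup\LongTr(S)$.

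It remains to verify that $\varphi$ preserves the MDP structure, which is routine. For the initial state, $\varphi({q_0}_{\mathrm h}) = {\delta'}^*(q_0',o_0) = q_0'$, using the hypothesis $L'(q_0') = o_0$. For the labelling, $L'(\varphi(\langle\lastOut(s),\row(s)\rangle)) = L'({\delta'}^*(q_0',s)) = \lastOut(s) = L_{\mathrm h}(\langle\lastOut(s),\row(s)\rangle)$. For transitions, take $s\in S$, $i\in\In$, $o\in\Out$ with $T(s\cdot i)(o) = p>0$; then $s\cdot i\cdot o\in\LongTr(S)$, the $\mathcal{H}$-transition from $\langle\lastOut(s),\row(s)\rangle$ on $i$ reaches $\langle o,\row(s\cdot i\cdot o)\rangle$ with probability $p$, and by consistency $M'(s\cdot i)(o) = T(s\cdot i)(o) = p$, so from $\varphi(\langle\lastOut(s),\row(s)\rangle) = {\delta'}^*(q_0',s)$ the MDP $\mathcal{M}'$ assigns probability $p$ to the $o$-labelled successor ${\delta'}^*(q_0',s\cdot i\cdot o) = \varphi(\langle o,\row(s\cdot i\cdot o)\rangle)$ --- here determinism of $\mathcal{M}'$ is what guarantees this mass lands on a single state. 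The entries with $T(s\cdot i)(o) = 0$ contribute no transition on either side, and since the outgoing probabilities on each input sum to $1$ in both models, $\varphi$ carries $\delta_{\mathrm h}$ exactly onto $\delta'$, so $\varphi$ is an isomorphism. I expect the main obstacle to be precisely the well-definedness step: turning the one-sided separation property into a genuine bijection $S/\equivrow_E \to Q'$ via the counting argument, which is the only point where $|Q'| \le n$ is really used.
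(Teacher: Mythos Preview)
Your proof is correct and follows essentially the same route as the paper: define $\varphi(\langle\lastOut(s),\row(s)\rangle) = {\delta'}^*(q_0',s)$, argue it is a bijection using that $|Q'|=n$ together with the separation argument from \cref{lem:minimal}, and then verify preservation of initial state, labels, and transition probabilities. If anything, you are more careful than the paper on the well-definedness and bijectivity of $\varphi$ --- the paper invokes \cref{lem:minimal} to get $|Q'|=n$ and then simply asserts ``it is bijective'', whereas your pigeonhole argument (fibres of $s\mapsto{\delta'}^*(q_0',s)$ sit inside $\equivrow_E$-classes, and counting forces equality) and your explicit extension to $\LongTr(S)$ via closedness make this step watertight.
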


\begin{proof}
From \cref{lem:minimal}, it follows that $\mathcal{M}'$ has at least $n$ states, therefore we examine $\mathcal{M}'$ with exactly $n$ states. 
For each state of $\mathcal{H}$, i.e. each unique row labelled by $s\in S$, exists a unique state in $Q'$.
We will now define a mapping $\phi$ from short traces to $Q'$ given by $\phi(\langle\lastOut(s),\row(s)\rangle) = {\delta'}^*(q_0',s)$
for $s$ in $S$. It is bijective and we will now show that it maps $q_0$ to $q_0'$, that it preserve the probabilistic transition
relation and that it preserves labelling.

\noindent First, we start with the initial state and show $\phi(q_0) = q_0'$:
\begin{align*}
 \phi(q_0) &= \phi(\langle o,\row(o)\rangle) \text{ where $o$ is the initial output of the \SUL} \\
 &= {\delta'}^*(q_0',o)  \\
 &= q_0' \tag{definition of ${\delta'}^*$}
\end{align*}


For each $s$ in $S$, $i$ in $\In$ and $o \in \Out$. We have:
\begin{align*}
\delta(\langle \lastOut(s),\row(s)\rangle),i)&(\langle \lastOut(s\cdot i \cdot o),\row(s\cdot i \cdot o)\rangle) \\
&= T (s \cdot i)(o) \tag{hypothesis construction}\\
&\text{and} \\
 \delta'(\phi(\langle \lastOut(s),\row(s)\rangle),i)&(\phi(\langle \lastOut(s\cdot i\cdot o),\row(s \cdot i \cdot o)\rangle))) \\
 &= \delta'({\delta'}^*(q_0',s),i)({\delta'}^*(q_0',s \cdot i \cdot o)) \\
 &= M'(s\cdot i)(o) \tag{\cref{def:mdp_semantics}} \\
 &= T(s \cdot i)(o) \tag{$\mathcal{M}'$ is consistent with $T$} \\
 &\text{Transition probabilities are preserved.}
\end{align*}


Finally, we show that labelling is preserved. For all $s$ in $S$:
\begin{align*}
 L'(\phi(\langle \lastOut(s), \row(s)\rangle)) &= L'({\delta'}^*(q_0',s)) \\
 &= \lastOut(s) \tag{definition of ${\delta'}^*$ }\\
 &\text{and }\\
 L(\langle\lastOut(s),\row(s)\rangle) &= \lastOut(s) \tag{definition of $L$, thus } \\
 L'(\phi(\langle \lastOut(s), \row(s)\rangle)) &= L(\langle\lastOut(s),\row(s)\rangle) \\
 &\text{Labelling is preserved by the mapping $\phi$.}
\end{align*}
\end{proof}

\begin{theorem}
The algorithm \LstarMdpE{}\ terminates and returns an \gls*{MDP} $\mathcal{H}$ isomorphic to $\mathcal{M}$, thus, it is minimal and also satisfies
$\mathcal{M} \equiv_\mathrm{od} \mathcal{H}$.
\end{theorem}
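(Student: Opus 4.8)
The plan is to replay Angluin's termination-and-correctness argument for $L^*$, now resting on the Myhill--Nerode-style \cref{theorem:characterisation} for the finite bound on states and on \cref{lem:pos_prob,lem:consistent,lem:minimal,lem:isomorphic} for the progress and rigidity steps. Write $\mathcal{M} = \can(M)$ and let $n_M = |Q|$; by the construction in the proof of \cref{theorem:characterisation}, $n_M$ is exactly the (finite) number of $\equiv_M$-classes of observable traces. The three milestones are: (a) every iteration of the main loop terminates; (b) each processed counterexample strictly increases the number of hypothesis states; (c) the number of hypothesis states never exceeds $n_M$.

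For (c): the hypothesis states are the distinct pairs $\langle\lastOut(s),\row(s)\rangle$ over $s\in S$. By \cref{lem:pos_prob} every trace labelling a row is observable, and since $s_1\equiv_M s_2$ implies $\equivrow_E(s_1,s_2)$ for every $E$, the number of $\equivrow_E$-classes of such traces is at most $n_M$ no matter how large $S$ and $E$ grow. For (a): in the inner \textbf{while} loop, each call of \textsc{MakeClosedAndConsistent} strictly increases the number of distinct $\equivrow_E$-rows over $S$ — fixing non-closedness adds a long trace whose row differs from every short-trace row by the choice in \cref{alg:make_closed_and_cons}, and fixing non-consistency adds a continuation $i\cdot o\cdot e'$ picked precisely so that it separates the rows of the offending $s_1,s_2$. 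With the cap from (c) this loop runs at most $n_M$ times; \textsc{fill} issues only finitely many $\odq$ calls, and the equivalence query is a single call, so each main-loop iteration is finite.

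For (b), the crux: when $\eq(\mathcal{H})$ returns a counterexample we may take it, by \cref{rem:minimal_cex}, to be an observable $\mathit{cex}\in\TestSeq$ with $H(\mathit{cex})\neq M(\mathit{cex})$ and $M(\mathit{cex})\neq\bot$. Adding all trace-prefixes of $\mathit{cex}$ to $S$ keeps $S$ prefix-closed (those prefixes form a $\ll$-chain), and after the inner loop restores closedness and consistency the new table $\langle S',E',T'\rangle$ has $\mathit{cex}$ in its domain with $T'(\mathit{cex}) = \odq(\mathit{cex}) = M(\mathit{cex})$. By \cref{lem:consistent} the new hypothesis $\mathcal{H}'$ is consistent with $T'$, so $H'(\mathit{cex}) = M(\mathit{cex})\neq H(\mathit{cex})$, hence $\mathcal{H}'\not\equiv_\mathrm{od}\mathcal{H}$. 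In the exact setting every filled table is a restriction of $M$, so $T'$ extends $T$; thus $\mathcal{H}'$ is also consistent with the old $T$ and has the same initial output $o_0$. If $\mathcal{H}'$ had no more states than $\mathcal{H} = \mathrm{hyp}(S,E,T)$, \cref{lem:isomorphic} would force $\mathcal{H}'\cong\mathcal{H}$, contradicting $\mathcal{H}'\not\equiv_\mathrm{od}\mathcal{H}$; therefore $\mathcal{H}'$ has strictly more states. Combined with (c), at most $n_M-1$ counterexamples occur, so the \textbf{repeat} loop terminates.

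Finally, on termination $\eq(\mathcal{H}) = \mathit{yes}$, i.e. $\mathcal{H}\equiv_\mathrm{od}\mathcal{M}$, so $H = M$ as functions and $\mathcal{M}\equiv_\mathrm{od}\mathcal{H}$. Since $T$ is a finite restriction of $M = H$, the canonical $\mathcal{M} = \can(M)$ is consistent with $T$, has initial output $o_0$, and — being canonical — is minimal, so it has at most $n$ states where $n$ is the state count of $\mathcal{H}$; by \cref{lem:minimal} it has at least $n$. Applying \cref{lem:isomorphic} with $\mathcal{M}' = \mathcal{M}$ gives $\mathcal{M}\cong\mathrm{hyp}(S,E,T) = \mathcal{H}$, hence $\mathcal{H}$ is isomorphic to $\mathcal{M}$, therefore minimal, and output-distribution equivalent to the \SUL{}. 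The step needing the most care is (b): one must preclude that re-establishing closedness and consistency after inserting the counterexample collapses the distinction the counterexample introduced; this is secured not by a direct row count but by pairing consistency of $\mathcal{H}'$ with $T'$ (\cref{lem:consistent}) with the isomorphism rigidity of \cref{lem:isomorphic}.
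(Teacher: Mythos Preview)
Your proof is correct and follows essentially the same approach as the paper: both establish termination by showing each counterexample strictly increases the hypothesis state count (via consistency of $\mathcal{H}'$ with the old $T$ and the rigidity of \cref{lem:isomorphic}/\cref{theorem:consistent_and_minimal}), bound that count by the size of $\can(M)$, and derive isomorphism at termination from \cref{lem:isomorphic}. You are more explicit than the paper about termination of the inner \textsc{MakeClosedAndConsistent} loop (your milestone (a)), which the paper leaves implicit, but the overall structure and the key lemmas invoked are the same.
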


\end{full}

\begin{conference}
 \begin{theorem}
\LstarMdpE{}\ terminates and learns an \gls*{MDP} $\mathcal{H}$ that is output-distribution equivalent to the \SUL{} and minimal
in the number of states (proof~\cite{lstar_mdp_tech_report}).
\end{theorem}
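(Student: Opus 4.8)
The plan is to follow the classical termination-and-correctness scheme for $L^*$~\cite{DBLP:journals/iandc/Angluin87}, instantiated with the lemmas already proved for \LstarMdpE{}. Throughout, let $M$ be the semantics of the \SUL{}, let $\mathcal{M} = \can(M)$ be the canonical \gls*{MDP}, and let $n$ be its number of states; $n$ is finite because $\equiv_M$ has finite index (\cref{theorem:characterisation}). The single invariant driving everything is: at every point of the run the filled table satisfies $T(s) = \odq(s) = M(s)$, so $\mathcal{M}$ is an \gls*{MDP} consistent with $T$, and hence by \cref{lem:minimal} the current hypothesis has at most $n$ states, i.e.\ at most $n$ distinct pairs $\langle\lastOut(s),\row(s)\rangle$ for $s \in S$.

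For termination I would argue first within a round and then across rounds. Within a round, \cref{alg:make_closed_and_cons} is applied only while the table is not closed or not consistent; each application either adds a short trace carrying a row value not seen before (non-closedness branch) or adds a column that separates two previously equal rows (consistency branch), so each application strictly increases the number of distinct rows. Since that number is capped by $n$, only finitely many such steps occur before a hypothesis $\mathcal{H}$ and an equivalence query are produced. Across rounds, I would show that processing a counterexample $\mathit{cex}$ strictly increases the hypothesis state count. After adding all trace-prefixes of $\mathit{cex}$ to $S$ and filling, we have $\mathit{cex} = t\cdot i$ with $t\in S$ and $i\in\In\subseteq E$, so $\mathit{cex}\in S\cdot E$. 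Suppose the next hypothesis $\mathcal{H}'$ had no more states than $\mathcal{H}$. Because $S$ and $E$ only grow, $\equivrow_E$ only refines, so the same number of equivalence classes forces $\mathcal{H}'$ to be isomorphic to $\mathcal{H}$, i.e.\ $H' = H$; but $\mathcal{H}'$ is derived from a closed and consistent table, so \cref{lem:consistent} gives $H'(\mathit{cex}) = T(\mathit{cex}) = M(\mathit{cex})$, contradicting $H(\mathit{cex})\neq M(\mathit{cex})$. Hence each counterexample adds at least one state; with at most $n$ states overall, there are at most $n$ rounds, so \LstarMdpE{} halts.

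For correctness, note that upon return the last equivalence query answered \emph{yes}, so $\mathcal{H}\equiv_\mathrm{od}\mathcal{M}$, i.e.\ $H = M$. Let $m$ be the number of states of the returned $\mathcal{H}$. By \cref{lem:consistent}–\cref{lem:minimal}, $\mathcal{H}$ is a $T$-consistent \gls*{MDP} and $\mathcal{M}$ is $T$-consistent with $n$ states, so $n\ge m$; conversely $\mathcal{H}$ realises the semantics $M$ and $\can(M)=\mathcal{M}$ is minimal for $M$, so $m\ge n$; hence $m = n$. Then $\mathcal{M}$ is consistent with $T$, has initial output matching that of $\mathcal{H}$, and has exactly $n = m$ states, so \cref{lem:isomorphic} yields that $\mathcal{M}$ is isomorphic to $\mathcal{H}$. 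Isomorphism immediately gives that $\mathcal{H}$ is minimal, and $\mathcal{M}\equiv_\mathrm{od}\mathcal{H}$ is just the \emph{yes} answer restated.

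I expect the main obstacle to be the counterexample step in the termination argument: carefully justifying that enlarging $S$ (with counterexample prefixes) and $E$ can only refine $\equivrow_E$, so that "the hypothesis did not gain a state" genuinely means "the hypothesis is isomorphic to the previous one", which is exactly what makes \cref{lem:consistent} collide with the defining property of the counterexample. A minor but essential point supporting this is that $E$ is initialised to all of $\In$, guaranteeing $\mathit{cex}\in (S\cup\LongTr(S))\cdot E$ once its prefixes have been absorbed into $S$. The remaining pieces (the per-round finiteness and the final sandwiching of $m$ between $n$ and $n$) are routine given the lemmas.
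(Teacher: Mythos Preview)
Your proposal is correct and follows essentially the same scheme as the paper: termination via bounded growth of the hypothesis state count across rounds, correctness from the final \emph{yes} answer combined with \cref{lem:isomorphic}. The only noteworthy deviation is in the across-rounds termination step. The paper argues it more directly: since nothing is removed from $S$, $E$, $T$, the new hypothesis $\mathcal{H}'$ is still consistent with the \emph{old} $T$; since $H'(\mathit{cex}) = T'(\mathit{cex}) = M(\mathit{cex}) \neq H(\mathit{cex})$ it is inequivalent to $\mathcal{H}$; then \cref{theorem:consistent_and_minimal} forces $\mathcal{H}'$ to have strictly more states. This sidesteps precisely the ``same number of classes $\Rightarrow$ isomorphic'' refinement unpacking you flag as the main obstacle---so you may simply replace that step by an appeal to \cref{theorem:consistent_and_minimal} (or \cref{lem:isomorphic}) and drop the refinement argument. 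Your explicit within-round termination argument (each call to \textsc{MakeClosedAndConsistent} strictly increases the number of distinct $\langle\lastOut,\row\rangle$ pairs, bounded by $n$ via \cref{lem:minimal}) is a welcome addition that the paper leaves implicit.
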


\end{conference}

\begin{full}
 
\paragraph{Termination.}
Let $\langle S,E,T\rangle$ be a closed and consistent observation table and let $c \in \TestSeq$ be a counterexample to equivalence between 
$\mathcal{M}$ and hypothesis $\mathrm{hyp}(S,E,T)$ with semantics $H$. Since $c$ is a counterexample, $M(c) \neq H(c)$.
Now let $\langle S',E',T'\rangle$ be an observation table extended by adding all prefix traces of $c$ to $S$ and (re-)establishing closedness and consistency. 
For $\mathrm{hyp}(S',E',T') = \mathcal{H}'$ with semantics $H'$,
we have $T'(c) = M(c)$ due to output distribution queries. Since $\mathcal{H}'$ is consistent with $T'$, we have $T'(c) = H'(c) = M(c)$.
Hence, $H'(c) \neq H(c)$, which shows that $\mathcal{H}'$ is not equivalent to $\mathcal{H}$, with $c$ being a counterexample to equivalence.  
We do not remove elements from $S$, $E$, or $T$, thus $\mathcal{M}'$ is also consistent with $T$. Therefore, $\mathcal{M}'$
must have at least one state more than $\mathcal{M}$ according to Theorem~\ref{theorem:consistent_and_minimal}. 
It follows that each round of learning, which finds a counterexample, adds at least one state.
Since Algorithm \ref{alg:exact_l_star_mdp} derives minimal hypotheses
and $\mathcal{M}$ can be modelled with finitely many states, there can only be finitely many rounds that find counterexamples.
Hence, we terminate after a finite number of rounds, because Algorithm \ref{alg:exact_l_star_mdp} returns the final hypothesis as soon as no counterexample 
can be found via equivalence queries $\eq$.

\paragraph{Correctness.}
The algorithm terminates when the equivalence query $\eq(\mathcal{H})$ does not find any new counterexample between the final hypothesis $\mathcal{H}$ and $\mathcal{M}$.
Since there is no counterexample, we have $\mathcal{H} \equiv_\mathrm{od}\mathcal{M}$. \cref{theorem:consistent_and_minimal} states that $\mathcal{H}$ is minimal and $\mathcal{M} = \can(M)$ is consistent with $T$, therefore it follows from \cref{lem:isomorphic} that $\mathcal{H}$ is isomorphic to $\mathcal{M}$ the canonical \gls*{MDP} modelling the \SUL{}.

\end{full}

\section{Learning MDPs by Sampling}
\label{sec:method_non_exact}

\begin{full}
In this section, we introduce \LstarMdp{}, an approximate sampling-based learning method for \glspl*{MDP} based
on \LstarMdpE{}. In contrast to \LstarMdpE{}, which requires exact information, we place 
weaker assumptions on the teacher. Here, we do not require exact \emph{output distribution} queries and \emph{equivalence} queries,
but we approximate these queries via sampling, i.e. testing. Since large amounts of data are required to produce accurate models, we also alter the learning 
algorithm structure in contrast to the previous section. The sampling-based 
\LstarMdp{} allows to derive an approximate model at any time, unlike most other $L^*$-based algorithms.
Therefore, this section is split into three parts: first, we present a sampling-based interface between teacher and learner, as well as the 
interface between teacher and \gls*{SUL}. The second and third part describe the adapted learner and the implementation of the teacher, respectively.
\end{full}

\begin{conference}
 The sampling-based \LstarMdp{} is based on \LstarMdpE{}, but samples \SUL{} traces instead of posing exact queries. Distribution
 comparisons are consequently approximated through statistical tests.
 While using similar data structures, \LstarMdp{} has a slightly different algorithm structure allowing to stop before reaching exact equivalence.
\end{conference}

\subsubsection{Queries.}

The sampling-based teacher maintains a multiset of traces $\mathcal{S}$ for the estimation of output distributions that grows during learning.
It offers an equivalence query and three queries relating to output distributions and samples $\mathcal{S}$.

\begin{compactitem}

\item \textsl{frequency ($\mathbf{fq}$)}:
given a test sequence $s \in \TestSeq$, $\fq(s) : \Out \to \mathbb{N}_0$ are output frequencies observed after $s$, where $\fq(s)(o) = \samples(s \cdot o)$ for $o\in\Out$.

\item \textsl{complete ($\mathbf{cq}$)}: given a test sequence $s \in \TestSeq$, $\cq(s)$ returns $\true$ if sufficient information is available
to estimate an output distribution from $\fq(s)$; returns $\false$ otherwise. 

 \item \textsl{refine ($\rsq$)}:
instructs the teacher to refine its knowledge of the \SUL{} by testing it directed towards rarely observed samples.
Traces sampled by $\rsq$ are added to $\mathcal{S}$, increasing the accuracy of subsequent probability estimations.
 
 \item \textsl{equivalence ($\mathbf{eq}$)}: given a hypothesis $\mathcal{H}$, $\eq$ tests for output-distribution equivalence
 between the \gls*{SUL} and $\mathcal{H}$;
 returns a counterexample from $\TestSeq$ showing non-equivalence, or returns \emph{none} if no counterexample was found. 
\end{compactitem}
\begin{conference}
To implement these queries, we require the ability to reset the \SUL{}, to perform a single input on the \SUL{}
and to observe the \SUL{} output.
\end{conference}

\begin{full}
The sampling-based teacher thus needs to implement two different testing strategies, one for increasing accuracy of probability estimations
along observed traces (\emph{refine}) and one for finding discrepancies between a hypothesis and the \gls*{SUL} (\emph{equivalence}).
The \emph{frequency} query and the \emph{complete} query are used for hypothesis construction by the learner. 

To test the \SUL, we require the ability to (1) reset it and to (2) perform an input action and observe the produced 
output. 
For the remainder of this section, let $\mathcal{M} = \langle  Q,\In, \Out,q_0, \delta, L \rangle$ be the \gls*{MDP} 
underlying the \SUL{} with semantics $M$. 
Based on $q \in Q$, the current execution state of $\mathcal{M}$, we define two operations available to the teacher:
\begin{description}
 \item [reset] resets $\mathcal{M}$ to the initial state, i.e. $q = q_0$, and returns $L(q_0)$.
 \item [step] takes an input $i \in \In$ and selects a new state $q'$ according to $\delta(q,i)(q')$. 
 The \emph{step} operation then updates the execution state to $q'$ and returns $L(q')$. 
\end{description}
Note that we consider $\mathcal{M}$ to be a black box, i.e. its structure and transition probabilities are assumed to be unknown.
We are only able to perform inputs and observe output labels, e.g., we observe the initial \SUL{} output $L(q_0)$ after performing 
a \textbf{reset}.
\end{full}

\subsection{Learner Implementation}
\label{sec:learner_impl}

\subsubsection{Observation Table.}
\begin{full}
 The sampling-based learner is also based on observation tables, therefore we use the 
same terminology as in \cref{sec:method_exact}.
\end{full}

\begin{conference}
\LstarMdp{} also uses observation tables. They carry similar information as in \cref{sec:method_exact}, 
but instead of output distributions in $\Dist(\Out)$,
 we store integral output frequencies  $(\Out \rightarrow \mathbb{N}_0)$, from which we estimate distributions.
\end{conference}

\begin{definition}[Sampling-based Observation Table]
 An observation table is a tuple $\langle S,E,\Ta\rangle$, consisting of a prefix-closed set of traces $S \subset \Traces$,
 a suffix-closed set of continuation sequences $E \subset \ContSeq$, and a mapping 
 $\Ta : (S \cup \LongTr(S)) \cdot E \to (\Out \rightarrow \mathbb{N}_0)$, where 
 $\LongTr(S) = \{s \cdot i \cdot o\, |\, s \in S, i \in \In, o \in \Out: \fq(s \cdot i)(o) > 0\}$. 
\end{definition}
\begin{full}
An observation table can be represented by a two-dimensional array, 
containing rows labelled with elements of $S$ and $\LongTr(S)$ and columns labelled by $E$.
Each table cell corresponds 
to a sequence $c = s \cdot e$, where $s \in S \cup \LongTr(S)$ is the row label of the cell and $e \in E$ is the
column label. It stores queried output frequency counts $\Ta(c)= \fq(c)$.
To represent the content of rows, we define the function $\row$ on $S \cup \LongTr(S)$ by $\row(s)(e) = \Ta(s \cdot e)$. 
The traces in $\LongTr(S)$ are input-output-extensions of $S$ which have been observed so far. 
We refer to traces in $S$/$\LongTr(S)$ as short/long traces. Analogously,
we refer to rows labelled by corresponding traces as short and long rows.

As in \cref{sec:method_exact}, we identify states with traces reaching these states.
These traces are stored in the prefix-closed set $S$. 
We distinguish states by their future behaviour in response to sequences 
in $E$.
We initially set $S = \{L(q_0)\}$, where $L(q_0)$ is the initial output of the \gls*{SUL},  
and $E = \In$. 
Long traces, as extensions of access sequences in $S$, serve to define transitions of
hypotheses.
\end{full}

\subsubsection{Hypothesis Construction.}
\begin{full}
 As in \cref{sec:method_exact}, observation tables need to be closed and consistent for a hypothesis to be constructed. 
Unlike before, we do not have exact information to determine equivalence of rows. We need to  statistically test if rows are different. 
First, we give a condition determining whether two sequences lead to statistically different observations, i.e. the corresponding output frequency samples come from different distributions. This condition is based on Hoeffding bounds which are also used by Carrasco and Oncina~\cite{DBLP:journals/ita/CarrascoO99}. 
We further apply this condition in a check for approximate equivalence between cells and extend this check to rows. 
Using similar terminology to \cite{DBLP:journals/ita/CarrascoO99}, we refer to such checks as compatibility checks and 
we say that two cells/rows are compatible if we determine that they are not statistically different.
These notions of compatibility serve as the basis for slightly adapted definitions of closedness and consistency.
\end{full}

\begin{conference}
  As in \cref{sec:method_exact}, observation tables need to be closed and consistent for a hypothesis to be constructed. 
Here, we test statistically if cells and rows are approximately equal, referred to as compatible. 
The statistical tests applied in \cref{def:different} are based on Hoeffding bounds, as in~\cite{DBLP:journals/ita/CarrascoO99}. 
\cref{def:compatibility} serves as basis for adapted notions of closedness and consistency.
\end{conference}

\begin{definition}[Different]
\label{def:different}
  Two sequences $s$ and $s'$ in $\TestSeq$ produce statistically different output distributions with respect to $f : \TestSeq \rightarrow (\Out \to \mathbb{N}_0)$, denoted $\diff_\textit{f}(s,s')$, iff (1) $\complete(s) \land \complete(s') \land n_1 > 0 \land n_2>0$
  where $n_1 = \sum_{o \in \Out} f(s)(o)$, $n_2 = \sum_{o \in \Out} f(s')(o)$, and (2) one of the following conditions holds: 
  \begin{compactenum}[2a.]
  \item $\exists o \in \Out : \lnot (f(s)(o) > 0 \Leftrightarrow f(s')(o) > 0)$, or
  \item $\exists o \in \Out: \left| \frac{f(s)(o)}{n_1} - \frac{f(s')(o)}{n_2} \right| > \left(\sqrt{\frac{1}{n_1}}+\sqrt{\frac{1}{n_2}}\right) \cdot \sqrt{\frac{1}{2} \ln \frac{2}{\alpha}}$, where $\alpha$ specifies the confidence level $(1-\alpha)^2$ for testing each $o$ separately based on a Hoeffding bound~\cite{DBLP:journals/ita/CarrascoO99,10.2307/2282952}. 
 \end{compactenum}
 
\end{definition}

\begin{definition}[Compatible]
\label{def:compatibility}
 Two cells labelled by $c = s \cdot e$ and $c' = s' \cdot e'$ are compatible, denoted $\compatible(c,c')$, iff $\lnot \diff_{\Ta}(c,c')$.
Two rows labelled by $s$ and $s'$ are compatible, denoted $\compatible_E(s,s')$
iff $\lastOut(s) = \lastOut(s')$ and the cells corresponding to all $e \in E$ are compatible, i.e. $\compatible(s \cdot e,s' \cdot e)$.
\end{definition}

\subsubsection{Compatibility Classes.}

In Sect.~\ref{sec:method_exact}, we formed equivalence classes of traces with respect to $\equivrow_E$ creating one hypothesis state per equivalence class. 
Now we partition rows labelled by $S$ based on compatibility. Compatibility given by \cref{def:compatibility}, however, is not an equivalence relation, as it is not transitive in general. 
As a result, we cannot simply create equivalence classes. We apply the heuristic implemented by \cref{alg:create_comp_classes} to partition $S$. 
\begin{algorithm}[t]
 \begin{algorithmic}[1]
  \ForAll{$s \in S$} 
  \State $\mathrm{rank}(s)\gets \sum_{i\in \In} \sum_{o\in \Out} \Ta(s\cdot i)(o)$ 
  \EndFor
 \State $\textit{unpartitioned} \gets S$, 
  $R \gets \emptyset$
\While{$\textit{unpartitioned} \neq \emptyset$}
 \State $r \gets m$ where $m \in \textit{unpartitioned}$ with largest $\mathrm{rank}(m)$
 \State $R \gets R \cup \{r\}$
 \State $\cg(r) \gets \{s \in \textit{unpartitioned}\, |\, \compatible_E(s,r)\}$
 \ForAll{$s \in \cg(r)$}
 \State $\repr{s} \gets r$ \label{algline:create_comp_classes:rep_notation}
 \EndFor
 \State $\textit{unpartitioned} \gets \textit{unpartitioned}~ \setminus \cg(r)$
 \EndWhile
 \end{algorithmic}
\caption{Creating compatibility classes}
\label{alg:create_comp_classes}
\end{algorithm}

First, we assign a rank to each trace in $S$. 
Then, we partition $S$ by iteratively selecting the trace $r$ with the largest rank  
and computing a \emph{compatibility class} $\cg(r)$ for $r$. The trace $r$ is the (canonical) representative for $s$ in $\cg(r)$, which we
denote by $\repr{s}$ (Line~\ref{algline:create_comp_classes:rep_notation}). Each $r$ is stored in the set of representative traces
$R$. 
In contrast to equivalence classes, elements in a compatibility class need not be pairwise compatible
and an $s$ may be compatible to multiple representatives, where the unique representative $\repr{s}$ of $s$ has the largest rank. 
\begin{full}
However, in the limit $\compatible_E$ based on Hoeffding bounds converges to an equivalence relation~\cite{DBLP:journals/ita/CarrascoO99}
and therefore compatibility classes are equivalence classes in the limit (see Sect.~\ref{sec:convergence}).
\end{full}

\begin{definition}[Sampling Closedness]
  An observation table $\langle S,E,\Ta\rangle$ is\linebreak closed if for all $l \in \LongTr(S)$ there is a representative $s \in R$ with
 $\compatible_E(l,s)$.
\end{definition}

\begin{definition}[Sampling Consistency]
 An observation table $\langle S,E,\Ta\rangle$ is consistent if for all compatible pairs of short traces $s,s'$ in $S$ 
 and all input-output pairs $i\cdot o \in \In \cdot \Out$, 
 we have that (1) at least one of their extensions has not been observed yet, i.e. $\Ta(s \cdot i)(o) = 0$ or $\Ta(s' \cdot i)(o) = 0$, or
 (2) both extensions are compatible, i.e. $\compatible_E(s \cdot i \cdot o,s' \cdot i \cdot o)$. 
\end{definition}
\begin{full}
Note that the first condition of consistency may be satisfied because of incomplete information. 
\end{full}
Given a closed and consistent observation table $\langle S,E,\Ta
\rangle$, we derive hypothesis \gls*{MDP} 
$\mathcal{H} = \mathrm{hyp}( S,E,\Ta )$
through the steps below. 
Note that extensions $s\cdot i \cdot o$ of $s$ in $S$ define transitions.
Some extensions may have few observations, i.e. $\Ta(s\cdot i)$ is low and $\complete(s \cdot i) = \false$. In case 
of such uncertainties, we add transitions to a special sink state labelled by $\mathrm{chaos}$, an output not in the original alphabet\footnote{This is inspired by 
the introduction of chaos states in $\mathbf{ioco}$-based learning~\cite{DBLP:journals/eceasst/VolpatoT15}.}.
A hypothesis is a tuple $\mathcal{H} = \langle Q_h,\In, \Out \cup \{\mathrm{chaos}\},{q_0}_h, \delta_h, L_h\rangle$ 
where:

\begin{compactitem}
 \item representatives for long traces $l \in \LongTr(S)$ are given by (see \cref{alg:create_comp_classes}): \\ 
 $\repr{l} = r$ where $r \in \{r' \in R \mathrel\vert \compatible_E(l,r')\}$ with largest $\mathrm{rank}(r)$
 \item $Q_h = \{\langle\lastOut(s),\row(s)\rangle \mathrel| s \in R \} \cup \{q_\mathrm{chaos}\}$, 
 \begin{compactitem}
\item for $q = \langle o,\row(s)\rangle \in Q_h \setminus \{q_\mathrm{chaos}\}$: $L_h(q) = o$
\item for $q_\mathrm{chaos}$: $L_h(q_\mathrm{chaos}) = \mathrm{chaos}$ and for all $i \in \In$:
  $\delta_h(q_\mathrm{chaos},i)(q_\mathrm{chaos}) = 1$
 \end{compactitem}
 \item ${q_0}_h = \langle L(q_0),\row(L(q_0))\rangle$
 \item for $q = \langle o, \row(s) \rangle \in Q_h \setminus \{q_\mathrm{chaos}\}$ and $i \in \In$ (note that $\In \subseteq E$):
 \begin{compactenum}
  \item If $\lnot \complete(s \cdot i)$:  
    $\delta(q,i)(q_\mathrm{chaos}) = 1$, i.e. move to chaos 
  \item Otherwise estimate a distribution $\mu = \delta_h(q,i)$ over the successor states: \\
    for $o \in \Out$ with $\Ta(s \cdot i)(o) > 0$: 
   $\mu(\langle o,\row(\repr{s \cdot i \cdot o})\rangle) = \frac{\Ta(s \cdot i)(o)}{\sum_{o'\in \Out} \Ta(s \cdot i)(o')}$
 \end{compactenum}
\end{compactitem}


\subsubsection{Updating the Observation Table.}

Analogously to \cref{sec:method_exact}, we make observation tables closed by adding new short 
rows and we establish consistency by adding new columns. 
While \cref{alg:exact_l_star_mdp} needs to fill the observation 
table after executing $\textsc{MakeClosedAndConsistent}$, this is not required in the sampling-based setting due to 
the adapted notions of closedness and consistency.

\emph{Trimming the Observation Table.}
 Observation table size greatly affects learning performance, therefore it is common to avoid adding redundant information~\cite{DBLP:journals/iandc/RivestS93,DBLP:conf/rv/IsbernerHS14}.
Due to inexact information, this is hard to apply in a stochastic setting.
We instead remove rows via a function \textsc{Trim}, once we are certain that this does not change the hypothesis. 
\begin{full}
Given an observation table $\langle S, E, \Ta\rangle$, we remove $s$ and 
all $s'$  such that $s \ll s'$ from $S$ if:
\begin{enumerate}
 \item there is exactly one $r\in R$ such that $\compatible_E(s,r)$ 
 \item $s \notin R$ and $\forall r \in R: \lnot (s \ll r)$ 
 \item and $\forall s' \in S, i \in \In$, with $s\ll s'$: $\diff_\fq(s' \cdot i,r\cdot i) = \false$, where $r \in R$ such that $\langle \lastOut(r), \row(r)\rangle = \delta_h^*(r) = \delta_h^*(s')$, and $\delta_h$ is the transition relation of $\mathrm{hyp}(S,E,\Ta)$.
\end{enumerate}
The first condition is motivated by the observation that if $s$ is compatible to exactly one $r$, 
then all extensions of $s$ can be assumed to reach the same states as the extensions of $r$, i.e. we do not need to store $s$ in the observation table. 
The other conditions make sure that we do not remove required rows, because of a spurious compatibility check
in the first condition. The third condition is related to the implementation of equivalence queries and basically checks if an extension $s'$ reveals a difference
between observed frequencies (queried via $\fq$) and frequencies used for hypothesis construction.
Note that removed rows do not affect hypothesis construction. 
\end{full}
\begin{conference}
We remove rows that are (1) not prefixes of representatives $r \in R$, (2) that are compatible to exactly one $r \in R$, and (3) that are not prefixes of counterexamples to equivalence between \SUL{} and hypothesis. 
\end{conference}

\subsubsection{Learning Algorithm.}
Algorithm \ref{alg:overview} implements \LstarMdp{}. 
It first initialises an observation table $\langle S,E,\Ta\rangle$ with the initial \SUL{} output as first row 
and with the inputs $\In$ as columns (Line \ref{algline:non_exact:init}). 
Lines \ref{algline:non_exact:initmq_1} to \ref{algline:non_exact:initmq_2} perform a refine query and then update $\langle S,E,\Ta\rangle$,
which corresponds to output distribution queries in \LstarMdpE{}. Here, the teacher resamples the only known trace $L(q_0)$.
\begin{full}
Resampling that trace consists of observing $L(q_0)$, performing some input and observing another output. 
\end{full}

After that, we perform Lines \ref{algline:non_exact:repeat_1} to \ref{algline:non_exact:repeat_2} until a stopping criterion is reached. We establish 
closedness and consistency of $\langle S,E,\Ta\rangle$ in Line \ref{algline:non_exact:make_closed_and_consistent} to 
build a hypothesis $\mathcal{H}$ in Line~\ref{algline:non_exact:hyp}. 
After that, we remove redundant rows of the observation table via \textsc{Trim} in Line \ref{algline:non_exact:trim}.
Then, we perform an equivalence query, testing for equivalence between \SUL{} and $\mathcal{H}$.
If we find a counterexample, we add all its prefix traces as rows to the observation table like in \LstarMdpE{}.
Finally, we sample new system traces via $\rsq$ to gain more accurate information about the \SUL{} (Lines \ref{algline:non_exact:mq_1} to \ref{algline:non_exact:mq_2}).
Once we stop, we output the final hypothesis.

\begin{algorithm}[t]
 \begin{algorithmic}[1]
  \Require sampling-based teacher capable of answering $\fq, \rsq, \eq$ and $\cq$
  \State $S \gets \{L(q_0)\}$, $E \gets \In$, $\Ta \gets \{\}$ \Comment{initialise observation table} \label{algline:non_exact:init}
  \State perform $\rsq(\langle S,E,\Ta\rangle)$ \label{algline:non_exact:initmq_1} \Comment{sample traces for initial observation table}
  \ForAll{$s \in S \cup \LongTr(S), e \in E$}
  \State $\Ta(s \cdot e) \gets \fq(s \cdot e)$ \Comment{update observation table with frequency information}
  \EndFor \label{algline:non_exact:initmq_2}
  \State $\mathit{round} \gets 0$
  \Repeat \label{algline:non_exact:repeat_1}
  \State $\mathit{round} \gets \mathit{round} + 1$
    \While{$\langle S, E, \Ta\rangle$ not closed or not consistent}
    \State $\langle S,E,\Ta \rangle \gets \Call{MakeClosedAndConsistent}{\langle S,E,\Ta \rangle}$\label{algline:non_exact:make_closed_and_consistent}
    \EndWhile
  \State $\mathcal{H} \gets \mathrm{hyp}( S,E,\Ta )$ \label{algline:non_exact:hyp} \Comment{create hypothesis}
  \State $\langle S, E,\Ta \rangle \gets \Call{trim}{\langle S, E,\Ta \rangle, \mathcal{H}}$ \label{algline:non_exact:trim} \Comment{remove rows that are not needed}
  \State $\mathit{cex} \gets \eq({\mathcal{H}})$
   \If{$\mathit{cex} \neq \textit{none}$} \Comment{we found a counterexample}
      \ForAll{$t \cdot i \in \mathit{prefixes}(\mathit{cex})$ with $i \in \In$}
        \State $S \gets S \cup \{t\}$ \Comment{add all prefixes of the counterexample}
      \EndFor
    \EndIf
    \State perform $\rsq(\langle S,E,\Ta\rangle)$ \label{algline:non_exact:mq_1} \Comment{sample traces to refine knowledge about \SUL{}}
  \ForAll{$s \in S \cup \LongTr(S), e \in E$}
  \State $\Ta(s \cdot e) \gets \fq(s \cdot e)$ \Comment{update observation table with frequency information}
  \EndFor \label{algline:non_exact:mq_2}
  \Until{\Call{stop}{$\langle S, E,\Ta \rangle$, $\mathcal{H}$, $\mathit{round}$}} \label{algline:non_exact:repeat_2} 
  \State \Return $\mathrm{hyp}(S,E,\Ta)$ \Comment{output final hypothesis}
 \end{algorithmic}
\caption{The main algorithm implementing \LstarMdp{}}
\label{alg:overview}
\end{algorithm}



\begin{full}
 
\paragraph{Stopping.}
\LstarMdpE{} and deterministic automata learning usually
stop learning once equivalence between the learned hypothesis and the \SUL{} is achieved, i.e. no counterexample can be found. 
Here, we employ a different stopping criterion, because equivalence can hardly be achieved via sampling. 
Furthermore, we may wish to carry on resampling via $\rsq$ although we did not find a counterexample. Resampling may improve accuracy of a hypothesis 
which is beneficial for the test-case generation in subsequent equivalence queries. 

Our stopping criterion takes uncertainty in compatibility checks into account. 
As previously noted, rows may be compatible to multiple other rows. In particular, 
a row labelled by $s$ may be compatible to multiple representatives, i.e. we are not certain 
which state is reached by the trace $s$. We address 
this issue by stopping based on the ratio $\runamb$ of unambiguous traces to all traces, which we compute by:
\begin{align*}
\runamb &= \frac{|\{s \in S \cup \LongTr(S): \mathrm{compRep}(s) = 1\}|}{|S \cup \LongTr(S)|} \text{ where } \\
\mathrm{compRep}(s) &= |\{r \in R : \compatible_E(s, r)\}|
\end{align*}

More concretely, we stop if:
\begin{enumerate}
 \item[1.a.] at least $\rmin$ rounds have been executed and 
 \item[1.b.] the chaos state $q_\mathrm{chaos}$ is unreachable and 
 \item[1.c.] and $\runamb \geq \tunamb$, where $\tunamb$ is a user-defined threshold, 
 \item[or]
 \item[2.a.] alternatively we stop after a maximum number of rounds $\rmax$.
\end{enumerate}

\end{full}

\begin{conference}
\emph{Stopping.}
 The exact learner \LstarMdpE{} stops upon reaching equivalence to the \SUL{}, i.e. once there is no counterexample. In the sampling-based setting, we may not find a counterexample due to inaccurate hypotheses. Our stopping criterion therefore takes uncertainty into account, which we quantify with $\runamb$, the relative number of (unambiguous) traces in $S\cup \LongTr(S)$ compatible to exactly one representative in $R$. Additionally, we check if the $\mathrm{chaos}$ state is reachable.
 
 Consequently, we stop when (1) $\runamb \geq \tunamb$ where $\tunamb$ is a user-defined threshold, (2) the chaos state is unreachable, and (3) at least $\rmin$ rounds have been executed. We also stop after a maximum of $\rmax$ rounds. 
\end{conference}

\subsection{Teacher Implementation}

\begin{conference}
 Due to space constraints, we discuss each query only briefly. An accurate description can be found in the full technical report~\cite{lstar_mdp_tech_report}.
 \begin{compactitem}
\item \textsl{frequency ($\mathbf{fq}$)}: returns output frequencies observed in the sampled traces $\samples$.
\item \textsl{complete ($\mathbf{cq}$)}:
  complete queries are based on threshold $\ncomplete$. We consider test sequences complete that have been sampled at least $\ncomplete$ times.
  \item \textsl{refine ($\rsq$)}:
  refine queries take an observation table $\langle S,E,\Ta\rangle$ and resample incomplete sequences in $(S \cup \LongTr(S)) \cdot E$.
  The parameter $\nbatch$ defines how often we resample. 
  \item \textsl{equivalence ($\mathbf{eq}$)}:
  we apply two strategies for equivalence queries. \emph{First}, we test for structural equivalence between
  hypothesis $\mathcal{H}$ and \SUL{}. The testing strategy inspired by~\cite{Aichernig2018} 
performs random walks on $\mathcal{H}$ and has three parameters:
  $\ntest$, the maximum number of tests, $\pstop$, the stop probability, and $\prand$, the probability of choosing inputs uniformly at random.
  \emph{Second}, we check for conformance between the collected samples $\samples$ and $\mathcal{H}$ via $\diff_{\fq}$.
 \end{compactitem}
  Note that we return no counterexample if trivial counterexamples containing $\mathrm{chaos}$ are observable in the hypothesis. This prompts \LstarMdp{} to issue further \emph{refine} queries, causing the chaos state to be unreachable eventually. Otherwise, the observation table might grow unnecessarily which is detrimental to performance. 
\end{conference}

\begin{full}
 
In the following, we describe the implementation of each of the four queries provided by the teacher. 
Recall that we interact with the \SUL{} $\mathcal{M}$ with semantics $M$ (see Sect.~\ref{sec:method_exact}). 

\subsubsection{Frequency Query.}
The teacher keeps track of a multiset of sampled system traces $\mathcal{S}$. Whenever a new a trace is added,
all its prefixes are added as well, as they have also been observed. Therefore, we have for 
$t \in \Traces, t' \in \prefixes(t): \mathcal{S}(t) \leq \mathcal{S}(t')$. 
The frequency query $\fq(s) : \Out \to \mathbb{N}_0$
for $s\in \TestSeq$ returns output frequencies observed after $s$:
\begin{equation*}
 \forall o \in \Out: \fq(s)(o) = \mathcal{S}(s \cdot o)
\end{equation*}

\subsubsection{Complete Query.}
Trace frequencies retrieved via $\fq$ are generally used to compute empirical output distributions $\mu$
following a sequence $s$ in $\TestSeq$, i.e. the learner computes 
$\mu(o) = \frac{\fq(s)(o)}{\sum_{o'\in \Out} \fq(s)(o')}$ to approximate $M(s)(o)$.
The \emph{complete} query $\complete$ takes a sequence $s$ as input and signals whether $s$
should be used to approximate $M(s)$, e.g. to perform statistical tests\footnote{This query serves a similar role as in \cite{DBLP:journals/eceasst/VolpatoT15}.}. 
We base $\complete$ on a threshold $\ncomplete > 0$ by defining:
\begin{equation*}
 \complete(s) = \begin{cases}
                 \true & \text{ if } \sum_{o\in \Out} \mathcal{S}(s\cdot o) \geq \ncomplete \\
                 \true & \text{ if } \exists s',o,i: \text{ s.t. } s'\cdot o \cdot i\ll s \land \complete(s') \land \mathcal{S}(s \cdot o)=0 \\
                 \false & \text{ otherwise }
                \end{cases}
\end{equation*}

Note that for a complete $s$, all prefixes of $s$ are also complete. Additionally,
if $\complete(s)$, we assume that we have seen all extensions of $s$; therefore, we
we set for each $o$ with $\mathcal{S}(s \cdot o) = 0$ all extensions of $s \cdot o$ to be complete (second clause).
The threshold $\ncomplete$ is user-specifiable in our implementation. 

\subsubsection{Refine Query.}
\emph{Refine} queries serve the purpose of refining our knowledge about output distributions along previously 
observed traces. 
Therefore, we select rarely observed traces and resample them. We implemented this through the procedure
outlined in Algorithm~\ref{alg:refine}.

First, we build a trie from rarely observed traces (Lines \ref{algline:rs:rare} and \ref{algline:rs:trie}),
where edges are labelled by input-output pairs and nodes are labelled by traces reaching the nodes. 
This trie is then used for directed online-testing of the \SUL{} via \textsc{sampleSul} (Lines \ref{algline:rs:sampleSul_1} to
\ref{algline:rs:sampleSul_2}) with the
goal of reaching a leaf of the trie. 
In this way, we create $\nbatch$ new samples and add them to the 
multiset of samples $\mathcal{S}$.

\begin{algorithm}[t]
\begin{algorithmic}[1]
 \State $\mathit{rare} \gets \{s \mathrel\vert s \in (S \cup \LongTr(S))\cdot E : \lnot \complete(s) \}$ \Comment{select incomplete sequences} \label{algline:rs:rare}
 \State $\mathit{trie} \gets \Call{buildTrie}{\mathit{rare}}$ \label{algline:rs:trie}
 \For{$i \gets 1 \textbf{ to }\nbatch$} \Comment{collect $\nbatch$ new samples}
 \State $\mathit{newTrace} \gets \Call{sampleSul}{\mathit{trie}}$
 \State $\mathcal{S} \gets \mathcal{S} \uplus \{\mathit{newTrace}\}$
 \EndFor
\Function{sampleSul}{$\mathit{trie}$} \label{algline:rs:sampleSul_1}
\State $\mathit{node} \gets \mathit{root}(\mathit{trie})$
\State $\mathit{trace} \gets \mathbf{reset}$ \Comment{initialise \SUL{} and observe initial output}
\Loop
\State $\mathit{input} \gets \randSel(\{i \in \In|\exists o \in \Out, \mathit{n}: \mathit{node}\xrightarrow{i,o} \mathit{n}\})$ \Comment{random input}
\State $\mathit{output} \gets \mathbf{step}(i)$ \Comment{execute \SUL{} and observe output}
\State $\mathit{trace} \gets \mathit{trace} \cdot i \cdot o$
\If{$\mathit{trace} \notin \mathit{trie}$ or $\mathit{trace}$ labels leaf} \Comment{did we leave the trie?}
\State \Return $\mathit{trace}$
\EndIf
\State $\mathit{node'} \gets n$ with $\mathit{node}\xrightarrow{i,o} \mathit{n}$ \Comment{move in trie}
\State $\mathit{node} \gets \mathit{node'}$
\EndLoop
\EndFunction \label{algline:rs:sampleSul_2}
 \end{algorithmic}
\caption{\emph{Refine} query}
\label{alg:refine}
\end{algorithm}

\subsubsection{Equivalence Query.}

Equivalence queries are often implemented via (conformance) testing in active automata~\cite{DBLP:conf/dagstuhl/HowarS16}, 
e.g., via the W-method~\cite{Chow1978} method for deterministic models.
Such testing techniques generally execute some test suite to find counterexamples to conformance 
between a model and the \SUL. In our setup, a counterexample is a test sequence inducing a
different output distribution in the hypothesis $\mathcal{H}$ than in the \SUL. Since we cannot directly observe those
distributions, we apply two strategies to find counterexamples during equivalence queries. 
First, we search for counterexamples with respect to the structure of $\mathcal{H}$ via testing. 
Second, we check for statistical conformance between all traces $\mathcal{S}$ collected so far and $\mathcal{H}$, which 
allows us to detect incorrect output distributions.

Note that all traces to the 
state $q_\mathrm{chaos}$ are guaranteed to be counterexamples, as $\mathrm{chaos}$
is not part of the original output alphabet $\Out$. For this reason, we do not search for other counterexamples
if $q_\mathrm{chaos}$ is reachable in $\mathcal{H}$. In slight abuse of terminology, we implement this by returning $\mathit{none}$ from $\eq(\mathcal{H})$. 
\LstarMdp{} in \cref{alg:overview} will then issue further $\rsq$ queries, lowering uncertainty about state transitions, which
in turn causes $q_\mathrm{chaos}$ to be unreachable eventually.

\paragraph{Testing of Structure.}
Our goal in testing is to sample a trace of the \SUL{} that is not observable on the hypothesis.
For that, we adapted a randomised testing strategy from Mealy machines to 
\glspl*{MDP}, which proved effective in previous work~\cite{Aichernig2018}.
In this work, we generated test cases for active automata learning by interleaving random walks in hypotheses
with paths leading to randomly chosen transitions. 
By generating many of these tests, we aim at covering 
hypotheses adequately, while exploring new parts of \SUL's state space through random testing. 
Here, we aim at covering randomly chosen states and 
apply an online testing procedure, as the \SUL{} is stochastic. This procedure 
is outlined in Algorithm~\ref{alg:state_coverage_testing}.

\begin{algorithm}[t]
 \begin{algorithmic}[1]
 \Require 
$\mathcal{H} = \langle Q,\In, \Out,q_0, \delta, L\rangle$, schedulers $\mathit{qSched}$
 \Ensure counterexample test sequence $s \in \TestSeq$ or $\mathit{none}$
    \State $q_\mathrm{curr} \gets q_0$  \Comment{current state}
    \State $\mathit{trace} \gets \mathbf{reset}$ 
    \State $q_\mathrm{target} \gets \randSel(\mathit{reachable}(Q,q_\mathrm{curr}))$ \label{algline:test:first_target} \Comment{choose a target state} 
    \Loop
    \If{$\coinFlip(\prand)$}
    \State $\mathit{in} \gets \randSel(\In)$  \label{algline:test:rand_in} \Comment{random next input}
    \Else 
    \State $\mathit{in} \gets \mathit{qSched}(q_\mathrm{target})$  \label{algline:test:target_in} \Comment{next input leads towards target}
    \EndIf
    \State $\mathit{out} \gets \mathbf{step}(\mathit{in})$ \Comment{perform input}
    \State $q_\mathrm{curr} \gets \Delta(q_\mathrm{curr}, \mathit{in} \cdot \mathit{out})$ \Comment{move in hypothesis}
    \If{$q_\mathrm{curr} = \bot$} \Comment{output not possible in hypothesis}
    \State \Return $\mathit{trace} \cdot \mathit{in}$ \label{algline:test:return_cex} \Comment{return counterexample}
    \EndIf
    \State $\mathit{trace} \gets \mathit{trace} \cdot \mathit{in} \cdot \mathit{out}$
    \If{$\coinFlip(\pstop)$} \Comment{stop with probability $\pstop$}
    \State \Return $\mathit{none}$ \label{algline:test:return_none}
    \EndIf
    \If{$q_\mathrm{curr} = q_\mathrm{target}$ or $q_\mathrm{target} \notin \mathit{reachable}(Q,q_\mathrm{curr})$}
    \State $q_\mathrm{target} \gets \randSel(\mathit{reachable}(Q,q_\mathrm{curr}))$ \Comment{choose new target} \label{algline:test:choose_new_target}
    \EndIf
    \EndLoop
 \end{algorithmic}
 \caption{State-coverage-based testing for counterexample detection}
 \label{alg:state_coverage_testing}
\end{algorithm}

The algorithm takes a hypothesis and \textit{qSched} as input, where \textit{qSched} is a mapping from states to schedulers. 
Given $q \in Q$, $\textit{qSched}(q)$ is a scheduler maximising the probability of reaching $q$, i.e. it selects inputs optimally with respect to reachability of $q$. 
For optimal reachability, there exist schedulers that are memoryless and deterministic~\cite{DBLP:conf/sfm/ForejtKNP11}, which means that they take only the last state in the current execution path into account and that input choices are not probabilistic. Therefore, a scheduler $\textit{qSched}(q)$ is a function $s : Q \to \In$.
In \cref{alg:state_coverage_testing}, we start by randomly choosing a target state $q_\mathrm{target}$ from the states
reachable from the initial state (Line~\ref{algline:test:first_target}), which are given by 
$\mathit{reachable}(Q,q_\mathrm{curr})$. Then, we execute the \SUL, either with random inputs (Line~\ref{algline:test:rand_in})
or with inputs leading to the target (Line~\ref{algline:test:target_in}), which are computed using schedulers. 
If we observe an output which is not possible in the hypothesis, we return a counterexample (Line~\ref{algline:test:return_cex}), 
alternatively we may stop with probability $\pstop$ (Line~\ref{algline:test:return_none}). If we reach the target or 
it becomes unreachable, we simply choose a new target state (Line~\ref{algline:test:choose_new_target}).

For each equivalence query, we repeat Algorithm~\ref{alg:state_coverage_testing} up to $\ntest$ times and report
the first counterexample we find. In case we find a counterexample $c$, we resample it up to $\nretest$ times or until $\complete(c)$, 
to get more accurate information about it. 

\paragraph{Checking Conformance to $\mathcal{S}$.}
 For each sequence $t \cdot i \in \TestSeq$ with $i \in \In$ such that $\complete(t \cdot i)$, we check
 for consistency between the information stored in $\samples$ and the hypothesis $\mathcal{H}$ by evaluating two conditions:
 \begin{enumerate}
  \item Is $t$ observable in $\mathcal{H}$? If it is not, then we determine the longest observable prefix $t'$
  of $t$ such that $t' \cdot i' \cdot v = t$, where $i'$ is a single input, and return $t' \cdot i'$ as counterexample from $\eq(\mathcal{H})$.
  \item Otherwise we determine $q = \langle o, \row(r)\rangle$ reached by $t$ in $\mathcal{H}$, where $r \in R$, and return $t \cdot i$ as counterexample 
  if $\diff_\fq(t \cdot i, r \cdot i)$ is true. 
  This statistical check approximates the comparison $M(t \cdot i) \neq M(r \cdot i)$, to check if $t \not\equiv_M r$. 
  Therefore, it checks implicitly $M(t \cdot i) \neq H(t\cdot i)$, as $t \equiv_H r$.
 \end{enumerate}

\end{full}

\begin{conference}
\paragraph{Convergence.}
We have examined convergence of the sampling-based \LstarMdp{} in the limit with respect to the following setup. We configure equivalence testing such that each input is chosen uniformly at random and the length of each test is geometrically distributed. This resembles the sampling regime assumed for \textsc{IoAlergia}~\cite{DBLP:journals/ml/MaoCJNLN16}. Likewise, we consider a data-dependent $\alpha_n = \frac{1}{n^r}$ with $r > 2$, where $n$ is the number of samples collected so far. Finally, we consider \LstarMdp{} without trimming of observation tables. Informally, letting the number of rounds and thus the sample size $n$ approach infinity, we eventually learn the correct~\gls*{MDP}.

\begin{theorem}
\label{theorem:convergence_non_exact}
\LstarMdp{} as configured above creates hypotheses $\mathcal{H}_n$ that are minimal in the number of states and output-distribution equivalent to the \SUL{} in the limit (see Theorem~4 and its proof in~\cite{lstar_mdp_tech_report}). 
\end{theorem}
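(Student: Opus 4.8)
We only sketch the argument; the full proof is given as Theorem~4 in~\cite{lstar_mdp_tech_report}. Fix the sampling measure $\mathbb{P}$ induced by the configured equivalence-testing regime: since every input is drawn uniformly and every test has a geometrically distributed length (full support on $\mathbb{N}_0$), each observable test sequence $s \in \TestSeq$ is produced by a single test attempt with strictly positive probability. Because observation tables are not trimmed in this configuration, $S$, $\LongTr(S)$, $E$ and the sample multiset $\samples$ only grow, and we may regard \LstarMdp{} as running for arbitrarily many rounds, producing the hypothesis sequence $\mathcal{H}_n$. The first step is to show that, almost surely, $\samples(s\cdot o)\to\infty$ for every observable $s\cdot o$, and that the empirical distribution $o\mapsto\fq(s)(o)/\sum_{o'}\fq(s)(o')$ converges to $M(s)$: the positive-probability observation plus the second Borel--Cantelli lemma gives divergence of the counts (refine queries only add further samples), and the strong law of large numbers along the attempts reaching a given prefix gives convergence of the empirical distribution. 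In particular $\cq$ stabilises — every observable $s$ eventually satisfies $\sum_o\fq(s)(o)\ge\ncomplete$, hence $\cq(s)=\true$ forever — whereas non-observable sequences keep count $0$.

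The heart of the proof is to show that, almost surely, there is a (random) round beyond which $\compatible_E$ restricted to the traces in the current table coincides with $\equiv_M$. There are two directions. If $s\equiv_M s'$, the empirical gap $\bigl|\fq(s\cdot e)(o)/n_1-\fq(s'\cdot e)(o)/n_2\bigr|$ tends to $0$ almost surely by the previous paragraph, while the Hoeffding threshold of \cref{def:different} is of order $\sqrt{\ln n / n}$; moreover, by the very way that threshold is derived, the probability of a spurious $\diff$ verdict for a fixed pair in round $n$ is $O(|\Out|\cdot\alpha_n)=O(n^{-r})$. Conversely, if $s\not\equiv_M s'$ with both observable, then either some output eventually separates their empirical supports (clause~2a, using divergence of counts) or $\bigl|M(s\cdot e)(o)-M(s'\cdot e)(o)\bigr|=d>0$ for some $e,o$, in which case the empirical gap converges to $d$ while the threshold vanishes, so clause~2b eventually fires. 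The delicate point — and the main obstacle — is to upgrade the ``fixed pair'' statements into a statement about \emph{all} pairs occurring in the ever-growing table: I would bound, almost surely, the number of rows of the table by a polynomial in the round index (counterexamples, and hence rows, have at most logarithmic length with overwhelming probability), use that $\equiv_M$ has finite index by \cref{theorem:characterisation}, and apply Borel--Cantelli to a union bound over rounds; the parameter choice $r>2$ is tailored precisely so that this bound converges.

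Once compatibility agrees with $\equiv_M$, \cref{alg:create_comp_classes} produces exactly the $\equiv_M$-classes, i.e.\ the states of $\can(M)$, and closedness/consistency become the exact notions of \cref{sec:method_exact}. Since by the first step every observable test sequence eventually becomes complete, hypothesis construction stops introducing transitions to $q_\mathrm{chaos}$ for observable prefixes, so $q_\mathrm{chaos}$ becomes unreachable for all large rounds and $\eq$ no longer short-circuits. It remains to argue that testing-based $\eq$ is complete in the limit: if $\mathcal{H}_n$ is not observable-trace-equivalent to the \SUL{}, a shortest witnessing test sequence has bounded length (the \SUL{} has finitely many states) and is produced with positive probability by the random-walk strategy, hence is found almost surely after finitely many rounds; if the structure is correct but a transition probability is not, the conformance check $\diff_\fq$ against $\samples$ eventually detects it by the first step. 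Exactly as in \cref{sec:exact_convergence}, every counterexample forces a new state while hypotheses are minimal, so only finitely many counterexample rounds occur.

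Combining everything: almost surely there is a round $n_0$ such that for all $n\ge n_0$ the table is closed and consistent, its compatibility classes are the $\equiv_M$-classes, $q_\mathrm{chaos}$ is unreachable, and $\eq$ returns no counterexample. For such $n$ the construction of \cref{sec:learner_impl} coincides with that of \LstarMdpE{} except that transition probabilities are the (converging) empirical estimates rather than the exact values; hence by \cref{lem:minimal} and \cref{lem:isomorphic} the state structure of $\mathcal{H}_n$ is isomorphic to that of $\can(M)$ — so $\mathcal{H}_n$ is minimal in the number of states — and by the first step its transition probabilities converge to those of $\can(M)$, which is precisely output-distribution equivalence to the \SUL{} in the limit.
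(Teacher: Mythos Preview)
Your sketch captures the overall architecture correctly and matches the paper at the top level: show the statistical tests stabilise, show the hypothesis structure then coincides with the exact one, show equivalence queries eventually succeed, and conclude via the law of large numbers for the transition probabilities. The essential difference lies in how you discharge the ``delicate point'' of upgrading per-pair correctness of $\diff$ to correctness over the entire (growing) table.

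The paper does \emph{not} bound the table size. Instead it exploits the assumption --- part of ``as configured above'' --- that $\mathrm{rank}$ is \emph{lexicographic} rather than frequency-based. This forces the representative set $R$ produced by \cref{alg:create_comp_classes} to be prefix-closed (Lemma~7 in the paper), which in turn forces $R$ to lie inside the \emph{fixed finite} set $\overline{S}$ of access sequences (simple paths of $\can(M)$), independently of how large $S_n$ grows. Consequently the only compatibility checks that matter for hypothesis construction are between traces in the finite set $\overline{L}$ of one-step extensions of $\overline{S}$; the Borel--Cantelli argument then runs over at most $O(n)$ Hoeffding tests per round (one per observed sequence times $|\Out|$), and the choice $\alpha_n=n^{-r}$, $r>2$, makes $\sum_n n\alpha_n<\infty$.

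Your route --- bound the number of rows polynomially by arguing that counterexamples have length $O(\log n)$ with overwhelming probability --- is genuinely different and, as stated, has a gap. Counterexamples are returned not only by the random-walk tests (whose lengths are geometric) but also by the conformance check against \emph{all} of $\samples_n$, and $\samples_n$ is fed by refine queries that are \emph{directed} towards incomplete sequences in $(S\cup\LongTr(S))\cdot E$; those targets can be long once $E$ has grown, so the ``logarithmic length'' claim does not follow from the geometric test-length alone. Even if it could be repaired, you would still need to control the growth of $E$ (consistency violations add columns of unbounded length), which you do not address. The paper's finite-$\overline{L}$ device sidesteps all of this and is the cleaner argument; it also explains why the lexicographic-rank simplification is assumed in the first place, something your sketch does not use.

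A minor point: you write that $\compatible_E$ eventually coincides with $\equiv_M$, but what is actually shown (and what suffices) is that $\compatible_E$ coincides with $\equivrow_E$ on the relevant traces; the step from $\equivrow_E$ to $\equiv_M$ is then handled separately via the counterexample mechanism.
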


\end{conference}

\begin{full}

\subsection{Convergence of \LstarMdp}
 
\label{sec:convergence}

In the following, we will show that the sampling-based \LstarMdp{} learns the correct \gls*{MDP}. Based on the notion of language identification in grammar inference~\cite{delaHiguera:2010:GIL:1830440}, we describe our goal as producing an \gls*{MDP} isomorphic to the canonical \gls*{MDP} modelling the \SUL{} with probability one in the limit. 
To show identification in the limit, we introduce slight simplifications. Firstly, we disable trimming of the observation table (see \cref{sec:learner_impl}), i.e. we do not remove rows. Second, we set $\prand = 1$ for equivalence testing and we do not stop at the first detected difference between \SUL{} and hypothesis, but solely based on a $\pstop < 1$; i.e. all input choices are uniformly randomly and the length of each test is geometrically distributed with $\pstop$. 
This is motivated by the common assumption that sampling distributions do not change during learning~\cite{delaHiguera:2010:GIL:1830440}. 
Third, we change the function $\mathrm{rank}$ in \cref{alg:create_comp_classes} to assign ranks based on a lexicographic ordering of traces instead of a rank based on observed frequencies, such that the trace consisting only of the initial \SUL{} output has the largest rank. We actually implemented both types of
$\mathrm{rank}$ functions and found that the frequency-based function led to better accuracy, but would require more complex proofs. 
We let the number of samples for learning approach infinity, therefore we do not use a stopping criterion. Finally, we concretely instantiate $\complete$ by setting $\ncomplete = 1$, since $\ncomplete$ is only relevant for applications in practice. 

\subsubsection{Proof Structure.}

We show convergence in two major steps: (1) we show that the hypothesis structure derived from a sampling-based observation table converges to the
hypothesis structure derived from the corresponding observation table with exact information.
(2) Then, we show that if counterexamples exist, we will eventually
find them. Through that, we eventually arrive at a hypothesis with the same structure 
as the canonical \gls*{MDP} $\can(M)$, where $M$ is the \SUL{} semantics. 
Given a hypothesis with correct structure, it follows 
by the law of large numbers that the estimated transition probabilities converge to true probabilities, thus the hypotheses converge to an \gls*{MDP} isomorphic to $\can(M)$.

A key point of the proofs concerns 
the convergence of statistical test applied by $\diff_f$, which is based on Hoeffding bounds~\cite{10.2307/2282952}. With regard
to that, we apply similar arguments as Carrasco and Oncina~\cite[p.11-13 \& Appendix]{DBLP:journals/ita/CarrascoO99}. 
Given convergence of $\diff_f$, we also rely on the convergence of the exact learning algorithm \LstarMdpE{} discussed in \cref{sec:exact_convergence}.
Another important point is that the shortest traces in each equivalence class of $S/\!\!\equiv_M$ do not form loops in $\can(M)$. Hence, there are finitely many 
such traces. Furthermore, for a given $\can(M)$ and some hypothesis \gls*{MDP}, the shortest counterexample has bounded length, therefore it suffices to check
finitely many test sequences to check for overall equivalence.

\subsubsection{Auxiliary Definitions \& Notation.}
We show convergence in the limit of the number of sampled system traces $n$. We take $n$ into account through a data-dependent $\alpha_n$ for the Hoeffding bounds used by $\diff_f$ defined in Def.~\ref{def:different}. More concretely, let $\alpha_n = \frac{1}{n^r}$ for $r > 2$ as used by Mao et al.~\cite{DBLP:journals/ml/MaoCJNLN16}, which implies $\sum_n \alpha_n n < \infty$. For the remainder of this section, let $\langle S_n, E_n, \Ta_n \rangle$ be the closed and consistent observation table containing the first $n$ samples stored by the teacher in the multiset $\samples_n$. Furthermore, let $\mathcal{H}_n$ be the hypothesis $\mathrm{hyp}( S_n, E_n, \Ta_n)$, let the semantics of the \SUL{} be $M$ and let $\mathcal{M}$ be the canonical \gls*{MDP} $\can(M)$. We say that two \glspl*{MDP} have the same structure, if their underlying graphs are isomorphic, i.e. exact transition probabilities may be different. 

\begin{theorem}[Convergence]
\label{theorem:convergence_non_exact}
Given a data-dependent $\alpha_n = \frac{1}{n^r}$ for $r > 2$, such that $\sum_n \alpha_n n < \infty$, 
then with probability one, the hypothesis $\mathcal{H}_n$ is isomorphic to $\mathcal{M}$, except for finitely many $n$.
\end{theorem}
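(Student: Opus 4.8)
The plan is to carry out the two-stage argument announced in the proof-structure paragraph: first show that, with probability one, the sampling-based observation table eventually mirrors the exact table of \cref{sec:method_exact} at the level of structure, then show that any counterexample still separating the current hypothesis from $\can(M)$ is eventually found, so the hypothesis structure stabilises to that of $\can(M)$; the transition probabilities then converge by the strong law of large numbers. Throughout one conditions on the simplifying regime fixed in \cref{sec:convergence} ($\prand = 1$, geometric test lengths governed by $\pstop$, lexicographic $\mathrm{rank}$ in \cref{alg:create_comp_classes}, no trimming, $\ncomplete = 1$), which makes the sampling distribution stationary and the partition step deterministic given the table.

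The technical heart is the a.s.\ convergence of the test $\diff_f$ of \cref{def:different}, adapting the Carrasco--Oncina argument~\cite{DBLP:journals/ita/CarrascoO99}. Since $\prand = 1$, every observable test sequence is a prefix of a freshly sampled trace with a fixed positive probability in each round, hence (second Borel--Cantelli lemma) is sampled infinitely often; with $\ncomplete = 1$ it then becomes $\complete$, and its empirical output frequencies converge a.s.\ to $M(\cdot)$. Fix observable $s,s'$. If $M(s) = M(s')$: clause 2a of \cref{def:different} is triggered only finitely often (every positive-probability output is eventually seen after both), and by Hoeffding's inequality clause 2b is violated at sample size $n$ with probability at most $|\Out|\,\alpha_n$; the number of test sequences that can enter a non-trivial comparison at round $n$ is $O(n)$ a.s., since the total length of the first $n$ samples is $O(n)$ a.s.\ (the test lengths have geometric tails), so a union bound gives a per-round failure probability $O(n\,\alpha_n)$, which is summable precisely because $\sum_n n\,\alpha_n < \infty$, and the first Borel--Cantelli lemma yields $\diff_{\Ta_n}(s,s') = \false$ for all large $n$. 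If instead $M(s) \neq M(s')$, the empirical distributions converge to distinct limits while the Hoeffding threshold decays like $\Theta(\sqrt{\ln n / n})$, so $\diff_{\Ta_n}(s,s') = \true$ for all large $n$. Consequently, a.s.\ there is a random round past which $\compatible_E$ agrees with $\equiv_M$ on every row of the current table and is in particular an equivalence relation, so compatibility classes are genuine $\equiv_M$-classes and $\repr{\cdot}$ is well defined and canonical.

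Conditioning on this event, I would prove by induction on the rounds that the sampling notions of closedness and consistency coincide with the exact ones, that $\mathrm{hyp}(S_n,E_n,\Ta_n)$ yields — up to the transient chaos state $q_{\mathrm{chaos}}$ — the same structure as the exact $\mathrm{hyp}$ on $\langle S_n,E_n\rangle$, and that $q_{\mathrm{chaos}}$ becomes unreachable once every observable one-step extension of a state access sequence is $\complete$ (which happens eventually). If the resulting structure still differs from that of $\can(M)$, a shortest counterexample test sequence has length bounded uniformly in the size of $\can(M)$ — a standard Myhill--Nerode argument using that $\mathcal{H}_n$ is minimal, hence has no more states than $\can(M)$ (cf.\ \cref{theorem:characterisation,theorem:consistent_and_minimal}) — so it is an observable sequence sampled infinitely often, and either the conformance check $\diff_{\fq}$ against $\samples_n$ or the structural test of \cref{alg:state_coverage_testing} reports it in some later round. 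Adding its prefixes to $S$ and invoking \cref{theorem:consistent_and_minimal} (applicable once the table is accurate) strictly increases the state count of the next stabilised hypothesis; since $\can(M)$ has finitely many states, only finitely many such rounds occur, after which the structure equals that of $\can(M)$ and never changes again — exactly as in the termination argument for \LstarMdpE{} in \cref{sec:exact_convergence}.

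From that round on, $S_n$ contains an access sequence for each state of $\can(M)$ and the structure is fixed; each relevant $s\cdot i$ with $s \in R$ is sampled infinitely often, so by the strong law of large numbers the empirical distribution $\Ta_n(s\cdot i)/\sum_{o'\in\Out}\Ta_n(s\cdot i)(o')$ converges a.s.\ to $\delta(\delta^*(s),i)$; combined with the uniqueness of minimal consistent \glspl*{MDP} (as in \cref{lem:isomorphic}) this gives that $\mathcal{H}_n$ is structurally isomorphic to $\mathcal{M}=\can(M)$ for all but finitely many $n$ and converges to it, i.e.\ it is minimal and output-distribution equivalent to the \SUL{} in the limit. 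I expect the main obstacle to be the uniform control of $\diff_f$ over the growing collection of active test sequences — this is exactly what forces the stronger summability hypothesis $\sum_n n\,\alpha_n < \infty$ rather than $\sum_n \alpha_n < \infty$ — together with ruling out that the non-transitivity of $\compatible_E$ before the stabilisation round lets the table diverge; conditioning on the Step-1 event and using the fixed lexicographic $\mathrm{rank}$ keeps \cref{alg:create_comp_classes} under control here. A secondary delicate point is to verify that the $q_{\mathrm{chaos}}$ device and the convention of returning $\mathit{none}$ from $\eq$ whenever $q_{\mathrm{chaos}}$ is reachable do not suppress genuine counterexamples forever.
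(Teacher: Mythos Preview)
Your proposal is correct and follows essentially the same two-stage architecture as the paper: first establish via Hoeffding bounds and Borel--Cantelli (using the $O(n\alpha_n)$ union bound, which is exactly why $\sum_n n\alpha_n<\infty$ is needed) that $\compatible_E$ eventually coincides with the exact $\equivrow_E$ so that the sampling-based hypothesis acquires the structure of the exact one and the chaos state becomes unreachable; then show that any remaining structural counterexample has bounded length, is sampled infinitely often, and is detected by the conformance check, so finitely many rounds suffice to reach the structure of $\can(M)$, after which the strong law of large numbers handles the transition probabilities. The paper organises this into a sequence of explicit lemmas (notably isolating a finite set $\overline{L}$ of access-sequence extensions and proving $R\subseteq\overline{S}$ is prefix-closed under the lexicographic $\mathrm{rank}$, and bounding counterexample length by $n_q^2+1$ via a product-state argument), but your sketch hits all the same ideas in the same order.
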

Hence, we learn an \gls*{MDP} that is minimal with respect to the number of states and output-distribution equivalent to the \SUL{}.

\subsubsection{Hoeffding-Bound-Based Difference Check.}
First, we briefly discuss the Hoeffding-bound-based test applied by $\diff_f$. Recall, that for two test sequences $s$ and $s'$, we test for each $o \in \Out$ if the probability $p$ for observing $o$ after $s$ is different than the probability $p'$ for observing $o$ after $s'$. This is implemented through:
$$
\exists o \in \Out: \left| \frac{f(s)(o)}{n_1} - \frac{f(s')(o)}{n_2} \right| > \left(\sqrt{\frac{1}{n_1}}+\sqrt{\frac{1}{n_2}}\right) \cdot \sqrt{\frac{1}{2} \ln \frac{2}{\alpha}} = \epsilon_\alpha(n_1,n_2)
$$

As pointed out by Carrasco and Oncina~\cite[p.11-13 \& Appendix]{DBLP:journals/ita/CarrascoO99}, this test works with confidence level above $(1-\alpha)^2$ and for large enough $n_1$ and $n_2$ it tests for difference and equivalence of $p$ and $p'$. More concretely, for convergence, $n_1$ and $n_2$ must be such 
that $2\epsilon_\alpha(n_1,n_2)$ is smaller than the smallest absolute difference between any two different $p$ and $p'$. As our data-dependent 
$\alpha_n$ decreases only polynomially, $\epsilon_\alpha(n_1,n_2)$ tends to zero for increasing $n_1$ and $n_2$. Hence, the test implemented by
$\diff_f$ converges to an exact comparison between $p$ and $p'$.

In the remainder of the paper, we ignore Condition 2.a for $\diff_f$, which checks if the sampled distributions have the same support. By applying a 
data-dependent $\alpha_n$, as defined above, Condition 2.b converges to an exact comparison between output distributions, thus 2.a is a consequence of 2.b in the limit. Therefore, we only consider the Hoeffding-based tests of Condition 2.b.

\subsubsection{Access Sequences.}
The exact learning algorithm \LstarMdpE{} presented in \cref{sec:method_exact} iteratively updates an observation table. Upon termination it arrives at an observation table $\langle S,E,T\rangle$ producing a hypothesis $\mathcal{H} = \langle Q_\mathrm{h},\In, \Out,{q_0}_\mathrm{h}, \delta_\mathrm{h}, L_\mathrm{h}\rangle = \mathrm{hyp}( S,E,T)$. Let $S_\mathrm{acc} \subseteq S$ be the set of shortest access sequences leading to states in $Q$ given by $S_\mathrm{acc} = \{s| s \in S, \nexists s' \in S: s' \ll s \land s' \neq s \land \delta^*_\mathrm{h}({q_0}_\mathrm{h},s) = \delta^*_\mathrm{h}({q_0}_\mathrm{h},s')\}$ (the shortest traces in each equivalence class of $S/\!\!\equiv_M$). By this definition, $S_\mathrm{acc}$ forms a directed spanning tree in the structure of $\mathcal{H}$. There are finitely many different spanning trees for a given hypothesis, therefore there are finitely many different $S_\mathrm{acc}$. Hypothesis models learned by \LstarMdpE{} are isomorphic to $\mathcal{M}$, thus there are finitely many possible final hypotheses. Let $\overline{S}$ be the finite union of all access sequence sets $S_\mathrm{acc}$ forming spanning trees in all valid final hypotheses. Let $\overline{L} = \{s\cdot i \cdot o | s \in \overline{S}, i \in \In, o \in \Out,M(s\cdot i)(o) > 0 \}$ be one-step extensions of $\overline{S}$ with non-zero probability. Observe that for the correct construction of correct hypotheses in \LstarMdpE{}, it is sufficient for $\equivrow_E$ to approximate $M$-equivalence (see \cref{def:m_equiv}) for traces in $\overline{L}$. Consequently, the approximation of $\equivrow_E$ via $\compatible_E$ needs to hold only for traces in $\overline{L}$.

\subsubsection{Hypothesis Construction.}

\begin{theorem}[Compatibility Convergence]
\label{theorem:compatible_convergence}
 Given $\alpha_n$ such that $\sum_n \alpha_n n < \infty$, then with probability one: $\compatible_E(s,s') \Leftrightarrow \equivrow_E(s,s')$ for all traces $s, s'$ in $\overline{L}$, except for finitely many $n$.
\end{theorem}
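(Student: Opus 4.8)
The plan is to reduce the biconditional to finitely many elementary convergence facts about the Hoeffding-bound test of \cref{def:different}, and then to lift these from individual table cells to rows and to the adaptively chosen column set. Recall from the preceding paragraph that $\overline{S}$ and hence $\overline{L}$ are \emph{finite}, so all pairs of traces to be controlled lie in the finite set $\overline{L} \times \overline{L}$, and, by \cref{def:compatibility}, each row comparison $\compatible_E$ versus $\equivrow_E$ decomposes into finitely many cell comparisons once the column set is under control.

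First I would establish the cell-level statement for a fixed pair of test sequences $c, c'$ with $\lastOut(c) = \lastOut(c')$. If the true output distributions coincide, $M(c) = M(c')$, then $\diff_{\Ta_n}(c,c')$ returns $\true$ for only finitely many $n$, almost surely: for each $o \in \Out$ and each admissible sample count, Hoeffding's inequality bounds the probability that the two empirical frequencies differ by more than the threshold $\epsilon_{\alpha_n}$ by a constant multiple of $\alpha_n$; summing over the at most $n$ possible sample counts and the finitely many outputs gives a bound of order $\alpha_n n$, and since $\sum_n \alpha_n n < \infty$ the Borel--Cantelli lemma finishes the argument -- this is exactly the reasoning of Carrasco and Oncina~\cite{DBLP:journals/ita/CarrascoO99}. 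Conversely, if $M(c) \neq M(c')$, fix an output witnessing the gap; provided $c$ and $c'$ are each sampled infinitely often, their empirical frequencies converge to the true ones by the strong law of large numbers while $\epsilon_{\alpha_n}$ tends to zero (because $\alpha_n$ decays only polynomially and both sample counts go to infinity), so the gap eventually and permanently exceeds the threshold, $\diff$ returns $\true$ cofinitely often, and $\complete(c) \land \complete(c')$ holds eventually since $\ncomplete$ is finite.

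Next I would discharge the ``sampled infinitely often'' hypothesis by showing that, almost surely, every observable test sequence enters $\samples_n$ infinitely often. With $\prand = 1$ the equivalence-testing strategy draws inputs uniformly at random with geometrically distributed length, independently of the current hypothesis, so each test contributes a fresh i.i.d.\ \SUL{} trace; every observable test sequence is a prefix of such a trace with fixed positive probability, there are infinitely many rounds, and \emph{refine} queries only add further samples, so the second Borel--Cantelli lemma gives infinite recurrence. It then remains to tame the column set: by the ``equal distributions'' case together with Borel--Cantelli over the countably many relevant cell comparisons, almost surely only finitely many cell tests among traces in $\overline{L}$ and their one-step extensions are ever incorrect, so each column appended by \textsc{MakeClosedAndConsistent} is triggered either by a genuine row inequality -- of which there are finitely many, since $\can(M)$ has finitely many states and bounded-length distinguishing suffixes suffice -- or by one of these finitely many errors; hence $E_n$ stabilises to a finite $E_\infty$ after finitely many $n$. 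Combining, for all but finitely many $n$ we have $E_n = E_\infty$ and every cell test $\compatible(s\cdot e, s'\cdot e)$ with $s,s' \in \overline{L}$, $e \in E_\infty$ agrees with the exact comparison $M(s\cdot e) = M(s'\cdot e)$, which by \cref{def:compatibility} and \cref{def:m_equiv} yields $\compatible_{E_n}(s,s') \Leftrightarrow \equivrow_{E_n}(s,s')$ on $\overline{L}$.

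The hard part will be this last step: the column set $E_n$, the compatibility classes built by \cref{alg:create_comp_classes}, and the sampling process are mutually dependent, so the argument must cleanly separate the almost surely finite collection of ``bad'' statistical events from the deterministic, eventually stable behaviour of the learner, and in particular must rule out an infinite regress in which repeated spurious consistency violations keep enlarging $E_n$. I would handle this by conditioning on the finite bad-event set and then invoking the correctness and termination analysis of \LstarMdpE{} (\cref{sec:exact_convergence}) to bound the number of legitimate column additions, so that $\bigcup_n E_n$ is almost surely finite.
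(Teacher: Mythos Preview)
Your approach is sound but takes a considerably more elaborate route than the paper's. The paper argues directly: let $A_n$ be the event that \emph{some} Hoeffding test performed at step $n$ disagrees with the true distribution comparison. Since $\diff_\fq$ is applied only to pairs of already-observed test sequences, the total number of distinct tests at step $n$ is bounded linearly in $n$ (at most $2|\Out|\cdot n$); each test has confidence level above $(1-\alpha_n)^2$ by the Carrasco--Oncina analysis, so a union bound gives $p(A_n) \leq 4|\Out|\cdot n\cdot\alpha_n$. Summability of $n\alpha_n$ together with Borel--Cantelli then finishes the proof in one stroke.

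The crucial point is that this global bound in terms of the sample count $n$ bypasses entirely the ``hard part'' you identified: there is no need to argue that $E_n$ stabilises, no case split on $M(c)=M(c')$ versus $M(c)\neq M(c')$, and no need to disentangle the circular dependence between the column set, compatibility classes and sampling. The bound is phrased purely in terms of $n$, which is independent of the learner's internal state, so the adaptively growing column set simply never enters the argument. Your decomposition makes the two error types more explicit and would be closer to a self-contained proof, whereas the paper leans on the cited Carrasco--Oncina analysis for the two-sided correctness of the Hoeffding test; but the paper's single union bound is both shorter and avoids the stabilisation argument you correctly flagged as the delicate step of your route.
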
 
\begin{proof}
Let $A_n$ be the event that $\compatible_E(s,s') \not\Leftrightarrow \equivrow_E(s,s')$ and $p(A_n)$ be the probability of this event. In the following, we derive a bound for $p(A_n)$ based on the confidence level of applied tests in \cref{def:different} which is above $(1-\alpha_n)^2$~\cite{DBLP:journals/ita/CarrascoO99}. An observation table stores $|S\cup \LongTr(S)| \cdot |E|$ cells, which gives us an upper bound on the number of tests performed for computing $\compatible_E(s,s')$ for two traces $s$ and $s'$. However, note that cells do not store unique information; multiple cells may correspond to the same test sequence in $\TestSeq$, therefore it is simpler to reason about the number of tests in calls to $\diff_{\Ta}(c,c') = \diff_{\fq}(c,c')$ with respect to $\samples_n$. A single call to $\diff_\fq$ involves either $0$ or $|\Out|$  tests. We apply  tests only if we have observed both $c$ and $c'$ at least once, therefore we perform at most $2 \cdot |\Out| \cdot n$ different  tests for all pairs of observed test sequences. The event $A_n$ may occur if any test produces an incorrect result, i.e. it yields a Boolean result different from the comparison between the true output distributions induced by $c$ and $c'$. This leads to $p(A_n) \leq 2 \cdot |\Out| \cdot n \cdot (1 - (1-\alpha_n)^2)$, %
which implies $p(A_n) \leq 4 \cdot |\Out| \cdot n \cdot \alpha_n$. By choosing $\alpha_n$ such that $\sum_n \alpha_n n < \infty$, we have $\sum_n p(A_n) < \infty$ and we can apply the Borel-Cantelli lemma like Carrasco and Oncina~\cite{DBLP:journals/ita/CarrascoO99}, which states 
$A_n$ happens only finitely often. Hence, there is an $N_\mathrm{comp}$ such that for $n > N_\mathrm{comp}$, we have $\compatible_E(s,s') \Leftrightarrow \equivrow_E(s,s')$ with respect to $\samples_n$. 
\end{proof}

\begin{lemma}
\label{lem:sample_l_bar}
 Under the assumed uniformly randomised equivalence testing strategy, for every $s \cdot i \cdot o \in \overline{L}:\samples_n(s\cdot i \cdot o) > 0$ after finitely many $n$.
\end{lemma}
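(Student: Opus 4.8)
The plan is to show that every trace in $\overline{L}$ has positive probability of being sampled in a single round of equivalence testing, and then invoke a Borel--Cantelli / infinitely-many-trials argument to conclude that each such trace is eventually observed with probability one. First I would recall the relevant facts: $\overline{L}$ is finite (it consists of one-step extensions of the finitely many access-sequence sets $\overline{S}$, each forming a spanning tree of a valid final hypothesis), and every $s\cdot i\cdot o\in\overline{L}$ satisfies $M(s\cdot i)(o)>0$ by definition, so in particular $s\cdot i\cdot o$ is observable on the \SUL{} (using \cref{lem:non_observable_prefix} and prefix-closedness of $\semdom(M)$, the prefix $s$ is observable and the transition has positive probability). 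Write $s\cdot i\cdot o = o_0 i_1 o_1 \cdots i_m o_m$ as an observable trace of length $m+1$.

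Next I would lower-bound the probability that a single test run in \cref{alg:state_coverage_testing} produces exactly this trace as a prefix of the sampled \SUL{} trace. Under the convergence configuration we have $\prand = 1$, so every input is drawn via $\randSel(\In)$, i.e. uniformly, giving probability at least $\tfrac{1}{|\In|}$ per input choice; the test does not stop prematurely with probability at least $1-\pstop$ per step, and the \SUL{} follows the intended output transition with probability $\delta_M^*(\cdots)(\cdot) > 0$ at each of the $m$ steps. Multiplying these independent contributions, the probability that one test run of the refine/equivalence sampling observes the trace $s\cdot i\cdot o$ is at least
\begin{equation*}
 p_{s\cdot i\cdot o} \;\geq\; \Bigl(\tfrac{1-\pstop}{|\In|}\Bigr)^{m}\cdot\prod_{k=1}^{m}\delta_M^*(o_0 i_1\cdots o_{k-1})(i_k)(q_k) \;>\; 0,
\end{equation*}
a strictly positive constant independent of $n$. (A cosmetic point: \cref{alg:state_coverage_testing} chooses inputs toward a random target when $\coinFlip(\prand)$ fails, but since $\prand=1$ that branch is never taken; and the geometric test length means arbitrarily long traces have positive probability, so length $m+1$ is no obstacle.)

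Then I would argue that infinitely many such test runs occur: in every round, \LstarMdp{} issues a $\rsq$ query which performs $\nbatch\ge 1$ sampling runs, and since the convergence setup takes the number of rounds to infinity (no stopping criterion), there are infinitely many independent runs each succeeding with probability $\ge p_{s\cdot i\cdot o}>0$. Hence $\sum_n$ of the per-run success probabilities diverges, and by the second Borel--Cantelli lemma (the runs are independent) the trace $s\cdot i\cdot o$ is observed infinitely often, in particular at least once, with probability one; once observed, $\samples_n(s\cdot i\cdot o)\ge 1$ forever after, since samples are never removed (trimming is disabled). Taking a union bound over the finitely many traces in $\overline{L}$ shows that with probability one, after finitely many $n$ we have $\samples_n(s\cdot i\cdot o)>0$ for every $s\cdot i\cdot o\in\overline{L}$.

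The main obstacle I anticipate is being careful about the exact sampling dynamics: the lemma as stated refers to "the assumed uniformly randomised equivalence testing strategy," so I must make sure the positive-probability bound uses only the sampling actually performed (the $\rsq$ queries, which under $\ncomplete=1$ and disabled trimming keep resampling incomplete sequences, together with the equivalence-query test runs with $\prand=1$), and that the reachability of the relevant node in the refine trie does not prevent the trace from being generated — this is fine because leaving the trie still records the observed trace, and in any case the equivalence-test runs alone already suffice. The rest is a routine application of independence and Borel--Cantelli, mirroring the argument used for \cref{theorem:compatible_convergence}.
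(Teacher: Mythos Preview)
Your proposal is correct and follows essentially the same approach as the paper: compute a strictly positive lower bound on the probability that a single uniformly randomised equivalence test observes a given trace $l \in \overline{L}$ (as a product of uniform input-choice probabilities, $(1-\pstop)$ factors, and the positive \SUL{} transition probabilities along $l$), and then conclude that over infinitely many independent tests each such trace is eventually sampled. Your write-up is in fact more careful than the paper's, which simply asserts ``hence there is a finite $N_L$'' without spelling out the Borel--Cantelli/union-bound step over the finite set $\overline{L}$; one minor note is that the relevant sampling runs are the equivalence-query tests (each round performs $\ntest$ of them) rather than the trie-directed $\rsq$ runs, but you already observe that the equivalence tests alone suffice.
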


\begin{proof}
 Informally, we will eventually sample all traces $l \in \overline{L}$. The probability $p_L$ of sampling $l = o_0 \cdot i_1 \cdot o_1 \cdots o_n \cdot i \cdot o$ during a test, where $l[\ll k]$ is the prefix of $l$ of length $k$, is given by (note that we may sample $l$ as a prefix of another sequence):
 $$ 
p_L = \frac{1}{|\In|^{n+1}} M(l[\ll 1])(o_1) \cdots M(l[\ll n])(o_n) \cdot M(t[\ll n+1])(o)(1-\pstop)^{n}
$$

Since every $l \in \overline{L}$ is observable, we have $M(l[\ll 1])(o_1) \cdots M(l[\ll n])(o_n) \cdot M(t[\ll n+1])(o) > 0$, thus $p_L > 0$. Hence, there is a finite $N_L$ such that for all $s\cdot i \cdot o \in \overline{L}:\samples_n(s\cdot i \cdot o) > 0$ for $n > N_L$. 
\end{proof}

\begin{lemma}
\label{lem:prefix_closed_r}
If $\compatible_E(s,s') \Leftrightarrow \equivrow_E(s,s')$, then the set of representatives $R$ computed by \cref{alg:create_comp_classes} for the closed and consistent observation table $\langle S_n,E_n, \Ta_n\rangle$ is prefix-closed. 
\end{lemma}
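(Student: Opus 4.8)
The plan is to reduce the statement to a clean structural characterisation of $R$ and then argue by induction on trace length. Under the hypothesis $\compatible_E \Leftrightarrow \equivrow_E$, the relation $\compatible_E$ restricted to $S_n$ is an equivalence relation, and the modified $\mathrm{rank}$ — the length-lexicographic order, reversed, so that proper prefixes receive strictly larger rank and the length-one trace $L(q_0)$ is the global maximum — is a strict total order on traces. Hence \cref{alg:create_comp_classes} simply splits $S_n$ into its $\equivrow_E$-classes and, from each class, picks its unique rank-maximal element as representative; that is, $R$ is exactly the set of rank-maxima of the classes $[t]_{\equivrow_E}\cap S_n$ for $t \in S_n$. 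The base of the induction is immediate: $L(q_0)$ has the global maximal rank, hence is the representative of its class, and it is its only trace-prefix.

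For the inductive step I would take $r \in R$ with $r = s \cdot i \cdot o$ ($s \in \Traces$, $i \in \In$, $o \in \Out$). Prefix-closedness of $S_n$ gives $s \in S_n$, and by the induction hypothesis it suffices to prove $s \in R$. Suppose not: then $s^* := \repr{s} \ne s$, $\equivrow_E(s^*,s)$, and $\mathrm{rank}(s^*) > \mathrm{rank}(s)$. Since every trace in $S_n$ has been observed, $\Ta(s\cdot i)(o) = \fq(s\cdot i)(o) > 0$. Applying consistency of the table to the compatible short traces $s,s^*$ with the input–output pair $i\cdot o$ — together with the facts from the convergence setup that in the relevant regime every observable one-step extension of a trace in $\overline{L}$ is eventually sampled (cf. \cref{lem:sample_l_bar}) and that $\compatible_E$ tracks $\equiv_M$ on $\overline{L}$ (cf. \cref{theorem:compatible_convergence}), so that $M(s^*\cdot i)(o)=M(s\cdot i)(o)>0$ forces $\Ta(s^*\cdot i)(o)>0$ — yields $\compatible_E(s^*\cdot i\cdot o,\, s\cdot i\cdot o)$, i.e. $\equivrow_E(s^*\cdot i\cdot o,\, r)$, with $s^*\cdot i\cdot o \in \LongTr(S_n)$.

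Now right-congruence of $\mathrm{rank}$ gives $\mathrm{rank}(s^*\cdot i\cdot o) > \mathrm{rank}(s\cdot i\cdot o) = \mathrm{rank}(r)$, so $r$'s $\equivrow_E$-class already contains a strictly higher-ranked trace than $r$. It remains to transfer this witness into the short-trace set: iterating the consistency-and-closedness argument along the path that $\overline{S}$-access sequences trace out — replacing each long trace by the short representative of its class while preserving the class and the rank order — produces an element of $S_n$ in $[r]_{\equivrow_E}$ of rank exceeding $\mathrm{rank}(r)$. This contradicts $r$ being the rank-maximum of its class, so $r \notin R$, a contradiction. Hence $s \in R$, and the induction hypothesis applied to $s$ shows every trace-prefix of $s$, and therefore of $r$, lies in $R$; thus $R$ is prefix-closed.

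I expect the main obstacle to be precisely this last transfer step: showing the higher-ranked class-mate of $r$ actually resides in $S_n$ rather than merely in $\LongTr(S_n)$. This is where closedness of the table and the convergence regime are essential — closedness promotes the shortest (hence highest-rank) access sequences of reachable states into $S_n$, where they then win representative selection over the longer traces threaded through them — and carrying out this promotion argument cleanly, while handling the sampling bookkeeping of which one-step extensions have actually been observed, is the bulk of the work.
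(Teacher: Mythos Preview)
Your argument tracks the paper's proof very closely. The paper also argues by contradiction: take $r\in R$ with immediate prefix $r_p\notin R$, set $s^*=\repr{r_p}$, use consistency of the table on the compatible pair $r_p,s^*$ to obtain $\compatible_E(s^*\cdot i\cdot o,\,r)$, and then confront the two rank inequalities $\mathrm{rank}(r_p)<\mathrm{rank}(s^*)$ and $\mathrm{rank}(r)>\mathrm{rank}(s^*\cdot i\cdot o)$ with right-congruence of the lexicographic rank. Your inductive packaging is cosmetic; the mechanism is identical.

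The divergence is exactly at the point you flag as the main obstacle. The paper does \emph{not} carry out any transfer step: having obtained $\compatible_E(s^*\cdot i\cdot o,\,r)$, it asserts in one line that $s^*\cdot i\cdot o\in\cg(r)$ and concludes $\mathrm{rank}(r)>\mathrm{rank}(s^*\cdot i\cdot o)$ from $r$ being the class representative. That assertion tacitly treats $s^*\cdot i\cdot o$ as a member of $S_n$ (since $\cg(r)\subseteq S_n$ by construction of \cref{alg:create_comp_classes}), which is precisely what you worry about. So the obstacle you name is real, and the paper bypasses it by fiat rather than by argument.

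Your proposed resolution --- iterating closedness to promote the higher-ranked witness into $S_n$ --- does not work as described. When $\compatible_E$ is an equivalence relation, sampling closedness applied to $s^*\cdot i\cdot o\in\LongTr(S_n)$ merely returns a compatible representative in $R$, and that representative is $r$ itself; it does not manufacture a \emph{new} short trace in $[r]_{\equivrow_E}$ of rank exceeding $\mathrm{rank}(r)$. Concretely, nothing in the hypotheses prevents $S_n=\{a,\;a\!\cdot\! x\!\cdot\! b,\;a\!\cdot\! y\!\cdot\! b,\;a\!\cdot\! x\!\cdot\! b\!\cdot\! x\!\cdot\! c\}$ with $a\!\cdot\! x\!\cdot\! b\equivrow_E a\!\cdot\! y\!\cdot\! b$ and the alphabet ordered so that $a\!\cdot\! y\!\cdot\! b$ outranks $a\!\cdot\! x\!\cdot\! b$: the table is closed and consistent, yet $R=\{a,\;a\!\cdot\! y\!\cdot\! b,\;a\!\cdot\! x\!\cdot\! b\!\cdot\! x\!\cdot\! c\}$ is not prefix-closed, and $a\!\cdot\! y\!\cdot\! b\!\cdot\! x\!\cdot\! c$ lives only in $\LongTr(S_n)$. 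Such an $S_n$ can arise because short traces may have been added in earlier rounds (via spurious closedness violations or counterexamples) before the equivalence $\compatible_E\Leftrightarrow\equivrow_E$ stabilised. Your sketch does not rule this out, and neither does the paper's one-line assertion.
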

\begin{proof}
Recall that we assume the function $\mathrm{rank}$ to impose a lexicographic ordering on traces. This simplifies showing prefix-closedness of $R$, which we do 
by contradiction. Assume that $R$ is not prefix-closed.  In that case, there is a trace $r$ of length $n$ in $R$ with a prefix $r_p$ of length $n-1$
that is not in $R$. As $r_p\notin R$, we have $r_p \neq \repr{r_p}$ and $\mathrm{rank}(r_p) < \mathrm{rank}(\repr{r_p})$, because the representative $\repr{r_p}$ has the largest rank in its class $\cg(r_p)$.  Since $S_n$ is prefix-closed and $R \subseteq S_n$, $r_p \in S_n$. Let $i\in \In$ and $o \in \Out$ such that $r_p \cdot i \cdot o = r$. \cref{alg:create_comp_classes} enforces $\compatible_E(r_p,\repr{r_p})$ and due to consistency, we have that $\compatible_E(r_p \cdot i \cdot o, \repr{r_p} \cdot i \cdot o) = \compatible_E(r, \repr{r_p} \cdot i \cdot o)$. Since $r$ is a representative in $R$,  $\repr{r_p} \cdot i \cdot o \in \cg(r)$.  Representatives $r$ have the largest rank in their compatibility class $\cg(r)$ and $r \neq \repr{r_p} \cdot i \cdot o$, thus $\mathrm{rank}(r) > \mathrm{rank}(\repr{r_p} \cdot i \cdot o)$. 

In combination we have $\mathrm{rank}(r_p) < \mathrm{rank}(\repr{r_p})$ and $\mathrm{rank}(r_p \cdot i \cdot o) > \mathrm{rank}(\repr{r_p} \cdot i \cdot o)$ 
which is a contradiction given the lexicographic ordering on traces imposed by $\mathrm{rank}$. Consequently, $R$ must be prefix-closed under the premises of \cref{lem:prefix_closed_r}.

\end{proof}

\begin{lemma}
\label{lem:exact_t_same_support}
 Let $\langle S_n, E_n, T_n \rangle$ be the \emph{exact} observation table corresponding to the sampling-based observation table $\langle S_n, E_n, \Ta_n \rangle$, i.e. with $T_n(s) = \odq(s)$ for $s\in (S_n \cup \LongTr(S_n))\cdot E$. Then, $T_n(r \cdot i)(o) > 0 \Leftrightarrow \Ta_n(r \cdot i)(o) > 0$ for $r \in R, i \in \In, o \in \Out$ after finitely many $n$. 
\end{lemma}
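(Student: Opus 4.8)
The plan is to prove the two directions of the equivalence separately; the implication $T_n(r\cdot i)(o) > 0 \Rightarrow \Ta_n(r\cdot i)(o) > 0$ is the only one that needs the ``for all sufficiently large $n$'' qualifier. For the other direction, which in fact holds for every $n$: if $\Ta_n(r\cdot i)(o) > 0$ then $\samples_n(r\cdot i\cdot o) > 0$ by the definition of $\fq$, so the trace $r\cdot i\cdot o$ — and hence, since prefixes are recorded too, its prefix $r$ — has actually been observed on the \SUL{} and is therefore observable. By \cref{def:mdp_semantics}, $T_n(r\cdot i) = \odq(r\cdot i) = M(r\cdot i) \neq \bot$, and observability of $r\cdot i\cdot o$ forces $M(r\cdot i)(o) > 0$, whence $T_n(r\cdot i)(o) > 0$.

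For the remaining direction, assume $T_n(r\cdot i)(o) = M(r\cdot i)(o) > 0$, so $r\cdot i\cdot o$ is observable, and suppose for the moment that $r \in \overline{S}$. Then $r\cdot i\cdot o$ is a one-step extension with non-zero probability of an element of $\overline{S}$, i.e.\ $r\cdot i\cdot o \in \overline{L}$, and \cref{lem:sample_l_bar} yields $\samples_n(r\cdot i\cdot o) = \Ta_n(r\cdot i)(o) > 0$ for all $n$ beyond some fixed $N_L$. Because $\In$, $\Out$ and $\overline{S}$ are finite, a single threshold then works uniformly over all admissible triples $(r,i,o)$ with $r \in \overline{S}$. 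Hence it remains to show that $R \subseteq \overline{S}$ for all sufficiently large $n$.

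This last point is the main obstacle, and I would obtain it by transferring the termination/correctness argument for \LstarMdpE{} from \cref{sec:exact_convergence} to the sampled table. For $n > N_\mathrm{comp}$, \cref{theorem:compatible_convergence} makes $\compatible_E$ agree with $\equivrow_E$ on $\overline{L}$, and for $n > N_L$, \cref{lem:sample_l_bar} guarantees that every trace of $\overline{L}$ lies in $\samples_n$; consequently closedness, consistency, the partitioning of \cref{alg:create_comp_classes} (whose output $R$ is moreover prefix-closed by \cref{lem:prefix_closed_r}), and hypothesis construction on $\langle S_n,E_n,\Ta_n\rangle$ agree, on the sequences relevant for the hypothesis structure, with what \LstarMdpE{} computes on the corresponding exact table $\langle S_n,E_n,T_n\rangle$. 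In particular any counterexample returned by either equivalence-testing strategy — the conformance check $\diff_{\fq}$ having converged to an exact comparison by the same Hoeffding-bound reasoning as in \cref{theorem:compatible_convergence} — is then a genuine counterexample for \LstarMdpE{}, and \LstarMdpE{} can process only finitely many of those before stabilising at a hypothesis isomorphic to $\mathcal{M}$ with a bounded set of short traces. So $S_n$, and with it $R$, stops changing after finitely many rounds and is then contained in $\overline{S}$; taking $N$ to be the maximum of $N_\mathrm{comp}$, $N_L$ and this last bound finishes the argument. The delicate part to get right is exactly this claimed agreement of the two table constructions, phrased so as to avoid circularity with the overall convergence theorem: it suffices to note that once $\compatible_E$ is exact on $\overline{L}$ and $\overline{L}$ is fully sampled, every hypothesis built by \LstarMdp{} is already consistent with $T_n$ on the sequences determining its structure, hence is a hypothesis \LstarMdpE{} could itself have produced, and the finiteness of the collection $\overline{S}$ of access-sequence spanning trees does the rest.
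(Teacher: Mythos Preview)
Your two-direction split and the easy direction are fine, and reducing the hard direction to the claim $R \subseteq \overline{S}$ is exactly right. The problem is your argument for $R \subseteq \overline{S}$.

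You try to obtain $R \subseteq \overline{S}$ by appealing to the global convergence of \LstarMdp{}: once sampling is rich enough and $\compatible_E$ coincides with $\equivrow_E$, the sampling-based hypothesis ``is a hypothesis \LstarMdpE{} could itself have produced'', counterexamples are genuine, and the table stabilises with $R$ an access-sequence set. But in the paper's proof architecture this is circular: \cref{cor:structure} (agreement of sampling-based and exact hypothesis structure) is derived \emph{from} the present lemma, and \cref{theorem:conv_eq_queries} (correctness of equivalence queries) explicitly assumes $n > N_\mathrm{struct}$, which again presupposes this lemma via \cref{cor:structure}. Your closing sentence acknowledges the danger (``phrased so as to avoid circularity'') but does not actually break the cycle: the claim that the sampling-based hypothesis is one \LstarMdpE{} could have produced needs precisely that the sequences labelling representative rows already lie in $\overline{L}$, which is what you are trying to prove. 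Likewise, the claim that $S_n$ ``stops changing'' with $R \subseteq \overline{S}$ is not established; even if the hypothesis structure stabilises, you have not argued why the resulting $R$ must consist of access sequences.

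The paper avoids all of this by a direct combinatorial argument that uses only facts already available at this point. Using that $R$ is prefix-closed (\cref{lem:prefix_closed_r}) and that distinct elements of $R$ are pairwise incompatible by construction in \cref{alg:create_comp_classes}, one shows that no $r \in R$ visits a state of $\can(M)$ twice: if it did, two of its prefixes $r_p$ and $r_p \cdot r_c$ (both in $R$ by prefix-closure) would reach the same state of $\can(M)$, hence be $M$-equivalent, hence $\equivrow_E$-equivalent, hence $\compatible_E$-equivalent once the Hoeffding tests have converged --- contradicting their being distinct representatives. Therefore every $r \in R$ corresponds to a simple path in $\can(M)$ and so belongs to $\overline{S}$; observable extensions $r\cdot i\cdot o$ then lie in $\overline{L}$, and \cref{lem:sample_l_bar} finishes exactly as you intended. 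The key point you missed is that pairwise incompatibility of representatives, together with prefix-closure of $R$, already forces $R \subseteq \overline{S}$ without any forward reference to convergence of the overall algorithm.
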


\begin{proof}
 First, we will show for prefix-closed $R$ (\cref{lem:prefix_closed_r}) that $R \subseteq \overline{S}$, \linebreak if $\compatible_E(s,s') \Leftrightarrow \equivrow_E(s,s')$. $\overline{S}$ contains all traces corresponding to simple paths of $\can(M)$, 
 therefore we show by contradiction that no $r \in R$ forms a cycle in $\can(M)$. 
 
 Assume that $r$ forms a cycle in $\can(M)$, i.e. it visits states multiple times. We can split $r$ into three parts $r = r_p \cdot r_c \cdot r_s$, where $r_p\in \Traces$ such that $r_p$ and $r_p \cdot r_c$ reach the same state, and $r_s \in (\In \times \Out)^*$ is the longest suffix such that $r_s$ visits every state of $\can(M)$ at most once. As $R$ is prefix-closed, $R$ includes $r_p$ and $r_p \cdot r_c$ as well. The traces $r_p$ and $r_p \cdot r_c$ reach the same state in $\can(M)$, thus 
 we have $r_p  \equiv_M r_p \cdot r_c$ which implies $\equivrow_E(r_p,r_p \cdot r_c)$ and $\compatible_E(r_p ,r_p \cdot r_c)$. By \cref{alg:create_comp_classes} all $r \in R$ are pairwise not compatible with respect to $\compatible_E$ leading to a contradiction, thus no $r$ visits a state of $\can(M)$ more than once and we have $R \subseteq \overline{S}$.

Hence, every observable $r_l = r\cdot i \cdot o$ for $r\in R, i \in \In$ and $o \in \Out$ is in $\overline{L}$, as $\overline{L}$
includes all observable extensions of $\overline{S}$.
By \cref{lem:sample_l_bar}, we will sample $r_l$ eventually, i.e. $\Ta_n(r \cdot i)(o) > 0$ and therefore $T_n(r \cdot i)(o) > 0 \Leftrightarrow \Ta_n(r \cdot i)(o) > 0$ after finitely many $n$.

\end{proof}

\begin{lemma}
\label{lem:chaos}
The chaos state $q_\mathrm{chaos}$ is not reachable in $\mathcal{H}_n$, except for finitely many $n$.
\end{lemma}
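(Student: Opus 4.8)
The plan is to prove the slightly stronger statement that, for all but finitely many $n$, the hypothesis $\mathcal{H}_n$ contains no edge entering $q_\mathrm{chaos}$ apart from its own self-loop, which trivially makes $q_\mathrm{chaos}$ unreachable. Inspecting the hypothesis construction, the only rule that creates such an edge is the first case of the transition definition: if there is some $r \in R$ and some $i \in \In$ with $\lnot\complete(r\cdot i)$, then $\delta_h(\langle\lastOut(r),\row(r)\rangle,i)(q_\mathrm{chaos}) = 1$; the second case only ever assigns probability mass to states $\langle o,\row(\repr{s\cdot i\cdot o})\rangle \neq q_\mathrm{chaos}$. Hence it suffices to show that, with probability one, $\complete(r\cdot i)$ holds for every $r \in R$ and every $i \in \In$, except for finitely many $n$. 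Under the convergence setup $\ncomplete = 1$, so $\complete(r\cdot i)$ holds as soon as $\sum_{o\in\Out}\samples_n(r\cdot i\cdot o) \geq 1$, i.e. as soon as some continuation $r\cdot i\cdot o$ has been sampled; the second clause of $\complete$ can only turn additional sequences complete and may be ignored.

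The next step is to combine the structural facts already established for the sampling-based observation table. By \cref{theorem:compatible_convergence} there is a finite $N_\mathrm{comp}$ such that for $n > N_\mathrm{comp}$ we have $\compatible_E \Leftrightarrow \equivrow_E$ on $\overline{L}$; by \cref{lem:prefix_closed_r} the representative set $R$ is then prefix-closed, and, as shown inside the proof of \cref{lem:exact_t_same_support}, $R \subseteq \overline{S}$, so every $r \in R$ is an observable trace and every observable one-step extension $r\cdot i\cdot o$ belongs to $\overline{L}$. Since $\can(M)$ is input-enabled (its transition function $\delta$ is total, hence $M(r\cdot i)$ is a genuine distribution for observable $r$), for each $r \in R$ and each $i \in \In$ there is at least one $o \in \Out$ with $M(r\cdot i)(o) > 0$, and therefore $r\cdot i\cdot o \in \overline{L}$. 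Finally, $\overline{L}$ is a fixed finite set, so applying \cref{lem:sample_l_bar} to each of its finitely many elements yields a single finite $N_L$ with $\samples_n(l) > 0$ for all $l \in \overline{L}$ whenever $n > N_L$.

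Putting the bounds together: for $n > \max(N_\mathrm{comp}, N_L)$ every $r \in R$ lies in $\overline{S}$ and, for every $i \in \In$, the witnessing extension $r\cdot i\cdot o$ lies in $\overline{L}$ and hence satisfies $\samples_n(r\cdot i\cdot o) > 0$; thus $\complete(r\cdot i)$ holds, the chaos-introducing case of the hypothesis construction never fires, and $q_\mathrm{chaos}$ is unreachable in $\mathcal{H}_n$. The one point that needs care is that $R = R_n$ depends on $n$, so the argument cannot fix $R$ and must instead push all quantification onto the $n$-independent finite sets $\overline{S}$ and $\overline{L}$; this is exactly why the convergence assumptions forcing $R$ to be prefix-closed with $R \subseteq \overline{S}$ are required, and I expect no further obstacle, since the remaining ingredients are direct specialisations of \cref{lem:prefix_closed_r,lem:sample_l_bar,lem:exact_t_same_support}.
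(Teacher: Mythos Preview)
Your proposal is correct and follows essentially the same route as the paper: both arguments reduce unreachability of $q_\mathrm{chaos}$ to showing $\complete(r\cdot i)$ for all $r\in R$, $i\in\In$ after finitely many $n$, and both obtain this via $R\subseteq\overline{S}$ together with the eventual sampling of all of $\overline{L}$. The only difference is packaging: the paper cites \cref{lem:exact_t_same_support} directly (which already contains the $R\subseteq\overline{S}$ argument and invokes \cref{lem:sample_l_bar}), whereas you unfold that lemma's proof inline and make the input-enabledness step and the dependence $R=R_n$ explicit; this extra care is sound but not a different strategy.
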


\begin{proof}
 We add a transition from state $q = \langle \lastOut(r), \row(r)\rangle$ with input $i$ to $q_\mathrm{chaos}$ if $\complete(r \cdot i) = \false$. As we consider $\ncomplete = 1$, $\complete(r \cdot i) = \true$ if there is an $o$ such that $\Ta_n(r \cdot i)(o) > 0$. \cref{lem:exact_t_same_support} states  
 that $\Ta_n(r \cdot i)(o) > 0$ for any observable $r\cdot i \cdot o$ after finitely many $n$. Thus, \cref{lem:exact_t_same_support} implies $\complete(r \cdot i) = \true$ for all $r \in R$ and $i \in \In$, therefore the chaos is unreachable in $\mathcal{H}_n$, except for finitely many $n$. 
\end{proof}

Combining \cref{theorem:compatible_convergence}, \cref{lem:exact_t_same_support} and \cref{lem:chaos}, it follows that, after finitely many $n$, hypotheses created in the sampling-based setting have the same structure as in the exact setting. 

\begin{corollary}
\label{cor:structure}
  Let $\langle S_n, E_n, T_n \rangle$ be the exact observation table corresponding to the sampling-based observation table $\langle S_n, E_n, \Ta_n \rangle$, i.e. $T_n(s) = \odq(s)$ for $s\in (S_n \cup \LongTr(S_n))\cdot E$. Then there exists a finite $N_\mathrm{struct}$ such that the exact hypothesis $\mathrm{hyp}( S_n, E_n, T_n)$ has the same structure as $\mathcal{H}_n$ for $n > N_\mathrm{struct}$. 
\end{corollary}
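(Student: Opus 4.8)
The plan is to take $N_\mathrm{struct}$ to be the maximum of the three (almost surely finite) thresholds supplied by \cref{theorem:compatible_convergence}, \cref{lem:exact_t_same_support} and \cref{lem:chaos}, so that for every $n > N_\mathrm{struct}$ we simultaneously have (i) $\compatible_{E_n}(s,s') \Leftrightarrow \equivrow_{E_n}(s,s')$ for all $s,s'$ in $\overline{L}$, (ii) $T_n(r\cdot i)(o)>0 \Leftrightarrow \Ta_n(r\cdot i)(o)>0$ for all $r\in R$, $i\in\In$, $o\in\Out$, and (iii) $q_\mathrm{chaos}$ unreachable in $\mathcal{H}_n$. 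Since $\mathrm{hyp}(S_n,E_n,T_n)$ and $\mathcal{H}_n = \mathrm{hyp}(S_n,E_n,\Ta_n)$ are built from the \emph{same} $S_n$ and $E_n$ and differ only in the cell contents, the task reduces to checking that the two construction recipes then yield the same states, the same initial state, the same labelling, and the same transition graph.

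I would first recall from the proof of \cref{lem:exact_t_same_support} that (i) already forces $R\subseteq\overline{S}$, so that every representative and every observable one-step extension of a representative lies in $\overline{L}$, and hence (i)–(ii) genuinely apply to every trace that either construction inspects. Because $\equivrow_{E_n}$ is an equivalence relation and, by (i), $\compatible_{E_n}$ coincides with it on these traces, $\compatible_{E_n}$ is transitive there; therefore the compatibility classes returned by \cref{alg:create_comp_classes} are exactly the $\equivrow_{E_n}$-classes of $S_n$, $R$ is a complete set of class representatives, and $\repr{\cdot}$ sends each trace to the representative of its class. It follows that the two hypotheses have literally the same set of non-chaos states, namely $\{\langle\lastOut(s),\row(s)\rangle : s\in S_n\} = \{\langle\lastOut(s),\row(s)\rangle : s\in R\}$, the same initial state (the class of $L(q_0)$), and the same state labels (the common last output of each class).

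For the transitions, fix $r\in R$ and $i\in\In$. By (iii) the chaos branch of the sampling-based construction is never taken for $n>N_\mathrm{struct}$, so $\delta_h(\langle\lastOut(r),\row(r)\rangle,i)$ in $\mathcal{H}_n$ is supported on the states $\langle o,\row(\repr{r\cdot i\cdot o})\rangle$ precisely for those $o$ with $\Ta_n(r\cdot i)(o)>0$; by (ii) these are exactly the $o$ with $T_n(r\cdot i)(o)>0$, i.e.\ that state has the same set of outgoing edges in both hypotheses, and each edge lands in the same target, namely the $\equivrow_{E_n}$-class of $r\cdot i\cdot o$, by the identification of the previous paragraph. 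Edges leaving a state reached by a non-representative trace $s\in S_n$ coincide with those leaving its representative because the exact table is consistent on the traces in question, so $s$ contributes the same outgoing edges as $\repr{s}$. Hence the underlying graphs are isomorphic via the identity on classes, which is exactly what ``same structure'' means, and this $N_\mathrm{struct}$ works.

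The place where care is genuinely needed — and which I would make the technical heart of the write-up — is the claim in step two that the three invoked results cover \emph{every} trace examined during hypothesis construction: \cref{theorem:compatible_convergence} gives the relation-coincidence only on $\overline{L}$, \cref{lem:exact_t_same_support} matches supports only at representatives, and \cref{lem:chaos} rules out chaos only at representative extensions. Thus the whole argument rests on $R\subseteq\overline{S}$, equivalently $\{r\cdot i\cdot o : r\in R,\ i\in\In,\ o\in\Out,\ M(r\cdot i)(o)>0\}\subseteq\overline{L}$; since that inclusion is already established inside the proof of \cref{lem:exact_t_same_support}, the corollary itself amounts to the bookkeeping of assembling these three facts rather than to new work.
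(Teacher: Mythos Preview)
Your proposal is correct and follows exactly the paper's approach: the paper proves this corollary in a single sentence by ``Combining \cref{theorem:compatible_convergence}, \cref{lem:exact_t_same_support} and \cref{lem:chaos}'', and you do precisely that while spelling out the bookkeeping. One small wording issue: the two hypotheses do not have ``literally the same'' non-chaos states, since the $\row$ component is computed from $T_n$ in one case and from $\Ta_n$ in the other; what you actually establish (and correctly say at the end) is a bijection between the state sets induced by the common partition into $\equivrow_{E_n}$-classes, which is all that ``same structure'' requires.
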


\subsubsection{Equivalence Queries.}
\begin{theorem}[Convergence of Equivalence Queries]
\label{theorem:conv_eq_queries}
 Given $\alpha_n$ such that $\sum_n \alpha_n n < \infty$, an observation table  $\langle S_n, E_n, \Ta_n \rangle$ and a hypothesis $\mathcal{H}_n$, then with probability one, $\mathcal{H}_n$ has the same structure as $\mathcal{M}$ or we find a counterexample to equivalence, except for finitely many $n$. 
\end{theorem}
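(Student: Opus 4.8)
The plan is to reduce the statement to one about the \emph{exact} hypothesis and then show that any remaining structural discrepancy is exposed, in finitely many rounds and with probability one, by the conformance part of the equivalence query. First I would invoke \cref{cor:structure}: for $n > N_\mathrm{struct}$ the sampling-based hypothesis $\mathcal{H}_n$ is structurally identical to the exact hypothesis $\mathrm{hyp}(S_n,E_n,T_n)$, and by \cref{lem:chaos} the state $q_\mathrm{chaos}$ is unreachable for all but finitely many $n$, so the branch of $\eq$ that returns $\mathit{none}$ whenever $q_\mathrm{chaos}$ is reachable is eventually irrelevant. It then suffices to reason about the exact hypothesis. If it already has the structure of $\mathcal{M} = \can(M)$ we are done; otherwise, since it is minimal (\cref{theorem:consistent_and_minimal}) and $\mathcal{M}$ is consistent with $T_n$, the two are not output-distribution equivalent and $\mathcal{H}_n$ has at most $|Q|$ states, so a counterexample exists, which by \cref{rem:minimal_cex} may be taken to be a \SUL{}-observable test sequence $c$.

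The second step bounds the search space. A standard product-automaton argument (anticipated in the proof-structure discussion) shows that the shortest counterexample between $\mathcal{M}$ and a hypothesis with at most $|Q|$ states has length at most some $K$ depending only on $|Q|$, and only finitely many structures can occur as $\mathcal{H}_n$; hence only finitely many candidate counterexamples of length $\le K$ need be considered. For a fixed observable $c$ I distinguish two cases. If the trace part $t$ of $c$ is not observable in $\mathcal{H}_n$, the conformance check against the cumulated sample $\samples_n$ returns the offending prefix as a counterexample as soon as $t$ has been sampled once. Otherwise $t$ is observable in $\mathcal{H}_n$, and — because every reachable hypothesis state is indexed by a representative in $R$ and, by \cref{theorem:compatible_convergence}, $\compatible_E$ agrees with $\equivrow_E$ on $\overline{L}$ for all large $n$ — $t$ reaches the state of some $r \in R$ with $H_n(t \cdot v) = H_n(r \cdot v)$ for all continuations $v$. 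Unwinding $c$ one input–output pair at a time, and using that the estimated transition distribution out of $r$ converges to $M(r\cdot i)$ by the law of large numbers, then exhibits an observable extension $t'$ of $t$ of length at most $K+1$, a representative $r' \in R$ reached by $t'$ in $\mathcal{H}_n$, and a single input $i'$ with $M(t'\cdot i') \neq M(r'\cdot i')$; this inequality (possibly a support mismatch) is exactly what $\diff_\fq(t'\cdot i', r'\cdot i')$ tests in the conformance check.

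The third step is the probabilistic bookkeeping. Under the assumed uniformly randomised, geometrically truncated equivalence test with $\prand = 1$, each round samples any fixed \SUL{}-observable trace with probability bounded below by a positive constant depending only on $M$, $\pstop$, $|\In|$ and the trace length — the computation underlying \cref{lem:sample_l_bar}, which is independent of the current hypothesis. Since these traces enter $\samples_n$ and there are infinitely many rounds, almost surely every observable trace of length at most $K+1$ is eventually contained in $\samples_n$ with multiplicity tending to infinity. Combined with the convergence of the Hoeffding test $\diff_\fq$ to an exact distribution comparison under $\alpha_n$ with $\sum_n \alpha_n n < \infty$ (the Borel–Cantelli argument already used for \cref{theorem:compatible_convergence}) and with the convergence of the estimated transition probabilities, there is almost surely a finite $N$ such that for every $n > N$ either $\mathcal{H}_n$ has the structure of $\mathcal{M}$ or the equivalence query returns a counterexample; taking the maximum of the finitely many per-structure thresholds yields the theorem.

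I expect the main obstacle to be the unwinding argument: showing that a distribution mismatch lying several steps inside a bounded-length counterexample must resurface as a \emph{one-step} mismatch $M(t'\cdot i') \neq M(r'\cdot i')$ visible to the one-input conformance check, which requires carefully tracking how hypothesis states correspond to representatives in $R$ and how the growing observation table keeps the needed extensions available and complete. A secondary subtlety is the dependence across rounds — the refine query is adaptive — so the Borel–Cantelli step must rely on the uniform per-round lower bound contributed by the equivalence test rather than on genuine independence.
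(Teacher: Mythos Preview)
Your proposal is correct and tracks the paper's argument closely: the same reduction via \cref{cor:structure} and \cref{lem:chaos}, the same bounded-length counterexample set (the paper's \cref{lem:sufficient_cex} is precisely your product-automaton bound, with $K = n_q^2+1$), the same uniform positive sampling probability (\cref{lem:non_zero_prob_c}), and the same Borel--Cantelli step for the Hoeffding tests (\cref{lem:find_cex}).

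The one substantive difference is the step you flag as the main obstacle. The paper does \emph{not} unwind. Its \cref{lem:find_cex} applies the conformance test directly at the counterexample $c = t\cdot i$: since $t$ and its representative $r$ reach the same hypothesis state, $H(t\cdot i) = H(r\cdot i)$, and because $H(r\cdot i)$ is by construction the normalised $\fq(r\cdot i)$, the test $\diff_\fq(t\cdot i, r\cdot i)$ is interpreted as a test of $M(t\cdot i) \neq H(r\cdot i)$; convergence then follows from the Hoeffding/Borel--Cantelli argument without ever decomposing $c$. Your unwinding would also work and is arguably more explicit about the fact that $\diff_\fq$ ultimately compares $M(t\cdot i)$ with $M(r\cdot i)$ rather than with the finite-sample $H(r\cdot i)$, but it is extra machinery the paper does not need. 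A minor omission: the paper also treats the case where $\mathcal{H}_n$ already has the structure of $\mathcal{M}$ via \cref{lem:non_changing_structure}, showing spurious counterexamples cannot perturb the structure; you skip this, which is harmless for the theorem as stated (the first disjunct then holds trivially) but is used in the paper's subsequent ``putting everything together'' step.
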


According to \cref{cor:structure}, there is an $N_\mathrm{struct}$ such that $\mathcal{H}_n$ has the same structure as in the exact setting and $\compatible_E(s,s') \Leftrightarrow \equivrow_E(s,s')$ for $n > N_\mathrm{struct}$. Therefore, we assume $n > N_\mathrm{struct}$ for the following discussion of counterexample search through the implemented equivalence queries $\eq$. Let $H_n$ be the semantics of $\mathcal{H}_n$. 
Recall that we apply two strategies for checking equivalence:
\begin{enumerate}
 \item Random testing with a uniformly randomised scheduler ($\prand = 1$): this form of testing of testing can find traces $s \cdot o$, with $s \in\TestSeq$ and $o \in \Out$, such that $H(s)(o) = 0$ and $M(s)(o) > 0$. While this form of search is coarse, we store all sampled traces in $\samples_n$ that is used by our second counterexample search strategy performing a fine-grained analysis. 
 \item Checking conformance with $\samples_n$: for all observed test sequences, we statistically check for differences between output distributions in $\mathcal{H}_n$ and distributions estimated from $\samples_n$ through applying $\diff_\fq$. Applying that strategy finds counterexample sequences $s \in \TestSeq$ such that $M(s) \neq \bot$ (as $s$ must have been observed) and approximately $M(s) \neq H(s)$.
 \end{enumerate}
 
\paragraph{Case 1.} 
If $\mathcal{H}_n$ and $\mathcal{M}$ have the same structure and $n > N_\mathrm{struct}$, such that $\equivrow_E(s,s') \Leftrightarrow \compatible_E(s,s')$, we may still find counterexamples that are spurious due to inaccuracies. Therefore, we will show that adding a prefix-closed set of traces to the set of short traces $S_n$ does not change the hypothesis structure, as this is performed by \cref{alg:overview} in response to counterexamples returned by $\eq$. 

\begin{lemma}
\label{lem:non_changing_structure}
If $\mathcal{H}_n$ has the same structure as $\mathcal{M}$ and $n > N_\mathrm{struct}$, then adding a prefix-closed set of observable traces $S_t$ to $S_n$ will neither introduce  closedness-violations nor inconsistencies, i.e. $\langle S_n\cup S_t,E_n,\Ta_n \rangle$ is closed and consistent. Consequently, the hypothesis structure does not change, i.e. $\mathcal{H}_n$ and $\mathrm{hyp}( S_n\cup S_t,E_n,\Ta_n)$ have the same structure.
\end{lemma}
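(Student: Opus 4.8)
The plan is to show that, for $n > N_\mathrm{struct}$, adding a prefix-closed set $S_t$ of observable traces changes neither the partition of short traces into compatibility classes nor the transitions they induce, so that $\langle S_n\cup S_t,E_n,\Ta_n\rangle$ is closed and consistent and its hypothesis is isomorphic to $\mathcal{H}_n$. The pivot is the identification of compatibility with reachability in $\mathcal{M}$: by \cref{cor:structure} the representative set $R$ of $\langle S_n,E_n,\Ta_n\rangle$ is in bijection with the states of $\mathcal{M}$; by \cref{lem:prefix_closed_r} it is prefix-closed, hence (as in the proof of \cref{lem:exact_t_same_support}) contained in $\overline{S}$; and \cref{theorem:compatible_convergence} gives $\compatible_E(s,s') \Leftrightarrow \equivrow_E(s,s') \Leftrightarrow s \equiv_M s'$ on the relevant traces, i.e.\ two observable traces are compatible exactly when they reach the same state of $\mathcal{M}$. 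I would first record that this equivalence may be assumed on all traces occurring in $S_t$ and $\LongTr(S_n\cup S_t)$: while $\mathcal{H}_n$ has the correct structure the counterexamples, and hence the prefix-closed sets $S_t$ that \cref{alg:overview} adds, are of bounded length, so only finitely many such $S_t$ occur and $N_\mathrm{struct}$ can be enlarged to make the finitely many additional Hoeffding comparisons correct without disturbing the Borel--Cantelli bound of \cref{theorem:compatible_convergence}.

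Granting this, closedness follows directly: every trace in $S_t$, and hence every $l \in \LongTr(S_n\cup S_t)$, is observable and reaches a unique state $q$ of $\mathcal{M}$ already realised by some $r \in R$, so every compatibility class is merely an old class enlarged by traces from $S_t$; there are still exactly $|Q|$ classes, the recomputed representative set $R'$ contains one trace per state of $\mathcal{M}$, and $l$ is compatible with the $R'$-representative reaching the same state as $l$. For consistency, take compatible $s,s' \in S_n\cup S_t$ and $i\cdot o$ with $\Ta_n(s\cdot i)(o)>0$ and $\Ta_n(s'\cdot i)(o)>0$; then $s\equiv_M s'$, so by determinism of $\mathcal{M}$ both $s\cdot i\cdot o$ and $s'\cdot i\cdot o$ reach the same state of $\mathcal{M}$, whence $s\cdot i\cdot o \equiv_M s'\cdot i\cdot o$ and $\compatible_E(s\cdot i\cdot o,s'\cdot i\cdot o)$, so the second consistency clause holds.

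To conclude that the structure is unchanged, re-run the argument of \cref{lem:prefix_closed_r} on the now closed and consistent $\langle S_n\cup S_t,E_n,\Ta_n\rangle$ to get $R'$ prefix-closed, hence $R' \subseteq \overline{S}$; then \cref{lem:sample_l_bar}, \cref{lem:exact_t_same_support} and \cref{lem:chaos} give, for all but finitely many $n$, positive sampled frequency on every observable one-step extension of $R'$ and unreachability of $q_\mathrm{chaos}$, so the transitions out of each representative have exactly the supports prescribed by $\mathcal{M}$. Mapping each $r' \in R'$ to the state of $\mathcal{H}_n$ reached by traces $\equiv_M$-equivalent to $r'$ is then a graph isomorphism between $\mathrm{hyp}(S_n\cup S_t,E_n,\Ta_n)$ and $\mathcal{H}_n$: it is a bijection, fixes the initial state and labels, and preserves the presence of transitions because compatible traces have matching successor classes; only the estimated probabilities may differ, which is permitted by "same structure". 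I expect the main obstacle to be the bookkeeping in the first paragraph — ensuring that compatibility remains an accurate proxy for $\mathcal{M}$-reachability on the newly added, possibly longer traces of $S_t$ rather than only on the fixed finite set $\overline{L}$; the rest is a routine re-use of the lemmas already established for $\langle S_n,E_n,\Ta_n\rangle$.
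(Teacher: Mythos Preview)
Your argument is the paper's in spirit: relate each new $t\in S_t$ to an existing short trace reaching the same $\mathcal{M}$-state and use the coincidence of $\compatible_E$ with $\equiv_M$ to carry over closedness and consistency. The paper's execution is considerably shorter, though. It simply picks, for each $t$, a $t_s\in S_n$ with $\delta_\mathrm{h}^*(t)=\delta_\mathrm{h}^*(t_s)$, observes $t\equiv_M t_s$, hence $\equivrow_E(t,t_s)$ and therefore $\compatible_E(t,t_s)$ (and likewise for one-step extensions), which already yields consistency; closedness then follows by invoking transitivity of $\compatible_E$ for $n>N_\mathrm{struct}$ to link $t\cdot i\cdot o$ to the $t_s'\in S_n$ that already witnessed closedness for $t_s\cdot i\cdot o$. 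There is no explicit re-derivation of the representative set or re-running of \cref{lem:prefix_closed_r}--\cref{lem:chaos} to establish the isomorphism; that part of your write-up is correct but heavier than needed.

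Your first-paragraph bookkeeping flags a real subtlety (\cref{theorem:compatible_convergence} is stated only for traces in $\overline{L}$), but the fix you propose --- that spurious counterexamples added while the structure is already correct have bounded length --- is not well founded: the conformance check against $\samples_n$ in the equivalence query may return arbitrarily long test sequences. The cleaner observation, which the paper effectively relies on, is that here one only needs the implication $t\equiv_M t_s\Rightarrow\compatible_E(t,t_s)$, i.e.\ the no-false-rejection direction of the Hoeffding test; that direction holds with confidence $(1-\alpha_n)^2$ for \emph{every} pair of observed test sequences and is therefore already covered by the Borel--Cantelli count of at most $2|\Out|\cdot n$ tests in the proof of \cref{theorem:compatible_convergence}, with no restriction to $\overline{L}$. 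By contrast, your consistency step infers $s\equiv_M s'$ from $\compatible_E(s,s')$ on the new traces, which is precisely the direction requiring large samples; the paper avoids this by routing through the well-sampled $t_s$ instead.
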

\begin{proof}

Let $t$ be a trace in $S_t$ and $q_t = \delta^*_\mathrm{h}(t)$ be the hypothesis state reached by $t$, which exists because $\mathcal{H}_n$ has the same structure as $\mathcal{M}$. Let $t_s \in S_n$ be a short trace also reaching $q_t$. Since $\mathcal{M}$ and $\mathcal{H}_n$ have the same structure, $t$ and $t_s$ also reach the same state of $\mathcal{M}$, therefore $t \equiv_M t_s$ (by reaching the same state both traces lead to the same future behaviour), implying $\equivrow_E(t,t_s)$. With $n > N_\mathrm{struct}$, we have $\compatible_E(t,t_s)$.
By the same reasoning, we have $\compatible_E(t \cdot i \cdot o,t_s\cdot i \cdot o)$ for any $i \in \In$, $o \in \Out$ with $M(t\cdot i)(o) > 0$; which is the condition for consistency of observation tables, i.e. adding $t$ to $S_n$ leaves the observation tables consistent. 

Furthermore because $\langle S_n,E_n,\Ta_n \rangle$ is closed, there exists a $t_s' \in S_n$, with $\compatible_E(t_s\cdot i \cdot o,t_s')$. Since $\compatible_E(t \cdot i \cdot o,t_s\cdot i \cdot o)$ and because $\compatible_E$ is transitive for $n > N_\mathrm{struct}$, we have $\compatible_E(t \cdot i \cdot o,t_s')$. Hence, adding $t$ as to $S_n$ does not violate closedness, because for each observable extensions of $t$, there exists a compatible short trace $t_s'$. 

\end{proof}

\paragraph{Case 2.} If the hypothesis $\mathcal{H}_n$ does not have the same structure as $\mathcal{M}$ and $n > N_\mathrm{struct}$, then $\mathcal{H}_n$ has fewer states than $\mathcal{M}$ (following \cref{lem:minimal} given that $\mathcal{H}$ is consistent with $\Ta_n$ and $\compatible_E(s,s') \Leftrightarrow \equivrow_E(s,s')$). Since $\mathcal{M}$ is minimal with respect to the number of states, $\mathcal{H}_n$ and $\mathcal{M}$ are not equivalent, thus a counterexample to observation equivalence exists and we are guaranteed to find any such counterexample after finitely many samples. 

\begin{lemma}
\label{lem:non_exact_minimal_mdp}
 If $\compatible_E(s,s') \Leftrightarrow \equivrow_E(s,s')$  for traces $s$ and $s'$ in $S_n$, then the hypothesis $\mathcal{H}_n$ derived from $\langle S_n,E_n,\Ta_n \rangle$ is the smallest \gls*{MDP} consistent with $\Ta_n$.\end{lemma}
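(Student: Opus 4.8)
\emph{Proof proposal.} The plan is to transcribe the proof of \cref{lem:minimal} into the sampling-based setting, the crucial enabling observation being that, under the hypothesis $\compatible_E(s,s')\Leftrightarrow\equivrow_E(s,s')$, the compatibility relation restricted to $S_n$ coincides with $\equivrow_E$ and is therefore a genuine equivalence relation: reflexivity and symmetry are immediate, and transitivity holds because $\equivrow_E$, being equality of the pair $\langle\lastOut(\cdot),\row(\cdot)\rangle$, is transitive, so $\compatible_E(s_1,s_2)$ and $\compatible_E(s_2,s_3)$ yield $\equivrow_E(s_1,s_3)$ and hence $\compatible_E(s_1,s_3)$. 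Consequently \cref{alg:create_comp_classes} partitions $S_n$ into the equivalence classes of $\equivrow_E$, the set $R$ contains exactly one representative per class, and the number of non-chaos states of $\mathcal{H}_n$ equals $k:=|S_n/\!\equivrow_E|$; moreover, as in \cref{lem:delta_row}, each such state $\langle\lastOut(r),\row(r)\rangle$ with $r\in R$ is the state reached by $r$ in $\mathcal{H}_n$. I would first record that $\mathcal{H}_n$ is itself consistent with $\Ta_n$ — by construction its transition distributions are the normalised frequencies at the complete cells and its transitions respect the observed support, the remaining argument being the analogue of \cref{lem:consistent} using closedness and consistency of $\langle S_n,E_n,\Ta_n\rangle$ — so that calling it ``smallest MDP consistent with $\Ta_n$'' is meaningful.

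For the lower bound, let $\mathcal{M}'=\langle Q',\In,\Out,q_0',\delta',L'\rangle$ with semantics $M'$ be any MDP consistent with $\Ta_n$, and take two short traces $s_1,s_2\in S_n$ lying in different $\equivrow_E$-classes. By the hypothesis $\lnot\compatible_E(s_1,s_2)$, hence either $\lastOut(s_1)\neq\lastOut(s_2)$, or there is $e\in E_n$ with $\diff_{\Ta_n}(s_1\cdot e,s_2\cdot e)$. In the first case the states reached by $s_1$ and $s_2$ in $\mathcal{M}'$ carry different labels and are thus distinct. In the second case, $\diff$ firing means the empirical data at the cells $s_1\cdot e$ and $s_2\cdot e$ differ either in support or in some normalised output frequency; since $\mathcal{M}'$ reproduces exactly those quantities, $M'(s_1\cdot e)\neq M'(s_2\cdot e)$, so $s_1$ and $s_2$ again reach distinct states of $\mathcal{M}'$. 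Thus the map sending $r\in R$ to the state of $\mathcal{M}'$ reached by $r$ is injective, so $\mathcal{M}'$ has at least $k$ states. It remains to account for $q_\mathrm{chaos}$: if $q_\mathrm{chaos}$ is unreachable in $\mathcal{H}_n$ — which by \cref{lem:chaos} holds for all but finitely many $n$, in particular in the regime $n>N_\mathrm{struct}$ in which this lemma is applied — then $\mathcal{H}_n$ has exactly $k$ states and minimality follows directly; if $q_\mathrm{chaos}$ is reachable, I would argue that the output $\mathrm{chaos}\notin\Out$ is precisely the device encoding an insufficiently observed transition, so consistency with $\Ta_n$ forces every such $\mathcal{M}'$ to contain the same absorbing chaos state, yielding the matching bound $k+1$.

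I expect the main obstacle to be making case (ii) of the lower-bound argument airtight: this requires pinning down the precise meaning of ``consistent with $\Ta_n$'' (exact reproduction of the normalised frequencies at complete cells, agreement of observed supports, and the chaos-state convention) and then checking that $\diff_{\Ta_n}$ returning $\true$ genuinely forces $M'(s_1\cdot e)\neq M'(s_2\cdot e)$ for \emph{every} MDP consistent with $\Ta_n$, rather than merely implying statistical distinguishability of the samples. Once that definition is fixed, the remaining steps are essentially a line-by-line rerun of \cref{lem:consistent} and \cref{lem:minimal}, with the transitivity observation of the first paragraph doing exactly the work needed to replace ``equivalence class'' by ``compatibility class'' throughout; a uniqueness-up-to-isomorphism statement, if wanted, would then follow by the argument of \cref{lem:isomorphic}.
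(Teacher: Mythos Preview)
Your argument is sound, but it takes a different route from the paper. The paper's proof is essentially three lines: it invokes \cref{cor:structure} to conclude that $\mathcal{H}_n$ has the same structure as the exact hypothesis $\mathrm{hyp}(S_n,E_n,T_n)$ built from the corresponding exact table $T_n$, and then imports minimality wholesale from \cref{lem:minimal} in the exact setting, noting that since $\diff_{\Ta_n}$ produces no spurious results (\cref{theorem:compatible_convergence}), consistency with $T_n$ and consistency with $\Ta_n$ ``with respect to $\diff_{\Ta_n}$'' coincide. In other words, the paper does not redo the lower-bound counting argument at all; it simply transfers it via the structural correspondence already established. Your approach instead reproves \cref{lem:minimal} from scratch in the sampling vocabulary, using the key observation that the hypothesis $\compatible_E\Leftrightarrow\equivrow_E$ makes compatibility an honest equivalence relation on $S_n$. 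This is more self-contained and makes the role of the lemma's hypothesis explicit, and it also surfaces the definitional issue you flag --- what exactly ``consistent with $\Ta_n$'' means --- which the paper resolves only implicitly by the phrase ``with respect to $\diff_{\Ta_n}$''. The paper's route is shorter precisely because it has already paid the cost of \cref{cor:structure}; yours avoids that dependency but must then confront the consistency definition directly, and your handling of the chaos state is more careful than the paper's, which silently works in the regime $n>N_\mathrm{struct}$ where $q_\mathrm{chaos}$ is unreachable.
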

 \begin{proof}
Recall that for a given observation table $\langle S, E,T\rangle$, the exact learning algorithm \LstarMdpE{} derives the smallest hypothesis consistent with $T$. By \cref{cor:structure}, $\mathcal{H}_n$ is the smallest \gls*{MDP} consistent $T$. As $\diff_{\Ta_n}$ does not produce spurious results for $n > N_\mathrm{struct}$ (\cref{theorem:compatible_convergence}), $\mathcal{H}_n$ is also the smallest
\gls*{MDP} consistent with $\Ta_n$ with respect to $\diff_{\Ta_n}$. 

 \end{proof}

\begin{lemma}
\label{lem:sufficient_cex}
Let $n_q$ be the number of states of $\mathcal{M}$, $C = \bigcup_{i = 0}^{n_q^2+1} (\Out \times 
\In)^{i}$ and $C^\mathrm{obs} = \{c | c \in C: M(c) \neq \bot\}$. For any other \gls*{MDP} 
$\mathcal{M}'$ with at most $n_q$ states and semantics $M'$, iff $\forall c \in C^\mathrm{obs}: M(c) 
= M'(c)$, then $\mathcal{M} \equiv_\mathrm{od} \mathcal{M}'$. 
\end{lemma}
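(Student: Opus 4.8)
The plan is to prove the non-trivial direction of the ``iff'': that agreement of $M$ and $M'$ on the short observable test sequences $C^\mathrm{obs}$ already forces $\mathcal{M} \equiv_\mathrm{od} \mathcal{M}'$. The converse is immediate, since $\mathcal{M} \equiv_\mathrm{od} \mathcal{M}'$ means $M = M'$ on all of $\TestSeq \supseteq C^\mathrm{obs}$. I would argue by contraposition: assuming $M(s) \neq M'(s)$ for some $s \in \TestSeq$, I would exhibit a \emph{short} and \emph{observable} counterexample, contradicting the hypothesis on $C^\mathrm{obs}$.

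First I would reduce to a counterexample $s$ with $M(s) \neq \bot$, by an argument analogous to \cref{rem:minimal_cex}: if every witness has $M(s) = \bot$, take a shortest one; then $M'(s) \neq \bot$, so the trace underlying $s$ is observable in $\mathcal{M}'$ but not in $\mathcal{M}$, and walking along it there is a first input after which $\mathcal{M}$ assigns probability $0$ to the next output while $\mathcal{M}'$ assigns it positive probability — the corresponding strictly shorter test sequence is observable in $\mathcal{M}$ (using input-enabledness from \cref{theorem:characterisation}) and is again a counterexample, contradicting minimality. The degenerate case $s = \epsilon$ is handled directly, since $M(\epsilon) \neq \bot$ and $\epsilon \in C^\mathrm{obs}$. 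So I may fix a counterexample $s = t \cdot i$ of minimal length among those with $M(s) \neq \bot$, where $t = o_0 i_1 o_1 \cdots i_m o_m \in \Traces$.

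Next I would run $s$ through the product of $\mathcal{M}$ and $\mathcal{M}'$. By minimality, $M$ and $M'$ agree on every proper prefix test sequence $o_0 \cdots o_{k-1} i_k$ of $s$; since these values decide observability of $t$ and $t$ is observable in $\mathcal{M}$, it is observable in $\mathcal{M}'$ too. Determinism then yields unique runs $q_0 \cdots q_m$ in $\mathcal{M}$ and $q_0' \cdots q_m'$ in $\mathcal{M}'$, i.e.\ a run $(q_0,q_0'),\dots,(q_m,q_m')$ through the at most $n_q \cdot n_q = n_q^2$ states of the product. If $m+1 > n_q^2$, by pigeonhole some product state repeats, say $(q_j,q_j') = (q_k,q_k')$ with $j < k$; excising the segment between indices $j$ and $k$ from $t$ gives a strictly shorter trace still observable in \emph{both} MDPs (the input $i_{k+1}$ still fires $o_{k+1}$ from $q_j = q_k$, resp.\ from $q_j' = q_k'$) and reaching $q_m$, resp.\ $q_m'$; appending $i$ gives a shorter counterexample with the \emph{same} terminal output distributions $M(s)$ and $M'(s)$ — contradicting minimality. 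Hence $m+1 \le n_q^2$, so $s \in (\Out \times \In)^{m+1} \subseteq C$ and $s \in C^\mathrm{obs}$, which contradicts the assumption. Therefore $\mathcal{M} \equiv_\mathrm{od} \mathcal{M}'$.

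The routine part is the observability bookkeeping (tracking which prefixes are test sequences, using prefix-closedness of $\semdom$ and input-enabledness). The one step that deserves care — the main, though minor, obstacle — is the loop excision: it must be performed on the \emph{product} run, not on $\mathcal{M}$ alone, so that $M(s)$ and $M'(s)$ are preserved \emph{simultaneously} when the trace is shortened, and one must verify that the shortened trace remains observable in $\mathcal{M}'$ as well as in $\mathcal{M}$. This is the standard pumping argument for equivalence of finite-state devices, here adapted to the deterministic-MDP setting where a trace pins down a single product state.
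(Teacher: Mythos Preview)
Your proof is correct and follows essentially the same approach as the paper: both run a counterexample through the product $\mathcal{M} \times \mathcal{M}'$, apply pigeonhole on the at most $n_q^2$ product states, and excise a loop to obtain a short counterexample in $C^\mathrm{obs}$, with the reduction to observable counterexamples handled via \cref{rem:minimal_cex}. The only cosmetic difference is packaging: you argue by contraposition via a minimal counterexample, whereas the paper phrases it as a direct reduction of arbitrary long test sequences to ones in $C$.
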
 

Hence, there is a finite set $C^\mathrm{obs}$ of sequences with lengths bounded by $n_q^2+1$ such that we if we test all sequence in $C^\mathrm{obs}$, we can check equivalence with certainty. 
\begin{proof}
Let $\mathcal{M}$ and $\mathcal{M}'$ with states $Q$ and $Q'$ as defined above, i.e. $|Q| = n_q$ and $|Q'|\leq n_q$, and let $\mathrm{reachQSeq}(t)\in (Q\times Q')^*$ be the sequence of state-pairs visited along a trace $t$ by $\mathcal{M}$ and $\mathcal{M}'$, respectively. $\mathcal{M} \equiv_\mathrm{od} \mathcal{M}'$ iff for all $t \in \Traces$ and $i\in \In$, we have $M(t \cdot i) = M'(t \cdot i)$. If the length of $t \cdot i$ is at most $n_q^2 + 1$, then $t \cdot i \in C$. Otherwise, $\mathrm{reachQSeq}(t)$ contains duplicated state pairs, because $|Q \times Q'|\leq n_q^2$. For $t$ longer than $n_q^2$, we can remove loops on $Q\times Q'$ from $t$ to determine a trace $t'$ of length at most $n_q^2$ such that $\mathrm{reachQSeq}(t)[|t|] = \mathrm{reachQSeq}(t')[|t'|]$, i.e  such that $t$ and $t'$ reach the same state pair. Since $t$ reaches the same state as $t'$ in $\mathcal{M}$ and in $\mathcal{M}'$, we have $M(t \cdot i) = M(t' \cdot i)$ and $M'(t \cdot i) = M'(t' \cdot i)$, thus $M(t \cdot i) = M'(t \cdot i) \Leftrightarrow M(t' \cdot i) = M'(t' \cdot i)$. Consequently for all $t \cdot i \in \Traces \cdot \In$: either $t\cdot i \in C$, or there is a $t'\cdot i \in C$ leading to the same check between $\mathcal{M}$ and $\mathcal{M}'$.

We further restrict $C$ to $C^\mathrm{obs}$, by considering only observable test sequences in $C$. 
This restriction is justified by \cref{rem:minimal_cex}. In summary:
\begin{align*}
\mathcal{M} \equiv_\mathrm{od} \mathcal{M}' &\Leftrightarrow \forall c \in \TestSeq :M(c) = M'(c) \\
&\Leftrightarrow \forall c \in C :M(c) = M'(c) \\
&\Leftrightarrow \forall c \in C^\mathrm{obs} :M(c) = M'(c) \\
\end{align*}

\end{proof}



\begin{lemma}
\label{lem:non_zero_prob_c}
 Under the randomised testing strategy with $\prand = 1$ and $\pstop < 1$, all $c$ in $C^\mathrm{obs}$ have non-zero probability to be observed. 
\end{lemma}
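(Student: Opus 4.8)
The plan is to bound, for a fixed $c \in C^\mathrm{obs}$, the probability that a single run of the equivalence test samples a trace that extends $c$ as a test sequence, and then to observe that this bound is a product of finitely many strictly positive factors, hence strictly positive.

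First I would unfold the testing strategy in the convergence configuration: since $\prand = 1$ and we no longer abort on a detected difference, \cref{alg:state_coverage_testing} degenerates to a pure random walk on the \SUL{} --- after a \textbf{reset} it repeatedly draws an input uniformly at random from $\In$, performs it, observes the output, and halts after each step with probability $\pstop$, so the length of the walk is geometrically distributed. I would then write $c = o_0 \cdot i_1 \cdot o_1 \cdots o_{k-1} \cdot i_k \in (\Out \times \In)^k$ with $k \le n_q^2 + 1$, the case $k = 0$ (that is, $c = \epsilon$) being immediate, and set $t_j = o_0 \cdot i_1 \cdots o_{j-1} \cdot i_j$ for its input-terminated prefixes. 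Because $c$ is observable, $\delta^*$ is defined along its trace part, which is precisely the statement that $M(t_j)(o_j) > 0$ for every $1 \le j \le k-1$, and a \textbf{reset} necessarily returns $o_0 = L(q_0)$.

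Next I would decompose the event ``the sampled trace extends $c$'' along the first $k$ steps of the walk, exactly as in the proof of \cref{lem:sample_l_bar}: the walk must not halt before performing its $k$-th input (probability $(1-\pstop)^{k-1} > 0$, one factor per stop-check after steps $1,\dots,k-1$), it must draw precisely $i_1,\dots,i_k$ at those steps (probability $|\In|^{-k} > 0$, using $\prand = 1$), and it must produce the intermediate outputs $o_1,\dots,o_{k-1}$ (probability $\prod_{j=1}^{k-1} M(t_j)(o_j) > 0$; the $k$-th output is unconstrained). Multiplying, a single test observes $c$ with probability at least $|\In|^{-k} \cdot (1-\pstop)^{k-1} \cdot \prod_{j=1}^{k-1} M(t_j)(o_j)$, a finite product ($k \le n_q^2 + 1$) of strictly positive numbers, which is exactly what the lemma asserts.

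I do not expect a serious obstacle here, since the argument mirrors \cref{lem:sample_l_bar} almost verbatim. The one point needing care is the bookkeeping of the halting coin relative to the input draws and the observed outputs, so that no probability factor is double-counted or omitted, together with the reliance on the convergence configuration so that the test length is genuinely geometric rather than truncated by early counterexample reporting; positivity of each factor then follows from $\pstop < 1$, from observability of $c$, and from the bound $k \le n_q^2 + 1$ on its length (so that the product is finite).
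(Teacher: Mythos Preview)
Your proposal is correct and follows essentially the same approach as the paper's own proof: both unfold the randomised testing strategy into a uniform random walk with geometric length, factor the probability of observing $c$ into the input-choice term $|\In|^{-k}$, the continuation term $(1-\pstop)^{k-1}$, and the output-match term $\prod_{j} M(t_j)(o_j)$, and then invoke observability of $c$ for positivity of each factor. Your bookkeeping of the stop coin and the explicit mention of the convergence configuration (no early abort) are slightly more careful than the paper's presentation, but the argument is the same.
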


\begin{proof}
Due to $\prand = 1$ and $\pstop < 1$ we apply uniformly randomised inputs during testing and each test has a length that is distributed dependent on $\pstop$. Let $c = o_0 i_1 o_1 \cdots o_{n-1} \cdot i_n$ be a sequence in $C^\mathrm{obs}$ with $c[\ll k]$ being its prefix of length $k$, then the probability $p_c$ of observing $c$ is (note that we may observe $c$ as a prefix of another sequence):
$$ 
p_c = \frac{1}{|\In|^n} M(c[\ll 1])(o_1) \cdot M(c[\ll 2])(o_2) \cdots M(c[\ll n-1])(o_{n-1}) \cdot (1-\pstop)^{n-1}
$$
By definition of $C^\mathrm{obs}$, we have $M(c[\ll j])(o_j) > 0$ for all indexes $j$ and $c$ in $C^\mathrm{obs}$, therefore $p_c > 0$. 
\end{proof}

In every round of \LstarMdp{}, we check for conformance between $\samples_n$ and the hypothesis $\mathcal{H}_n$ and return a counterexample if we detect a difference via $\diff_\fq$. Since we apply $\diff_\fq$, we follow a similar reasoning as for the convergence of hypothesis construction. Here, we approximate $M(c) \neq H(c)$ for $c \in \TestSeq$ by $\diff_\fq(t \cdot i, r \cdot i)$, where $c = t \cdot i$ for a trace $t$, input $i$ and the hypothesis state $\langle \lastOut(r), \row(r)\rangle$ reached by $t$, where $r\in R$ is the corresponding representative short trace. 

\begin{lemma}
\label{lem:find_cex}
  Given $\alpha_n$ such that $\sum_n \alpha_n n < \infty$, then with probability one \linebreak
  $M(c) \neq H(c) \Leftrightarrow \diff_\fq(t \cdot i, r \cdot i)$ for 
  $c= t \cdot i \in C^\mathrm{obs}$ and $r$ as defined above, except for finitely many $n$.
\end{lemma}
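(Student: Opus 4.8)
The plan is to mirror the Borel--Cantelli argument already used in the proof of Theorem~\ref{theorem:compatible_convergence}, but now applied to the specific family of test sequences that matters for counterexample detection, namely those in $C^\mathrm{obs}$. The key observation is that, for $n > N_\mathrm{struct}$, the hypothesis state $\langle \lastOut(r),\row(r)\rangle$ reached by a trace $t$ is stable (its structure no longer changes, by \cref{cor:structure}), so the representative $r \in R$ with $t \equiv_{H_n} r$ is well defined, and moreover $t \equiv_{H_n} r$ implies $H_n(t\cdot i) = H_n(r\cdot i)$. Hence checking $M(c) \neq H_n(c)$ for $c = t\cdot i$ is, up to the approximation error of the statistical test, the same as checking $\diff_\fq(t\cdot i, r\cdot i)$: we are comparing the true \SUL{} distribution after $t\cdot i$ against the \SUL{} distribution after $r\cdot i$, which by consistency of $\mathcal{H}_n$ with $\Ta_n$ equals $H_n(r\cdot i) = H_n(t\cdot i)$. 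So the claim reduces to showing that the Hoeffding-based test $\diff_\fq$ gives the correct verdict on these finitely many sequence pairs for all but finitely many $n$.

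First I would fix the finite index set: $C^\mathrm{obs}$ is finite (lengths bounded by $n_q^2+1$, by \cref{lem:sufficient_cex}), and for each $c = t\cdot i \in C^\mathrm{obs}$ the associated $r\in R$ ranges over the finite set $R \subseteq \overline{S}$ (using $R\subseteq\overline{S}$ as established inside the proof of \cref{lem:exact_t_same_support}). So there are only finitely many pairs $(t\cdot i, r\cdot i)$ on which the test is invoked. Next I would bound the failure probability: let $B_n$ be the event that $\diff_\fq(t\cdot i,r\cdot i) \not\Leftrightarrow (M(t\cdot i)\neq M(r\cdot i))$ for some such pair. As in Theorem~\ref{theorem:compatible_convergence}, each call to $\diff_\fq$ involves at most $|\Out|$ Hoeffding tests each with confidence above $(1-\alpha_n)^2$, and there are at most a constant number $K$ of pairs, so $p(B_n) \le K\cdot|\Out|\cdot(1-(1-\alpha_n)^2) \le 2K|\Out|\,\alpha_n$; this is summable (indeed $\sum_n\alpha_n n < \infty$ is more than enough), so by Borel--Cantelli $B_n$ occurs only finitely often. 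Then I would invoke \cref{lem:non_zero_prob_c}: every $c \in C^\mathrm{obs}$ has strictly positive probability of being observed in each testing round, and likewise every prefix of $r\cdot i$ (since $r\in\overline S$ and $r\cdot i$ is observable, cf.\ \cref{lem:sample_l_bar}), so with probability one $\samples_n(c) > 0$ and $\samples_n(r\cdot i) > 0$ eventually, meaning $n_1,n_2 > 0$ and the completeness precondition of $\diff_\fq$ in \cref{def:different} is satisfied ($\ncomplete = 1$). Finally, since $\alpha_n = n^{-r}$ decays only polynomially, $\epsilon_{\alpha_n}(n_1,n_2)\to 0$ as the sample counts grow, so beyond a finite $n$ the test threshold is below half the smallest non-zero gap between any two of the finitely many true output probabilities appearing in $\mathcal{M}$, and the test becomes an exact comparison of $M(t\cdot i)$ against $M(r\cdot i)$ --- hence of $M(c)$ against $H_n(c)$.

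The main obstacle I anticipate is the bookkeeping around which $n$ one needs: the "finitely many $n$" exception has to absorb (i) $N_\mathrm{struct}$ so that $r$ and the hypothesis structure are stable, (ii) the Borel--Cantelli cutoff for $B_n$, (iii) the almost-sure time by which every relevant $c$ and $r\cdot i$ has been sampled often enough for $\epsilon_{\alpha_n}$ to be below the resolution gap, and (iv) the fact that the sample counts $n_1(c),n_2(c)$ themselves grow only almost surely and at a data-dependent rate --- one must argue they tend to infinity with probability one (again via the positive per-round observation probabilities of \cref{lem:non_zero_prob_c} and a second-Borel--Cantelli / law-of-large-numbers style argument) before the shrinking-threshold argument applies. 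Care is also needed that the approximation step "$M(c)\neq H_n(c) \Leftrightarrow M(t\cdot i)\neq M(r\cdot i)$" genuinely holds --- this uses $t \equiv_{H_n} r$ plus consistency of $\mathcal{H}_n$ with $\Ta_n$, both valid for $n > N_\mathrm{struct}$ --- so I would state that equivalence explicitly before reducing to the Hoeffding convergence already handled in Theorem~\ref{theorem:compatible_convergence}.
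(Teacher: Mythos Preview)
Your proposal is correct and follows the same overall strategy as the paper: reduce $M(c)\neq H(c)$ to $M(t\cdot i)\neq M(r\cdot i)$ via $t\equiv_{H_n} r$, apply a Borel--Cantelli argument to the Hoeffding tests, and invoke \cref{lem:non_zero_prob_c} to ensure every $c\in C^\mathrm{obs}$ is eventually sampled.

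The one noteworthy difference is in the union bound. The paper counts \emph{all} observed test sequences, bounding the number of Hoeffding tests per round by $n\cdot|\Out|$ and hence obtaining $p(B_n)\le 2n|\Out|\alpha_n$, which is exactly why the hypothesis $\sum_n n\alpha_n<\infty$ is needed. You instead exploit that $C^\mathrm{obs}$ is finite (\cref{lem:sufficient_cex}) and $R\subseteq\overline{S}$ (from the proof of \cref{lem:exact_t_same_support}), so only a fixed constant $K$ of pairs ever matters, giving $p(B_n)\le 2K|\Out|\alpha_n$ and requiring only $\sum_n\alpha_n<\infty$. Your route is tighter and makes the summability condition look gratuitous for this lemma in isolation; the paper's coarser count has the virtue of reusing verbatim the bookkeeping from \cref{theorem:compatible_convergence} and of covering the full conformance check (over all observed sequences, not just $C^\mathrm{obs}$) in one stroke. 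Both are valid.
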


\begin{proof}
We use the identity $H(t \cdot i) = H(r \cdot i)$ for traces $t$ and $r$ and inputs $i$, which holds because $t$ and $r$ reach the same state in the hypothesis $\mathcal{H}$. Applying that, we test for $M(t \cdot i) \neq H(t \cdot i)$ by testing $M(t \cdot i) \neq H(r \cdot i)$ via $\diff_\fq(t \cdot i, r \cdot i)$. We perform $|\Out|$ tests for each unique observed sequence $c$, therefore we apply at most $n \cdot |\Out|$ tests. Let $B_n$ be the event that any of these tests is wrong, that is, $M(t \cdot i) \neq H(r \cdot i) \not\Leftrightarrow \diff_\fq(t \cdot i, r \cdot i)$ for at least one observed $c = t\cdot i$. Due to the confidence level greater than $(1-\alpha_n)^2$ of the tests, the probability $p(B_n)$ of $B_n$ is bounded by $p(B_n) \leq n \cdot |\Out| \cdot (1-(1-\alpha_n)^2) \leq 2 \cdot n\cdot |\Out|\cdot \alpha_n$. By choosing $\alpha_n$ such that $\sum_n \alpha_n n < \infty$, we can apply the Borel-Cantelli lemma as above. Hence, $B_n$ only happens finitely often, thus there is an $N_1$ such that for all $n > N_1$ we have $M(t \cdot i) \neq H(r \cdot i) \Leftrightarrow \diff_\fq(t \cdot i, r \cdot i)$ for all observed $c = t\cdot i$. Furthermore, the probability of observing any $c$ of the finite set $C^\mathrm{obs}$ during testing is greater than zero (\cref{lem:non_zero_prob_c}), thus there is a finite $N_2$ such that $\samples_n$ contains all $c \in C^\mathrm{obs}$ for $n > N_2$. Consequently, there is an $N_\mathrm{cex}$, 
such that \cref{lem:find_cex} holds for all $n > N_\mathrm{cex}$.
\end{proof}

\cref{lem:non_exact_minimal_mdp} states that hypotheses $\mathcal{H}_n$ are minimal after finitely many $n$ and thus all potential counterexamples are in $C^\mathrm{obs}$ (\cref{lem:sufficient_cex}). From \cref{lem:find_cex}, it follows that we will identify a counterexample in $C^\mathrm{obs}$ if one exists. 
Combining that with \cref{lem:non_changing_structure} concludes the proof of \cref{theorem:conv_eq_queries}.

\subsubsection{Putting Everything Together.}

We have established that after finitely many $n$, the sampling-based hypothesis $\mathcal{H}_n$ has the same structure as in the exact
setting (\cref{cor:structure}). Therefore, certain properties of the exact learning algorithm \LstarMdpE{} hold for the 
sampling-based \LstarMdp{} as well. The derived hypotheses are therefore minimal, i.e. they have at most as many states as $\mathcal{M}$. 
As with \LstarMdpE{}, adding a non-spurious counterexample to the trace set $S_n$ introduces at least one state in the derived hypotheses. 
Furthermore, we have shown that equivalence queries return non-spurious counterexamples, except for finitely many $n$ (\cref{theorem:conv_eq_queries}). 
Consequently, after finite $n$ we arrive at a hypothesis $\mathcal{H}_n$ with the same structure as $\mathcal{M}$. We derive transition probabilities by 
computing empirical means, thus by the law of large numbers these estimated probabilities converge to the true probabilities. 
Hence, we learn a hypothesis $\mathcal{H}_n$ isomorphic to the canonical \gls*{MDP} $\mathcal{M}$ in the limit as stated by \cref{theorem:convergence_non_exact}.

\subsubsection{More efficient parameters.}
So far, we discussed a particular parametrisation of \LstarMdp{}.
Among others, we used uniformly random input choices for equivalence testing with $\prand = 1$,
and instantiated $\complete$ to accept samples as complete after only $\ncomplete = 1$ observation.
This simplified the proof, but is inefficient in practical experiments.
However, the arguments based on $\ncomplete = 1$, such as \autoref{lem:exact_t_same_support} and \autoref{lem:chaos}, are easily extended to small constant values of $\ncomplete$:
Since the samples are collected independently, any observation that occurs at least once after a finite number of steps also occurs at least $\ncomplete$ times after a finite number of steps.

\end{full}

\section{Experiments}
\label{sec:eval}
\begin{full}
In active automata learning, our goal is generally to learn an \gls*{MDP} which is equivalent to the true 
\gls*{MDP} modelling the \SUL. This changes in the stochastic setting, where we want 
to learn a model close to true model, as equivalence can hardly be achieved. 
Note that we perform experiments with known models, which we treat as a black boxes during learning. 
As a reference, we also learn models and perform the same measurements with \textsc{IoAlergia}. 
Our experiments aim to measure the similarity between the learned models and the true model:
\begin{enumerate}
 \item We compute the discounted bisimilarity distance between the true models and the learned \glspl*{MDP}~\cite{DBLP:conf/mfcs/BacciBLM13,DBLP:conf/qest/BacciBLM13}.
 We adapted the distance measure from \glspl*{MDP} with rewards to labelled \glspl*{MDP} by defining a distance of $1$ between states with different labels.
 \item Additionally, we perform probabilistic model-checking. We
 compute and compare maximal probabilities of manually defined temporal properties 
 with all models. The computation is done via \textsc{Prism}~\cite{DBLP:conf/cav/KwiatkowskaNP11}.
\end{enumerate}
Experimental results and the implementation can be found in the evaluation material~\cite{eval_material}.

\end{full}
\begin{conference}
We evaluate the sampling-based \LstarMdp{} and compare it to the passive \textsc{IoAlergia}~\cite{DBLP:journals/ml/MaoCJNLN16} by learning a gridworld model with both techniques. 
Experimental results and the implementation can be found in the evaluation material~\cite{eval_material}.
We treat the known true \gls*{MDP} model $\mathcal{M}$ as a black box for learning and measure similarity to this model using two criteria:
 (1) the discounted bisimilarity distance~\cite{DBLP:conf/mfcs/BacciBLM13,DBLP:conf/qest/BacciBLM13} between $\mathcal{M}$ and the learned \glspl*{MDP} and 
 (2) the difference between probabilistic model-checking results for $\mathcal{M}$ and learned \glspl*{MDP}. We
 compute maximal probabilities of manually defined temporal properties 
 with all models using \textsc{Prism} 4.4~\cite{DBLP:conf/cav/KwiatkowskaNP11}.
\end{conference}

\paragraph{Measurement Setup.}

\begin{full}
As in~\cite{DBLP:journals/ml/MaoCJNLN16}, we configure \textsc{IoAlergia} 
with a data-dependent significance parameter for the compatibility check, by setting $\epsilon_N = \frac{10000}{N}$,
where $N$ is the total combined length of all traces used for learning. This parameter serves a role analogous to the 
$\alpha$ parameter for the Hoeffding bounds used by \LstarMdp{}. In contrast to \textsc{IoAlergia}, we observed that \LstarMdp{} shows better performance with 
non-data-dependent $\alpha$, therefore we set $\alpha = 0.05$ for all experiments. 
Motivated by convergence guarantees given in~\cite{DBLP:journals/ml/MaoCJNLN16},
we collect traces for \textsc{IoAlergia} by sampling with a scheduler that selects inputs according to a uniform distribution. 
The length of these traces is geometrically distributed with a parameter $p_l$ and the number of traces is chosen such that
\textsc{IoAlergia} and \LstarMdp{} learn from approximately the same amount of data. 
\end{full}
\begin{conference}
As in~\cite{DBLP:journals/ml/MaoCJNLN16}, we use a data-dependent $\epsilon_N = \frac{10000}{N}$ for \textsc{IoAlergia},
where $N$ is the combined length of all learning traces. This parameter serves a role analogous to the $\alpha$ parameter
of \LstarMdp{}. In contrast, we observed that \LstarMdp{} performs better with a fixed $\alpha = 0.05$. 
We sample traces for \textsc{IoAlergia} with a length geometrically distributed with parameter $p_l$ 
and inputs chosen uniformly at random, also as in~\cite{DBLP:journals/ml/MaoCJNLN16}.
The number of traces is chosen such that \textsc{IoAlergia} and \LstarMdp{} learn from approximately the same amount of data. 
\end{conference}

We implemented \LstarMdp{} and \textsc{IoAlergia} in Java.
\begin{full}
In addition 
to our Java implementations, we use \textsc{Prism}  4.4~\cite{DBLP:conf/cav/KwiatkowskaNP11} for probabilistic model-checking. 
and an adaptation of the \textsc{MDPDist} library available at~\cite{web_bisimdist}
for computing bisimilarity distances.
\end{full}
\begin{conference}
 Additionally, we use the \textsc{MDPDist} library~\cite{web_bisimdist} for bisimilarity distances, adapted
 to labelled \glspl*{MDP}.
\end{conference}
We performed the experiments with a Lenovo Thinkpad T450 with 16 GB RAM, an Intel 
Core i7-5600U CPU with $2.6$ GHz and running Xubuntu Linux 18.04.

\begin{full}
\subsection{First Gridworld}
\begin{wrapfigure}[10]{t}{5cm}
 \begin{center}
\begin{tikzpicture}[scale=1, transform shape, font=\scriptsize, inner sep=0pt,outer sep = 0pt]

\node[fill=black, minimum width = 0.2cm, minimum height = 0.2cm] at ( -1.1,+1.6) {};
\node[fill=black, minimum width = 0.5cm, minimum height = 0.2cm] at ( -0.75,+1.6) {};
\node[fill=black, minimum width = 0.5cm, minimum height = 0.2cm] at ( -0.25,+1.6) {};
\node[fill=black, minimum width = 0.5cm, minimum height = 0.2cm] at (  0.25,+1.6) {};
\node[fill=black, minimum width = 0.5cm, minimum height = 0.2cm] at (  0.75,+1.6) {};
\node[fill=black, minimum width = 0.5cm, minimum height = 0.2cm] at (  1.25,+1.6) {};
\node[fill=black, minimum width = 0.2cm, minimum height = 0.2cm] at (  1.6,+1.6) {};

\node[fill=black, minimum width = 0.2cm, minimum height = 0.2cm] at ( -1.1,-1.1) {};
\node[fill=black, minimum width = 0.5cm, minimum height = 0.2cm] at ( -0.75,-1.1) {};
\node[fill=black, minimum width = 0.5cm, minimum height = 0.2cm] at ( -0.25,-1.1) {};
\node[fill=black, minimum width = 0.5cm, minimum height = 0.2cm] at (  0.25,-1.1) {};
\node[fill=black, minimum width = 0.5cm, minimum height = 0.2cm] at (  0.75,-1.1) {};
\node[fill=black, minimum width = 0.5cm, minimum height = 0.2cm] at (  1.25,-1.1) {};
\node[fill=black, minimum width = 0.2cm, minimum height = 0.2cm] at (  1.6,-1.1) {};

\node[fill=black, minimum width = 0.2cm, minimum height = 0.5cm] at ( -1.1,-0.75) {};
\node[fill=black, minimum width = 0.2cm, minimum height = 0.5cm] at ( -1.1,-0.25) {};
\node[fill=black, minimum width = 0.2cm, minimum height = 0.5cm] at ( -1.1, 0.25) {};
\node[fill=black, minimum width = 0.2cm, minimum height = 0.5cm] at ( -1.1, 0.75) {};
\node[fill=black, minimum width = 0.2cm, minimum height = 0.5cm] at ( -1.1, 1.25) {};

\node[fill=black, minimum width = 0.2cm, minimum height = 0.5cm] at (  1.6,-0.75) {};
\node[fill=black, minimum width = 0.2cm, minimum height = 0.5cm] at (  1.6,-0.25) {};
\node[fill=black, minimum width = 0.2cm, minimum height = 0.5cm] at (  1.6, 0.25) {};
\node[fill=black, minimum width = 0.2cm, minimum height = 0.5cm] at (  1.6, 0.75) {};
\node[fill=black, minimum width = 0.2cm, minimum height = 0.5cm] at (  1.6, 1.25) {};

\node at (-0.75,+1.25) {C};
\node[circle,minimum size=0.4cm,draw,thick] at (-0.75,+1.25) {};
\node at (-0.25,+1.25) {C};
\node at ( 0.25,+1.25) {C};
\node[fill=black!40, minimum width = 0.5cm, minimum height = 0.5cm] at ( 0.75,+1.25) {M};
\node[fill=black, minimum width = 0.5cm, minimum height = 0.5cm] at ( 1.25,+1.25) {};

\node[fill=black, minimum width = 0.5cm, minimum height = 0.5cm] at (-0.75,+0.75) {};
\node[fill=black, minimum width = 0.5cm, minimum height = 0.5cm] at (-0.25,+0.75) {};
\node[fill=black, minimum width = 0.5cm, minimum height = 0.5cm] at ( 0.25,+0.75) {};
\node at ( 0.75,+0.75) {C};
\node[fill=black!40, minimum width = 0.5cm, minimum height = 0.5cm] at ( 1.25,+0.75) {M};

\node[fill=black!20, minimum width = 0.5cm, minimum height = 0.5cm] at (-0.75,+0.25) {S};
\node[fill=black!40, minimum width = 0.5cm, minimum height = 0.5cm] at (-0.25,+0.25) {M};
\node[fill=black!30, minimum width = 0.5cm, minimum height = 0.5cm] at ( 0.25,+0.25) {G};
\node at ( 0.75,+0.25) {C};
\node[fill=black!30, minimum width = 0.5cm, minimum height = 0.5cm] at ( 1.25,+0.25) {G};

\node[fill=black!40, minimum width = 0.5cm, minimum height = 0.5cm] at (-0.75,-0.25) {M};
\node[fill=black!30, minimum width = 0.5cm, minimum height = 0.5cm] at (-0.25,-0.25) {G};
\node at ( 0.25,-0.25) {C};
\node[fill=black!40, minimum width = 0.5cm, minimum height = 0.5cm] at ( 0.75,-0.25) {M};
\node[fill=black, minimum width = 0.5cm, minimum height = 0.5cm] at ( 1.25,-0.25) {};

\node[circle,minimum size=0.4cm,draw,thick] at (-0.75,-0.75) {};
\node[circle,minimum size=0.3cm,draw,thick] at (-0.75,-0.75) {};

\node[fill=black!30, minimum width = 0.5cm, minimum height = 0.5cm] at (-0.75,-0.75) {G};

\node[circle,minimum size=0.4cm,draw,thick] at (-0.75,-0.75) {};
\node[circle,minimum size=0.25cm,draw,thick] at (-0.75,-0.75) {};
\node[fill=black!20, minimum width = 0.5cm, minimum height = 0.5cm] at (-0.25,-0.75) {S};
\node[fill=black!40, minimum width = 0.5cm, minimum height = 0.5cm] at ( 0.25,-0.75) {M};
\node[fill=black!30, minimum width = 0.5cm, minimum height = 0.5cm] at ( 0.75,-0.75) {G};
\node[fill=black, minimum width = 0.5cm, minimum height = 0.5cm] at ( 1.25,-0.75) {};
\node at (1,+1.25){};
\node at (1,-1.25){};
\draw[step=0.5cm,color=gray] (-1,-1) grid (1.5,1.5);

\end{tikzpicture}
\vspace{-0.2cm}
\begin{full}
\caption{The first gridworld}
\label{fig:gridworld_1}
\end{full}
\begin{conference}
\caption{The evaluation gridworld}
\label{fig:gridworld_1}
\end{conference}
\end{center}
\end{wrapfigure}

Models similar to our gridworlds have, e.g., been considered in the context of learning control strategies~\cite{DBLP:conf/rss/FuT14}.
Basically, a robot moves around in a world of tiles of different terrains. It may make errors 
in movement, e.g. move south west instead of south with an error probability depending on the target terrain. 
Our aim is to learn an environment model, i.e. a map. 
Figure \ref{fig:gridworld_1} shows the first gridworld used for evaluation.
Black tiles are walls and other 
terrains are represented by different shades of grey and letters (Sand, Mud, Grass \& Concrete).
A circle marks the initial location and a double circle marks a goal location.
Four inputs enable movement in four directions. Observable
outputs include the different terrains, walls, and a label indicating the goal. 
The true model of this gridworld has $35$ different states.

\end{full}
\begin{conference}
\paragraph{First Gridworld.}

Models similar to our gridworld have, e.g., been considered in the context of learning control strategies~\cite{DBLP:conf/rss/FuT14}. 
Basically, a robot moves around in a world of tiles of different terrains. It may make errors 
in movement, e.g. move south west instead of south with an error probability depending on the target terrain. 
Our aim is to learn an environment model, i.e. a map. 
\cref{fig:gridworld_1} shows our gridworld.
Black tiles are walls and other 
terrains are represented by different shades of grey and letters (Sand, Mud, Grass \& Concrete).
A circle marks the initial location and a double circle marks a goal location.
Four inputs enable movement in four directions. Observable
outputs include the different terrains, walls, and a label indicating the goal. 
The true model of this gridworld has $35$ different states. All terrains except Concrete have a distinct positive error probability.

\end{conference}

\begin{full}
We set the sampling parameters to
$\nbatch = \nretest = 300$, $\ntest = 50$, $\pstop = 0.25$ and $\prand = 0.25$.
As stopping parameter served $t_\mathrm{unamb} = 0.99$, $\rmin =500$ and $\rmax=4000$. 
Finally, the parameter $p_l$ for \textsc{IoAlergia}'s  geometric trace length distribution 
was set to $0.125$.
\end{full}
\begin{conference} 
We configured sampling by
$\nbatch = 300$, $\ntest = 50$, $\pstop = 0.25$ and $\prand = 0.25$, and 
stopping by $t_\mathrm{unamb} = 0.99$, $\rmin =500$ and $\rmax=4000$. 
Finally, we set $p_l=0.25$ for \textsc{IoAlergia}.
\end{conference}

\begin{table}[t]
\centering
\begin{conference}
\caption{Results for learning the gridworld example.}
 \label{tab:results_gridworld_1}
\end{conference}
\begin{full}
\caption{Results for learning the first gridworld example.}
 \label{tab:results_gridworld_1}
\end{full}
\begin{tabular}{r|c|c|c}
 & true model & \LstarMdp & \textsc{IoAlergia} \\\hline
\# outputs     & - & $\n{3101959}$ & $\n{3103607}$ \\\hline
\# traces      & - & $\n{391530}$  & $\n{387746}$ \\\hline
time [s]       & - &  $\n{118.377}$ & $\n{21.442}$ \\\hline
\# states      & $\n{35}$ &  $\n{35}$ & $\n{21}$ \\\hline \hline
$\delta_{0.9}$ & - & $\n{0.144173}$ & $\n{0.524063}$ \\\hline \hline
$\mathbb{P}_{\max}(F^{\leq 11} (\mathrm{goal}))$                    & $\n{0.96217534}$ & $\n{0.9651200582879222}$  & $\n{0.230586749545231}$ \\\hline
$\mathbb{P}_{\max}(\lnot \mathrm{G}\ U^{\leq14} (\mathrm{goal}))$   & $\n{0.6499274956800001}$ & $\n{0.6461047439226908}$  & $\n{0.15766325837412196}$\\\hline
$\mathbb{P}_{\max}(\lnot \mathrm{S}\ U^{\leq16} (\mathrm{goal}))$   & $\n{0.6911765746880001}$ &  $\n{0.6768341773434474}$ & $\n{0.1800497272051816}$\\
\end{tabular}
\end{table}

\paragraph{Results.} \cref{tab:results_gridworld_1} shows the measurement results for learning the 
\begin{full} first \end{full} gridworld. 
Our active learning stopped after $1147$ rounds, sampling $\n{391530}$ traces (Row~$2$) with a combined number of outputs
of $\n{3101959}$ (Row $1$).
The bisimilarity distance discounted with $\lambda=0.9$ to the true model is $0.144$ for \LstarMdp{} and $0.524$ 
for \textsc{IoAlergia} (Row $5$); thus it can be assumed that model checking the \LstarMdp{} model produces more accurate results. 
This is indeed true for our three evaluation queries in the last three rows. These model-checking queries ask for the
maximum probability (quantified over all schedulers) of reaching the $\mathrm{goal}$
within a varying number of steps. The first query does not restrict the terrain visited before the $\mathrm{goal}$,
but the second and third require to avoid $\mathrm{G}$ and $\mathrm{S}$, respectively.
The absolute difference to the true values is at most $0.015$ for \LstarMdp{}, but the results for \textsc{IoAlergia} 
differ greatly from the true values. One reason is that the \textsc{IoAlergia} model
with $21$ states is significantly smaller than the minimal true model, while the \LstarMdp{} model has as many states as the true model. 
\textsc{IoAlergia} is faster than \LstarMdp{}, which
applies time-consuming computations during equivalence queries. 
However, the runtime of learning-specific computations is often negligible in practical applications, such as learning of protocol models~\cite{DBLP:conf/icst/TapplerAB17,DBLP:conf/uss/RuiterP15}, 
as the communication with the \SUL{} usually dominates the overall runtime.
Given the smaller bisimilarity distance and the lower difference to the true probabilities computed with \textsc{Prism}, we conclude 
that the \LstarMdp{} model is more accurate.

\begin{full}

\subsection{Second Gridworld}
 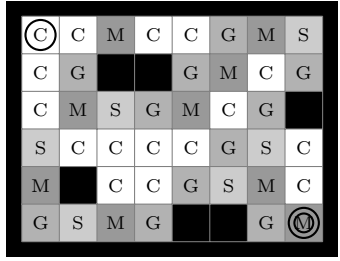
\begin{figure}[t] 
 \begin{center}
\begin{tikzpicture}[scale=1, transform shape, font=\scriptsize, inner sep=0pt,outer sep = 0pt]

\node[fill=black, minimum width = 0.5cm, minimum height = 0.2cm] at ( -1.75,+1.6) {};
\node[fill=black, minimum width = 0.5cm, minimum height = 0.2cm] at ( -1.25,+1.6) {};
\node[fill=black, minimum width = 0.5cm, minimum height = 0.2cm] at ( -0.75,+1.6) {};
\node[fill=black, minimum width = 0.5cm, minimum height = 0.2cm] at ( -0.25,+1.6) {};
\node[fill=black, minimum width = 0.5cm, minimum height = 0.2cm] at (  0.25,+1.6) {};
\node[fill=black, minimum width = 0.5cm, minimum height = 0.2cm] at (  0.75,+1.6) {};
\node[fill=black, minimum width = 0.5cm, minimum height = 0.2cm] at (  1.25,+1.6) {};
\node[fill=black, minimum width = 0.5cm, minimum height = 0.2cm] at (  1.75,+1.6) {};

\node[fill=black, minimum width = 0.5cm, minimum height = 0.2cm] at ( -1.75,-1.6) {};
\node[fill=black, minimum width = 0.5cm, minimum height = 0.2cm] at ( -1.25,-1.6) {};
\node[fill=black, minimum width = 0.5cm, minimum height = 0.2cm] at ( -0.75,-1.6) {};
\node[fill=black, minimum width = 0.5cm, minimum height = 0.2cm] at ( -0.25,-1.6) {};
\node[fill=black, minimum width = 0.5cm, minimum height = 0.2cm] at (  0.25,-1.6) {};
\node[fill=black, minimum width = 0.5cm, minimum height = 0.2cm] at (  0.75,-1.6) {};
\node[fill=black, minimum width = 0.5cm, minimum height = 0.2cm] at (  1.25,-1.6) {};
\node[fill=black, minimum width = 0.5cm, minimum height = 0.2cm] at (  1.75,-1.6) {};

\node[fill=black, minimum width = 0.2cm, minimum height = 0.5cm] at (  2.1,-1.25) {};
\node[fill=black, minimum width = 0.2cm, minimum height = 0.5cm] at (  2.1,-0.75) {};
\node[fill=black, minimum width = 0.2cm, minimum height = 0.5cm] at (  2.1,-0.25) {};
\node[fill=black, minimum width = 0.2cm, minimum height = 0.5cm] at (  2.1, 0.25) {};
\node[fill=black, minimum width = 0.2cm, minimum height = 0.5cm] at (  2.1, 0.75) {};
\node[fill=black, minimum width = 0.2cm, minimum height = 0.5cm] at (  2.1, 1.25) {};

\node[fill=black, minimum width = 0.2cm, minimum height = 0.5cm] at (  -2.1,-1.25) {};
\node[fill=black, minimum width = 0.2cm, minimum height = 0.5cm] at (  -2.1,-0.75) {};
\node[fill=black, minimum width = 0.2cm, minimum height = 0.5cm] at (  -2.1,-0.25) {};
\node[fill=black, minimum width = 0.2cm, minimum height = 0.5cm] at (  -2.1, 0.25) {};
\node[fill=black, minimum width = 0.2cm, minimum height = 0.5cm] at (  -2.1, 0.75) {};
\node[fill=black, minimum width = 0.2cm, minimum height = 0.5cm] at (  -2.1, 1.25) {};

\node[fill=black, minimum width = 0.2cm, minimum height = 0.2cm] at ( -2.1,+1.6) {};
\node[fill=black, minimum width = 0.2cm, minimum height = 0.2cm] at ( 2.1,+1.6) {};
\node[fill=black, minimum width = 0.2cm, minimum height = 0.2cm] at ( -2.1,-1.6) {};
\node[fill=black, minimum width = 0.2cm, minimum height = 0.2cm] at ( 2.1,-1.6) {};

\node[fill=white   , minimum width = 0.5cm, minimum height = 0.5cm] at (-1.75,  1.25) {C};
\node[fill=white   , minimum width = 0.5cm, minimum height = 0.5cm] at (-1.25,  1.25) {C};
\node[fill=black!40, minimum width = 0.5cm, minimum height = 0.5cm] at (-0.75,  1.25) {M};
\node[fill=white   , minimum width = 0.5cm, minimum height = 0.5cm] at (-0.25,  1.25) {C};
\node[fill=white   , minimum width = 0.5cm, minimum height = 0.5cm] at ( 0.25,  1.25) {C};
\node[fill=black!30, minimum width = 0.5cm, minimum height = 0.5cm] at ( 0.75,  1.25) {G};
\node[fill=black!40, minimum width = 0.5cm, minimum height = 0.5cm] at ( 1.25,  1.25) {M};
\node[fill=black!20, minimum width = 0.5cm, minimum height = 0.5cm] at ( 1.75,  1.25) {S};
\node[circle,minimum size=0.4cm,draw,thick] at (-1.75,  1.25) {};

\node[fill=white   , minimum width = 0.5cm, minimum height = 0.5cm] at (-1.75,  0.75) {C};
\node[fill=black!30, minimum width = 0.5cm, minimum height = 0.5cm] at (-1.25,  0.75) {G};
\node[fill=black,    minimum width = 0.5cm, minimum height = 0.5cm] at (-0.75,  0.75) { };
\node[fill=black   , minimum width = 0.5cm, minimum height = 0.5cm] at (-0.25,  0.75) { };
\node[fill=black!30, minimum width = 0.5cm, minimum height = 0.5cm] at ( 0.25,  0.75) {G};
\node[fill=black!40, minimum width = 0.5cm, minimum height = 0.5cm] at ( 0.75,  0.75) {M};
\node[fill=black!0 , minimum width = 0.5cm, minimum height = 0.5cm] at ( 1.25,  0.75) {C};
\node[fill=black!30, minimum width = 0.5cm, minimum height = 0.5cm] at ( 1.75,  0.75) {G};

\node[fill=white   , minimum width = 0.5cm, minimum height = 0.5cm] at (-1.75,  0.25) {C};
\node[fill=black!40, minimum width = 0.5cm, minimum height = 0.5cm] at (-1.25,  0.25) {M};
\node[fill=black!20, minimum width = 0.5cm, minimum height = 0.5cm] at (-0.75,  0.25) {S};
\node[fill=black!30, minimum width = 0.5cm, minimum height = 0.5cm] at (-0.25,  0.25) {G};
\node[fill=black!40, minimum width = 0.5cm, minimum height = 0.5cm] at ( 0.25,  0.25) {M};
\node[fill=black!0 , minimum width = 0.5cm, minimum height = 0.5cm] at ( 0.75,  0.25) {C};
\node[fill=black!30, minimum width = 0.5cm, minimum height = 0.5cm] at ( 1.25,  0.25) {G};
\node[fill=black   , minimum width = 0.5cm, minimum height = 0.5cm] at ( 1.75,  0.25) { };

\node[fill=black!20, minimum width = 0.5cm, minimum height = 0.5cm] at (-1.75,  -0.25) {S};
\node[fill=white   , minimum width = 0.5cm, minimum height = 0.5cm] at (-1.25,  -0.25) {C};
\node[fill=white   , minimum width = 0.5cm, minimum height = 0.5cm] at (-0.75,  -0.25) {C};
\node[fill=white   , minimum width = 0.5cm, minimum height = 0.5cm] at (-0.25,  -0.25) {C};
\node[fill=white   , minimum width = 0.5cm, minimum height = 0.5cm] at ( 0.25,  -0.25) {C};
\node[fill=black!30, minimum width = 0.5cm, minimum height = 0.5cm] at ( 0.75,  -0.25) {G};
\node[fill=black!20, minimum width = 0.5cm, minimum height = 0.5cm] at ( 1.25,  -0.25) {S};
\node[fill=black!0 , minimum width = 0.5cm, minimum height = 0.5cm] at ( 1.75,  -0.25) {C};

\node[fill=black!40, minimum width = 0.5cm, minimum height = 0.5cm] at (-1.75,  -0.75) {M};
\node[fill=black ,   minimum width = 0.5cm, minimum height = 0.5cm] at (-1.25,  -0.75) { };
\node[fill=black!0 , minimum width = 0.5cm, minimum height = 0.5cm] at (-0.75,  -0.75) {C};
\node[fill=white   , minimum width = 0.5cm, minimum height = 0.5cm] at (-0.25,  -0.75) {C};
\node[fill=black!30, minimum width = 0.5cm, minimum height = 0.5cm] at ( 0.25,  -0.75) {G};
\node[fill=black!20, minimum width = 0.5cm, minimum height = 0.5cm] at ( 0.75,  -0.75) {S};
\node[fill=black!40, minimum width = 0.5cm, minimum height = 0.5cm] at ( 1.25,  -0.75) {M};
\node[fill=black!0 , minimum width = 0.5cm, minimum height = 0.5cm] at ( 1.75,  -0.75) {C};

\node[fill=black!30, minimum width = 0.5cm, minimum height = 0.5cm] at (-1.75,  -1.25) {G};
\node[fill=black!20, minimum width = 0.5cm, minimum height = 0.5cm] at (-1.25,  -1.25) {S};
\node[fill=black!40, minimum width = 0.5cm, minimum height = 0.5cm] at (-0.75,  -1.25) {M};
\node[fill=black!30, minimum width = 0.5cm, minimum height = 0.5cm] at (-0.25,  -1.25) {G};
\node[fill=black   , minimum width = 0.5cm, minimum height = 0.5cm] at ( 0.25,  -1.25) { };
\node[fill=black   , minimum width = 0.5cm, minimum height = 0.5cm] at ( 0.75,  -1.25) { };
\node[fill=black!30, minimum width = 0.5cm, minimum height = 0.5cm] at ( 1.25,  -1.25) {G};
\node[fill=black!40, minimum width = 0.5cm, minimum height = 0.5cm] at ( 1.75,  -1.25) {M};

\node[circle,minimum size=0.4cm,draw,thick] at (1.75,-1.25) {};
\node[circle,minimum size=0.25cm,draw,thick] at (1.75,-1.25) {};

\draw[step=0.5cm,color=gray] (-2,-1.5) grid (2.0,1.5);

\end{tikzpicture}
\vspace{-0.2cm}
\caption{The second gridworld}
\label{fig:gridworld_2}
\end{center}
 \end{figure}
 \cref{fig:gridworld_2} shows the second gridworld used in our evaluation. As before, the robot starts in the initial location in the top left corner 
 and can only observe the different terrains. The goal location is in the bottom right corner in this example. The true \gls*{MDP} representing this gridworld has $72$ states. We configured learning as for the first gridworld, but collect more samples per round by setting $\nretest = \nbatch = \n{1000}$. \cref{tab:results_gridworld_2} shows the measurement results for learning.

\begin{table}[t]
\centering

\caption{Results for learning the second gridworld example.}
 \label{tab:results_gridworld_2}
\begin{tabular}{r|c|c|c}
 & true model & \LstarMdp & \textsc{IoAlergia} \\\hline
\# outputs     & - & $\n{3663415}$ & $\n{3665746}$ \\\hline
\# traces      & - & $\n{515950}$  & $\n{457927}$ \\\hline
time [s]       & - &  $\n{166.855}$ & $\n{15.136}$ \\\hline
\# states      & $72$ &  $\n{72}$ & $\n{31}$ \\\hline \hline
$\delta_{0.9}$ & - & $\n{0.112141}$ & $\n{0.576325}$ \\\hline \hline
$\mathbb{P}_{\max}(F^{\leq 14} (\mathrm{goal}))$                    & $\n{0.93480795088125}$ & $\n{0.9404068223872026}$  & $\n{0.020750175172571578}$ \\\hline
$\mathbb{P}_{\max}(F^{\leq 12} (\mathrm{goal}))$                    & $\n{0.67119477}$ & $\n{0.6796094609865879}$  & $\n{0.01723651388939612}$ \\\hline
$\mathbb{P}_{\max}(\lnot \mathrm{M}\ U^{\leq18} (\mathrm{goal}))$   & $\n{0.9742903305241056}$ & $\n{0.9750155740662912}$  & $\n{0.01956042468224018}$ \\\hline
$\mathbb{P}_{\max}(\lnot \mathrm{S}\ U^{\leq20} (\mathrm{goal}))$   & $\n{0.14244219329051103}$ & $\n{0.16442424175837944}$  & $\n{0.024040339974465294}$ \\
\end{tabular}
\end{table} 
We sampled $\n{515950}$ traces with a combined number of outputs of $\n{3663415}$, i.e. the 
combined length of all traces is in a similar range as before, although we sampled more traces in a single round. This is the case because
learning stopped already after $\n{500}$ rounds. We used similar model-checking queries as in the previous example and we can again see that the difference between the true model and the \LstarMdp{} model is much smaller than for \textsc{IoAlergia}. However, compared to the previous example, the absolute difference between \LstarMdp{} and the true model with respect to model-checking has slightly increased. 

\end{full}

\begin{full}
\subsection{Shared Coin Consensus}

This example is a randomised consensus protocol by Aspnes and Herlihy~\cite{DBLP:journals/jal/AspnesH90}.
In particular, we used a model of the protocol distributed with the PRISM model checker~\cite{DBLP:conf/cav/KwiatkowskaNP11}
as a basis for our experiments.\footnote{A thorough discussion of the model and related experiments can be found 
at~\url{http://www.prismmodelchecker.org/casestudies/consensus_prism.php}. Accessed: June 28, 2019}
We generally performed only minor adaptions such as adding action labels for inputs, but 
we also slightly changed the functionality by doing that. 
For the purpose of this evaluation these changes are immaterial, though.

We consider only the configuration with the smallest state space of size $272$ with two processes and constant $K$ set to $2$. 
Basically, the \SUL{} has two inputs $\mathrm{go}_1$ and $\mathrm{go}_2$, one for each process, where executing input $\mathrm{go}_i$ causes process $\mathrm{p}_i$ 
to perform exactly one step. The outputs of the \SUL{} comprise the counter state, the processes' coin states,
as well as additional propositions, e.g., denoting that the protocol finished. Note that we need to make 
the coin states visible, to be able to model the \SUL{} with deterministic \glspl*{MDP}. In this experiment, we basically learn
the state machine underlying the protocol, which we cannot observe directly. 

We set the learning parameters to
$\nbatch = \nretest = \ntest = 50$, $\pstop = 0.25$ and $\prand = 0.25$.
We controlled stopping with $t_\mathrm{unamb} = 0.99$, $\rmin =500$ and $\rmax=4000$. 
Finally, we set $p_l=0.125$ for \textsc{IoAlergia}.

\begin{table}[t]
\centering

\caption{Results for learning the shared coin consensus protocol.}
 \label{tab:results_shared_coin}
\begin{tabular}{r|c|c|c}
 & true & \LstarMdp & \textsc{IoAlergia} \\\hline
\# outputs     & - & $\n{537665}$ & $\n{537885}$ \\\hline
\# traces      & - & $\n{98064}$ & $\n{67208}$ \\\hline
 time [s]    & - & $\n{3188.851}$ & $\n{3.548}$ \\\hline
\# states      & $\n{272}$ & $\n{163}$ & $\n{94}$ \\\hline \hline
$\delta_{0.9}$ & - & $\n{0.114241}$ & $\n{0.448197}$ \\\hline \hline
$\mathbb{P}_{\max}(F(\mathrm{finished} \land \mathrm{p_1\_heads} \land \mathrm{p_2\_tails}))$                        
& $\n{0.10694382182657244}$ & $\n{0}$ & $\n{0}$ \\\hline
$\mathbb{P}_{\max}(F(\mathrm{finished} \land \mathrm{p_1\_tails} \land \mathrm{p_2\_tails}))$                        
& $\n{0.5555528623795738}$ & $\n{0.6764506727525488}$ & $\n{0.6594174686344907}$ \\\hline
$\mathbb{P}_{\max}(\mathrm{counter} \neq 5 \ U\ \mathrm{finished})$                        
& $\n{0.3333324384052837}$ & $\n{0.3899122694547719}$ & $\n{0.5356486744317088}$ \\\hline
$\mathbb{P}_{\max}(\mathrm{counter} \neq 4 \ U\ \mathrm{finished})$                        
& $\n{0.42857002816478273}$ & $\n{0.5191215602621915}$ & $\n{0.6682162346767}$ \\\hline
$\mathbb{P}_{\max}(F^{<40}(\mathrm{finished} \land \mathrm{p_1\_heads} \land \mathrm{p_2\_tails}))$                        
& $\n{0.001708984375}$ & $\n{0}$ & $\n{0}$ \\\hline
$\mathbb{P}_{\max}(F^{<40}(\mathrm{finished} \land \mathrm{p_1\_tails} \land \mathrm{p_2\_tails}))$                        
& $\n{0.266845703125}$ & $\n{0.3065539035780639}$ & $\n{0.2694137327781993}$ \\\hline
$\mathbb{P}_{\max}(\mathrm{counter} \neq 5 \ U^{<40}\ \mathrm{finished})$                        
& $\n{0.244384765625}$ & $\n{0.29278361623970706}$ & $\n{0.4459686405804827}$ \\\hline
$\mathbb{P}_{\max}(\mathrm{counter} \neq 4 \ U^{<40}\ \mathrm{finished})$                        
& $\n{0.263427734375}$ & $\n{0.32459228546948615}$ & $\n{0.5050016333012939}$ \\
\end{tabular}
\end{table}

Table~\ref{tab:results_shared_coin} shows the measurement results for learning a model of the shared coin consensus 
protocol. Compared to the previous example, we need a significantly lower sample size of $\n{98064}$ traces containing $\n{537665}$ outputs, 
although the models 
are much larger. A reason for this is that there is a relatively large number of outputs in this example, such
that states are easier to distinguish from each other. The bisimilarity distance is in a similar range as before for 
\LstarMdp{}, which is again significantly smaller than \textsc{IoAlergia}'s bisimilarity distance. 
The \LstarMdp{} model is again larger than the \textsc{IoAlergia} model, but in this example it is smaller than the true model. This happens
because many states are never reached during learning, as reaching them within a bounded number of steps has a very low probability -- see e.g. the fifth 
model-checking query determining the maximum probability of finishing the protocol within less than $40$ steps, but without consensus, as $\mathrm{p_1}$ chooses $\mathrm{heads}$ and $\mathrm{p_2}$ chooses $\mathrm{tails}$. 
Here, we also see that the model-checking results computed with the \textsc{IoAlergia} model are more accurate in some cases, but
\LstarMdp{}  produces more accurate results overall. The absolute difference from the true values averaged over all model-checking results is about 
about $0.066$ for \LstarMdp{}, approximately half of \textsc{IoAlergia}'s average absolute difference of $0.138$. 
We see an increase in runtime compared to the gridworld examples, which is caused by the larger state space, since the precomputation
time for equivalence testing grows with the state space.

\subsection{Slot machine}
The slot machine originally served as an example in \cite{DBLP:journals/corr/abs-1212-3873,DBLP:journals/ml/MaoCJNLN16}, 
as an adaptation from another model, and  we used it subsequently in \cite{Aichernig2019} as well.
It has three reels, each of them controlled by a separate input. Initially they are blank, but after a reel is spun,
it may either show \emph{apple} or \emph{bar}. A play generally spans $m$ rounds (spins) and after that a prize 
is awarded. It is \emph{Pr10}, if all reels show \emph{bar}, it is \emph{Pr2}, if two reels show \emph{bar}, 
and otherwise it is \emph{Pr0}. The probability of \emph{bar} decreases with decreasing number of remaining rounds. 
Finally, there is also a fourth input \emph{stop}, which with equal probability either stops the game or grants two extra rounds, 
but the remaining rounds cannot exceed $m$.

For our experiments, we configured the slot machine with $m=3$. In this configuration, the true minimal model has $\n{109}$ states. We configured sampling for \textsc{IoAlergia} with $p_l=0.125$ and we set the following parameters for \LstarMdp{}: $\nbatch = \nretest = \ntest = 300$, 
$\pstop = 0.25$, $\prand = 0.25$, $\rmin =500$ and $\rmax=20000$. To demonstrate the influence of the parameter $t_\mathrm{unamb}$, we performed 
experiments with $t_\mathrm{unamb} = 0.9$ and $t_\mathrm{unamb} = 0.99$. 

\begin{table}[t]
\centering
\caption{Results for learning the slot machine with $t_\mathrm{unamb} = 0.9$.}
 \label{tab:results_slot_machine_0_9}
\begin{tabular}{r|c|c|c}
 & true &  \LstarMdp{} & \textsc{IoAlergia}  \\\hline
\# outputs     & - & $\n{4752687}$ & $\n{4752691}$ \\\hline
\# traces      & - & $\n{1567487}$ & $\n{594086}$\\\hline
 time [s]      & - & $\n{3380.961}$&$\n{60.348}$ \\\hline
\# states      & $\n{109}$ &$\n{109}$ &$\n{86}$ \\\hline \hline
$\delta_{0.9}$ & - &$\n{0.163167}$ &$\n{0.298270}$ \\\hline \hline
$\mathbb{P}_{\max}(F(\mathrm{Pr10}))$                            & $\n{0.36370225843259507}$ & $\n{0.37687890748679354}$ & $\n{0.41693891229269564}$ \\\hline
$\mathbb{P}_{\max}(F(\mathrm{Pr2}))$                             & $\n{0.6441884770185449}$ & $\n{0.6697236608027858}$ & $\n{0.6944979474080994}$\\\hline
$\mathbb{P}_{\max}(F(\mathrm{Pr0}))$                             & $\n{1.0}$ & $\n{1.0}$ & $\n{1.0}$\\\hline
$\mathbb{P}_{\max}(X (X (\textit{bar-bar-blank})))$              & $\n{0.16000000000000006}$ & $\n{0.16146174231990834}$ & $\n{0.1638908294146348}$\\\hline
$\mathbb{P}_{\max}(X(X (X (\textit{apple-bar-bar}))))$           & $\n{0.28622222222222227}$ & $\n{0.28649738373581296}$ & $\n{0.2776027193946863}$\\\hline
$\mathbb{P}_{\max}(\lnot (F^{<10}(\mathrm{end})))$               & $\n{0.25}$ & $\n{0.3012954865683398}$ & $\n{0.32832440181268996}$\\\hline
$\mathbb{P}_{\max}(X (X (X (\textit{apple-apple-apple})))\land (F (\mathrm{Pr0})) )$   
                                                                 & $\n{0.02563900225373582}$ & $\n{0.026179011777341537}$ & $\n{0.010698529301420779}$\\
\end{tabular}
\end{table}

\cref{tab:results_slot_machine_0_9} and \cref{tab:results_slot_machine_0_99} show the results for $t_\mathrm{unamb} = 0.9$ and $t_\mathrm{unamb} = 0.99$, respectively. Configured with $t_\mathrm{unamb} = 0.9$, \LstarMdp{} stopped after $\n{2988}$ rounds and it stopped after $\n{12879}$ rounds, 
if configured with $t_\mathrm{unamb} = 0.99$. We see here that learning an accurate model of the slot machine requires a large amount of samples; 
in the case of $t_\mathrm{unamb} = 0.99$, we sampled $\n{7542332}$ traces containing $\n{24290643}$ outputs. These are almost $10$ times as many 
outputs as for the gridworld examples. However, we also see that sampling more traces clearly pays off. The \LstarMdp{} results shown in \cref{tab:results_slot_machine_0_99} are much better than those shown in \cref{tab:results_slot_machine_0_9}. Notably the state space stayed the same way. 
Thus, the model learned with fewer traces presumably includes some incorrect transitions. This is exactly what our stopping heuristic aims to avoid; 
it aims to avoid ambiguous membership of traces in compatibility classes to reduce the uncertainty in creating transitions. 

We also see in both settings that \LstarMdp{} models are more accurate than \textsc{IoAlergia} models, with respect to bisimilarity distance 
and with respect to model-checking results. While the experiment with $t_\mathrm{unamb} = 0.99$ required the most samples among all experiments, 
it also led to the lowest bisimilarity distance. It is also noteworthy that model-checking results for the \LstarMdp{} model are within a low 
range of approximately $0.01$ of the true results. A drawback of  \LstarMdp{} compared to \textsc{IoAlergia} is again the learning runtime, as 
\LstarMdp{} required about $5$ hours while learning with \textsc{IoAlergia}  took only about $8.7$ minutes. However, in a non-simulated environment, 
the sampling time would be much larger than $5$ hours, such that the learning runtime becomes negligible. Consider for instance a scenario where 
sampling a single traces takes $20$ milliseconds. The sampling time of \LstarMdp{} is about $\n{42}$ hours in that scenario, i.e. about $8.4$ 
times the learning runtime. 

\begin{table}[t]
\centering
\caption{Results for learning the slot machine with $t_\mathrm{unamb} = 0.99$.}
 \label{tab:results_slot_machine_0_99}
\begin{tabular}{r|c|c|c}
 & true &  \LstarMdp{} & \textsc{IoAlergia}  \\\hline
\# outputs     & - & $\n{24290643}$ & $\n{24282985}$ \\\hline
\# traces      & - & $\n{7542332}$ & $\n{3036332}$\\\hline
 time [s]      & - & $\n{18047.961}$&$\n{518.852}$ \\\hline
\# states      & $\n{109}$ &$\n{109}$ &$\n{97}$ \\\hline \hline
$\delta_{0.9}$ & - &$\n{0.048562}$ &$\n{0.251811}$ \\\hline \hline
$\mathbb{P}_{\max}(F(\mathrm{Pr10}))$                            & $\n{0.36370225843259507}$ & $\n{0.37221646080937276}$ & $\n{0.39910173705241514}$ \\\hline
$\mathbb{P}_{\max}(F(\mathrm{Pr2}))$                             & $\n{0.6441884770185449}$ & $\n{0.6552158270947213}$ & $\n{0.699695823269987}$\\\hline
$\mathbb{P}_{\max}(F(\mathrm{Pr0}))$                             & $\n{1.0}$ & $\n{1.0}$ & $\n{1.0}$\\\hline
$\mathbb{P}_{\max}(X (X (\textit{bar-bar-blank})))$              & $\n{0.16000000000000006}$ & $\n{0.16073824552550947}$ & $\n{0.15972326474987406}$\\\hline
$\mathbb{P}_{\max}(X(X (X (\textit{apple-bar-bar}))))$           & $\n{0.28622222222222227}$ & $\n{0.2865792773645907}$ & $\n{0.28509746711603917}$\\\hline
$\mathbb{P}_{\max}(\lnot (F^{<10}(\mathrm{end})))$               & $\n{0.25}$ & $\n{0.26064997446353955}$ & $\n{0.40000608796148596}$\\\hline
$\mathbb{P}_{\max}(X (X (X (\textit{apple-apple-apple})))\land (F (\mathrm{Pr0})) )$   
                                                                 & $\n{0.02563900225373582}$ & $\n{0.026350661212454343}$ & $\n{0.012760875333368903}$\\
\end{tabular}
\end{table}

\subsection{Discussion \& Threats to Validity}
Our case studies demonstrated that \LstarMdp{} is able to achieve better accuracy than \textsc{IoAlergia}. The bisimilarity distances of \LstarMdp{} models to the true models were generally lower and the model checking results were more accurate. These observations will be investigated in further case studies. It should be noted though that the considered systems have different characteristics. The gridworld has small state-space, but is strongly connected and the different terrains lead to different probabilistic decisions, e.g. if we try to enter \emph{mud} there is a probability of $0.4$ of entering one of the neighbouring tiles, whereas entering \emph{concrete} is generally successful (the probability of entering other tiles instead is $0$). The consensus protocol has a large state space with many different outputs and finishing the protocol takes at least $14$ steps. The slot machines requires states to be distinguished based on subtle differences in probabilities, as the probability of seeing \emph{bar} decreases in each round. 

\LstarMdp{} has several parameters that affect performance and accuracy. We plan to investigate the influence of parameters in further experiments. For the present experiments, we fixed most of the parameters except for $\nretest$, $\ntest$ and $\nbatch$ and we observed that results are robust with respect to these parameters. We, e.g., increased $\nbatch$ from $\n{300}$ for the first gridworld to $\n{1000}$ for the second gridworld. Both settings led to approximately the same results, as learning simply performed fewer rounds with $\nbatch = \n{1000}$. Hence, further experiments will examine if the fixed parameters are indeed appropriately chosen and if guidelines for choosing other parameters can be provided. 

\LstarMdp{} and \textsc{IoAlergia} learn from different traces, thus the trace selection may actually be the main reason for the better accuracy of \LstarMdp{}. We examined if this is the case, by learning \textsc{IoAlergia} models from two types of traces: traces with uniform input selection and traces sampled during learning with \LstarMdp{}. We noticed that models learned from \LstarMdp{} traces altogether led to less accurate results, especially in terms of bisimilarity distance, and therefore we reported only results for models learned from traces with uniformly distributed inputs.  

\end{full}

\begin{conference}
Due to space constraints, we only present the intuitive gridworld experiment. 
The full technical report includes further experiments with a larger
gridworld (72 states), a consensus protocol (272 states) and a slot machine model (109 states)~\cite{lstar_mdp_tech_report}.
They also confirm the favourable accuracy of \LstarMdp{}. 
\vspace{0.6cm}
\end{conference}

\section{Related Work}
\label{sec:rel_work}

In the following, we discuss techniques for learning both model structure and transition probabilities in case of probabilistic systems. There are many learning approaches for models with a given structure, e.g., for learning control strategies~\cite{DBLP:conf/rss/FuT14}. Covering these approaches is beyond the scope of this paper. 

We build upon Angluin's $L^*$~\cite{DBLP:journals/iandc/Angluin87}, thus our work shares 
similarities with other $L^*$-based work like active learning of Mealy machines~\cite{DBLP:conf/fm/ShahbazG09}. 
Interpreting \glspl*{MDP} as functions from test sequences to output distributions is
similar to the interpretation of Mealy machines as functions from input sequences to outputs~\cite{DBLP:conf/sfm/SteffenHM11}.

Volpato and Tretmans presented an $L^*$-based technique for non-deterministic input-output transition systems~\cite{DBLP:journals/eceasst/VolpatoT15}.
They simultaneously learn an over- and an under-approximation of the \SUL{} with respect to the \gls*{ioco} relation~\cite{DBLP:journals/stp/Tretmans96}. 
\begin{full}
Inspired by that, \LstarMdp{} uses completeness queries and we add transitions to a chaos state in case
we have low information. 
\end{full}
\begin{conference}
Inspired by that, we apply completeness queries and we add transitions to a chaos state in case of incomplete information. 
\end{conference}
Beyond that, we consider systems to behave stochastically rather than non-deterministically.
\begin{full}
While \cite{DBLP:journals/eceasst/VolpatoT15} leaves the concrete implementation of queries unspecified, \LstarMdp{}'s implementation closely follows \cref{sec:method_non_exact}.
\end{full}
Early work on \gls*{ioco}-based learning for non-deterministic systems has been presented by Willemse~\cite{DBLP:conf/fmics/Willemse06}.
Khalili and Tacchella~\cite{DBLP:conf/icgi/KhaliliT14} addressed non-determinism by presenting an $L^*$-based algorithm for non-deterministic Mealy machines. 
\begin{full}
Like Volpato and Tretmans~\cite{DBLP:journals/eceasst/VolpatoT15}, they assume to be able to observe all possible outputs in response to input sequences applied
during learning. 
Our implementation does not require this assumption by checking for compatibility, i.e. approximate equivalence, between output distributions. Both these approaches assume a testing context, as we do. 
\end{full}

\begin{full}
Most sampling-based learning algorithms for stochastic systems are passive, i.e. they assume preexisting samples of system traces.
Their roots can be found in grammar inference techniques like \textsc{Alergia}~\cite{Carrasco_Oncina_1994} and \texttt{rlips}~\cite{DBLP:journals/ita/CarrascoO99}, which identify stochastic regular languages. We share with these techniques that we also apply Hoeffing bounds~\cite{10.2307/2282952} for testing for difference between probability distributions. 
\textsc{Alergia} has been extended to \glspl*{MDP} by Mao et al.~\cite{DBLP:journals/corr/abs-1212-3873,DBLP:journals/ml/MaoCJNLN16}. The extension is called \textsc{IoAlergia} and basically creates a tree-based representation of the sampled system traces and repeatedly merges compatible nodes to create an automaton. Finally, transition probabilities
are estimated from observed output frequencies. Like \LstarMdp{}, \textsc{IoAlergia} converges in 
the limit, but showed worse accuracy in \cref{sec:eval}. It was adapted to an active setting by Chen 
and Nielsen~\cite{DBLP:conf/icmla/ChenN12}. They proposed to generate new samples to reduce 
uncertainty in the data. In contrast to this, we base our sampling not only on the data collected so 
far (refine queries), but also on the current observation table and the derived hypothesis 
\glspl*{MDP} (refine \& equivalence queries), i.e. we take information about the \SUL{}'s structure 
into account. In previous work, we presented a different approach to apply \textsc{IoAlergia} in an 
active setting which takes reachability objectives into account with the aim of maximising the 
probability of reaching desired events~\cite{Aichernig2019}. 
\end{full}
\begin{conference}
 Most sampling-based learning algorithms for stochastic systems are passive.
Notable early works are \textsc{Alergia}~\cite{Carrasco_Oncina_1994} and \texttt{rlips}~\cite{DBLP:journals/ita/CarrascoO99}, which identify stochastic regular languages. Both also apply Hoeffing bounds for testing for difference between probability distributions. 
We compare \LstarMdp{} to \textsc{IoAlergia}, an extension of \textsc{Alergia} by Mao et al.~\cite{DBLP:journals/corr/abs-1212-3873,DBLP:journals/ml/MaoCJNLN16}. It basically creates a tree-based representation of given system traces and repeatedly merges compatible nodes, creating an automaton. Normalised observed output frequencies estimate transition probabilities. \textsc{IoAlergia} also converges in the limit. 
Chen and Nielsen applied it in an active setting~\cite{DBLP:conf/icmla/ChenN12}, by sampling new 
traces to reduce uncertainty in the data. In contrast to this, we base our sampling not only on data 
collected so far (refine queries), but also on observation tables and derived hypothesis 
\glspl*{MDP} (refine \& equivalence queries), taking information about the \SUL{}'s structure into 
account. In previous work, we presented a different approach to active learning via 
\textsc{IoAlergia} which takes reachability objectives into account with the aim at maximising the 
probability of reaching desired events~\cite{Aichernig2019}. 
\end{conference}

\begin{full}
$L^*$-based learning for probabilistic systems has also been presented by Feng et al.~\cite{DBLP:conf/atva/FengHKP11}. They learn assumptions in the form of probabilistic finite automata for compositional verification of probabilistic systems. Their learning algorithm requires queries returning exact probabilities, hence it is not directly applicable in a sampling-based setting. The learning algorithm shares similarities with an $L^*$-based algorithm for learning multiplicity automata~\cite{DBLP:journals/siamcomp/BergadanoV96}, a generalisation of deterministic automata. Further query-based learning in a probabilistic setting has been described by Tzeng~\cite{DBLP:journals/ml/Tzeng92}. He presented a query-based algorithm for learning probabilistic automata and described an adaptation of Angluin's $L^*$ for learning Markov chains. In contrast to our exact learning algorithm \LstarMdpE{}, which relies on output distribution queries, Tzeng's algorithm for Markov chains queries the generating probabilities of strings.
Castro and Gavald{\`a} review passive learning techniques for probabilistic automata with a focus on convergence guarantees and present them in a query framework~\cite{Castro2016}. Unlike \glspl*{MDP}, the learned automata cannot be controlled by inputs. 
\end{full}

\begin{conference}
 Feng et al.~\cite{DBLP:conf/atva/FengHKP11} learn assumptions for compositional verification in the form of probabilistic finite automata with an $L^*$-style method. Their method requires queries returning exact probabilities, hence it is not applicable in a sampling-based setting. It shares similarities with an $L^*$-based algorithm for learning multiplicity automata~\cite{DBLP:journals/siamcomp/BergadanoV96}, a generalisation of deterministic automata. Further query-based learning in a probabilistic setting has been described by Tzeng~\cite{DBLP:journals/ml/Tzeng92}. He presented a query-based algorithm for learning probabilistic automata and an adaptation of Angluin's $L^*$ for learning Markov chains. Castro and Gavald{\`a} review passive learning techniques for probabilistic automata with a focus on convergence guarantees and present them in a query framework~\cite{Castro2016}. Unlike \glspl*{MDP}, the learned automata cannot be controlled by inputs. 
 
\end{conference}

\section{Conclusion}
\label{sec:concl}

We presented $L^*$-based learning of \glspl*{MDP}. For our exact learning algorithm \LstarMdpE{}, we assumed an ideal setting  that allows to query information about the \SUL{}  with exact precision. 
Subsequently, we relaxed our assumptions, by approximating exact queries through sampling \SUL{} traces via directed testing. 
These traces serve to infer the structure of hypothesis \glspl*{MDP}, to estimate transition probabilities
and to check for equivalence between \SUL{} and learned hypotheses. 
The resulting sampling-based \LstarMdp{} iteratively learns approximate \glspl*{MDP} which converge to the correct \gls*{MDP} in the large sample limit. We implemented \LstarMdp{} and compared it to \textsc{IoAlergia}~\cite{DBLP:journals/ml/MaoCJNLN16}, a state-of-the-art passive learning algorithm for \glspl*{MDP}. The evaluation showed that \LstarMdp{} is able to produce more accurate models. To the best of our knowledge, \LstarMdp{} is the first $L^*$-based algorithm for \glspl*{MDP} that can be implemented via testing. \begin{conference}                                                                                                                                                                                                                                                        Further details regarding the implementation, convergence proofs and extended experiments can be found in the technical report~\cite{lstar_mdp_tech_report} and the evaluation material~\cite{eval_material}.                                                                                                                                                                                                                                                    \end{conference}
\begin{full}
 Experimental results and the implementation can be found in the evaluation material~\cite{eval_material}.
\end{full}

The evaluation showed promising results, therefore we believe that our technique can greatly aid the black-box analysis of reactive systems such as communication protocols. While deterministic active automata learning has successfully been applied in this area~\cite{DBLP:conf/cav/Fiterau-Brostean16,DBLP:conf/icst/TapplerAB17}, networked environments are prone to be affected by uncertain behaviour that can be captured by \glspl*{MDP}. \LstarMdp{} converges in the limit, therefore a potential direction for future work is an analysis with respect to \gls*{PAC} learnability~\cite{DBLP:journals/cacm/Valiant84,Castro2016}. 
A challenge towards this goal will be the identification of a distance measure suited to verification~\cite{DBLP:journals/ml/MaoCJNLN16}.
Furthermore, \LstarMdp{} provides room for experimentation, e.g. different testing techniques could be applied in equivalence queries. 

\subsubsection*{Acknowledgment.}
The work of B.\,Aichernig, M. Eichlseder and M.\,Tappler has been carried out as part of the TU Graz LEAD project ``Dependable Internet of Things in Adverse Environments''.
The work of K.\,Larsen and G.\,Bacci has been supported by the Advanced ERC Grant nr. 867096 (LASSO).

%
%
%
%
%
\bibliographystyle{splncs04}
\bibliography{references}

\end{document}